\theoremstyle{plain}
\newtheorem{theorem}{Theorem}[section]
\newtheorem{lemma}[theorem]{Lemma}
\theoremstyle{definition}
\newtheorem{condition}[theorem]{Condition}
\theoremstyle{remark}
\newtheorem{remark}[theorem]{Remark}
\newcommand{\err}[1]{\scriptsize (#1)}
\theoremstyle{definition}
\newtheorem{example}{Example}
\newtheorem{thm}{Theorem}[section]
\theoremstyle{plain} \newcommand{\thistheoremname}{}
\newtheorem{genericthm}[thm]{\thistheoremname}
\newenvironment{namedthm}[1]
  {\renewcommand{\thistheoremname}{#1}\begin{genericthm}}
  {\end{genericthm}}
\newcommand{\shift}{\texttt{shift}}
\newcommand{\cov}{X}
\newcommand{\outcome}{Y|X}
\newcommand{\E}{E}
\newcommand{\CL}{Z}
\newcommand{\ind}[1]{\mathbbm{1}\{#1\}}
\renewcommand{\Pr}{P}
\newcommand{\binned}{\text{bin}}
\newcommand{\num}{\text{num}}
\newcommand{\Y}{\text{Y}}
\newcommand{\X}{\text{X}}
\newcommand{\mcee}{\text{McEE}}
\DeclareMathOperator*{\argmin}{arg\,min}
\setlist{left=0pt, itemsep=0pt, topsep=0pt, parsep=0pt}
\newcolumntype{H}{>{\setbox0=\hbox\bgroup}c<{\egroup}@{}}
\icmltitlerunning{A Hierarchical Framework for Diagnosing Heterogeneous Performance Drift}
\begin{document}

\twocolumn[

\icmltitle{
``Who experiences large model decay and why?''\\A Hierarchical Framework for Diagnosing Heterogeneous Performance Drift
}

\begin{icmlauthorlist}
\icmlauthor{Harvineet Singh}{yyy}
\icmlauthor{Fan Xia}{yyy}
\icmlauthor{Alexej Gossmann}{comp}
\icmlauthor{Andrew Chuang}{yyy}
\icmlauthor{Julian C. Hong}{yyy}
\icmlauthor{Jean Feng}{yyy}
\end{icmlauthorlist}

\icmlaffiliation{yyy}{University of California, San Francisco, USA}
\icmlaffiliation{comp}{Independent researcher}

\icmlcorrespondingauthor{Jean Feng}{jean.feng@ucsf.edu}

\icmlkeywords{Distribution shift, algorithmic fairness, interpretability, explanation, decomposition, mediation analysis, hypothesis tests}

\vskip 0.3in
]

\printAffiliationsAndNotice{}

\begin{abstract}
  Machine learning (ML) models frequently experience performance degradation when deployed in new contexts. Such degradation is rarely uniform: some subgroups may suffer large performance decay while others may not.
  Understanding where and how large differences in performance arise is critical for designing \textit{targeted} corrective actions that mitigate decay for the most affected subgroups while minimizing any unintended effects.
  Current approaches do not provide such detailed insight, as they either (i) explain how \textit{average} performance shifts arise or (ii) identify adversely affected subgroups without insight into how this occurred.
  To this end, we introduce a \textbf{S}ubgroup-scanning \textbf{H}ierarchical \textbf{I}nference \textbf{F}ramework for performance drif\textbf{T} (SHIFT).
  SHIFT first asks ``Is there any subgroup with unacceptably large performance decay due to covariate/outcome shifts?'' (\textit{Where?}) and, if so, dives deeper to ask ``Can we explain this using more detailed variable(subset)-specific shifts?'' (\textit{How?}).
  In real-world experiments, we find that SHIFT identifies interpretable subgroups affected by performance decay, and suggests targeted actions that effectively mitigate the decay.\footnote{Code is available at \url{http://github.com/jjfeng/shift}.}
\end{abstract}

\section{Introduction}
ML algorithms are known to degrade in performance when applied in different contexts, which has led to extensive work on explaining how differences in an ML algorithm's \textit{average} performance arise \citep{Cai2023-ov, zhang2023why}.
However, \textit{performance differences are rarely uniform} in practice: some subgroups may experience severe performance degradation while others may experience very negligible differences, if at all \citep{yang2023subpopulation}.
Understanding the subgroups where shifts are most pronounced and providing subgroup-specific explanations is critical from the perspectives of algorithmic fairness \citep{Mitchell2021-wq} and backwards compatibility \citep{Srivastava2020-wu}. Subgroup-level explanations can also help model developers design \textit{targeted} corrective actions that only modify the algorithm's behavior in the most affected subgroups and limit any other unintended effects (``Don't fix what ain't broke'') \citep{Globus-Harris2022-py, Suriyakumar2023-zr}.

For instance, suppose an ML algorithm for predicting unplanned readmission achieves overall accuracy of 85\% in hospital A and 83\% in hospital B.
While the change in overall accuracy may not be clinically significant, the change within some subgroup may be sufficiently large to be deemed harmful.
If so, it is natural to ask how this heterogeneity in performance decay arose: was it due to a change in how certain diseases are recorded, which medications are prescribed for certain patients, or something else altogether?
If we know the affected subgroup and why, we can specifically address the root cause, such as by updating data pre-processing and/or the algorithm within the subgroup.

As such, our goal is to simultaneously understand \textit{where} an ML algorithm performs substantially worse and \textit{how} it arose.
Numerous methods have been developed to find subgroups where an ML algorithm performs poorly \citep{deon2021spotlight,eyuboglu2022domino, Liu2023-gs, Subbaswamy2024-wx}, which can in principle be extended to identify subgroups with large model decay.
Answering ``how'' is more tricky.
We can obtain an approximate high-level answer by decomposing the average performance drop within an identified subgroup into the contribution from a shift in the marginal distribution of the input features $X$ (covariate shift) versus a shift in the conditional distribution of the target $Y|X$ (outcome shift) \citep{Quinonero-Candela2009-yy, Cai2023-ov}.

However, this is only a partial solution.
For one, it misses situations where the subgroup of individuals experiencing severe covariate shifts is not the same as the one experiencing severe outcome shifts, and each subgroup may require different corrective actions.
More importantly, we often want to know precisely which subset of input variables were involved, as many real-world shifts involve only a few variables (i.e. sparse) and can be fixed in a targeted manner \citep{Castro2020-ee, finlayson2021clinician}.
Existing methods are currently insufficient, as they rely on assumptions that often do not hold in practice, e.g. the true causal graph is known \citep{zhang2023why, quintasmartinez2024multiplyrobust}, the data follows simple parametric models \citep{baron1986mediator}, or unrealistically large datasets \citep{Singh2024-np}.

To overcome these limitations, we present a nonparametric \textbf{S}ubgroup-scanning \textbf{H}ierarchical \textbf{I}nference \textbf{F}ramework for performance drif\textbf{T} (SHIFT) (Fig~\ref{fig:overview}).
Whereas prior works have approached drift diagnosis primarily through the lens of estimation, SHIFT approaches this through hypothesis testing.
The advantage is that hypothesis tests answer simple yes/no questions, which is often more feasible in settings with limited data; in fact, we conduct \textit{omnibus} tests, which require even less data as they do not need to identify the entire subgroup that is adversely affected.
Furthermore, hypothesis tests allow us to check the very assumptions that other works have take on face value.
The first stage of SHIFT performs a high-level analysis: decomposing distribution shift into an ``aggregate'' covariate shift with respect to all of $X$ and an ``aggregate'' outcome shift with respect to all of $X$, SHIFT tests if either have led to unacceptably worse performance in any meaningfully large subgroup (\textit{Where?}).
If so, the second stage drills down to test if this can be adequately explained by a shift solely with respect to a sparse subset of variables in $X$ (\textit{How?}).
The major contributions of this work are:
\begin{itemize}
    \item Introduction of a novel hierarchical hypothesis testing framework that detects subgroups experiencing large performance decay due to aggregate-level covariate/outcome shifts, which are then explained using detailed variable(subset)-specific shifts.
    \item SHIFT does not rely on strong assumptions and is suitable for smaller datasets, making it broadly applicable to real-world scenarios.
    \item Our simulations demonstrate that SHIFT correctly identifies relevant shifts. Real-world experiments show that SHIFT can guide the design of model/data corrections that strictly improve performance.
\end{itemize}

\begin{figure}\centering
    \includegraphics[width=0.9\linewidth]{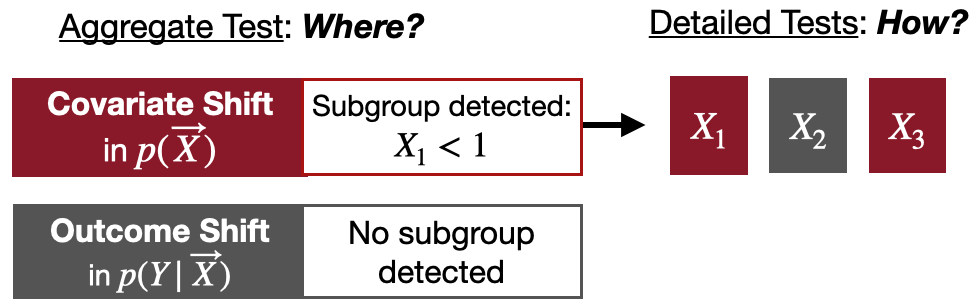}
\caption{
    \textbf{S}ubgroup-scanning \textbf{H}ierarchical \textbf{I}nference \textbf{F}ramework for performance drif\textbf{T} (SHIFT) is a two-stage hypothesis testing procedure that first checks if there is a subgroup with unacceptably large performance decay due to aggregate covariate and outcome shifts with respect to all $X$ variables.
    If so, it checks if this can be explained by detailed variable(subset)-specific shifts.
    Red indicates the shift was flagged for further investigation.
    In this example, covariate shift is flagged because it affected a subgroup and variables $X_1$ and $X_3$ were flagged as potential explanations.
    }
\label{fig:overview}
\end{figure}

\section{Related Work}
We briefly discuss the three most related areas below (summarized in Table~\ref{tab:related}).
See Appendix~\ref{sec:detailsrelatedwork} for more detailed discussion as well as other related areas.

\textbf{Detecting distribution shifts.}
Many methods have been developed to detect \textit{any} shift in marginal/conditional distributions, such as Kolmogorov-Smirnov (KS) \citep{rabanser2019failing}, kernel-based tests \citep{zhang2011kernel}, and Maximum Mean Discrepancy (MMD) \citep{gretton2012twosample, luedtke2018omnibus}.
More recent works focus on detecting only those that are harmful to \textit{overall} performance \citep{podkopaev2022tracking,panda2024interpretable}.
No prior methods have been developed to specifically detect distribution shifts that lead to disproportionate harm in a (sufficiently large) subgroup, which SHIFT aims to address.

\textbf{Decomposing model performance.} Various methods have been developed to quantify the contribution of each feature subset to the \textit{average} performance \citep{budhathoki2021distribution,Cai2023-ov,Wu2021-dl,zhang2023why,quintasmartinez2024multiplyrobust}
and, more recently, the variability of performance changes \citep{Singh2024-np}.
Mathematically, these methods rely on techniques similar to those used in mediation analysis for decomposing the average treatment effect into indirect and direct effects \citep{baron1986mediator} and variable importance (VI) methods for explaining the variability of the conditional average treatment effect (CATE) \citep{hines2023variable}, respectively.
Most of these methods either assume a parametric model or knowledge of the causal graph between individual variables.
While VI methods that focus on decomposing variability have much weaker assumptions \citep{hines2023variable, Singh2024-np}, they generally require large datasets and their confidence intervals (CIs) cannot be easily inverted to produce valid hypothesis tests (the influence function is degenerate because the estimand is at the boundary of the parameter space under the null, leading to inflated Type I error rates) \citep{hudson2023nonparametric}.
Through carefully framed hypothesis tests, SHIFT provides valid statistical inference without parametric assumptions or knowledge of a detailed causal graph.

\begin{table*}\caption{\textbf{S}ubgroup-scanning \textbf{H}ierarchical \textbf{I}nference \textbf{F}ramework for performance drif\textbf{T} (SHIFT) compared to prior work}
    \label{tab:related}
    {\footnotesize
    \centering
\begin{tabular}{p{5.2cm}H p{2.2cm}Hp{2.2cm}p{2cm}p{3.3cm}}
    \toprule
        Category (Example methods) & Task & Detects subgroup with large decay? & Flags only performance changes? & Valid hypothesis test? & Avoids detailed causal graph? & Detailed explanations for outcome/covariate shifts? \\
        \midrule
        Detect any shift \citep{rabanser2019failing, zhang2011kernel} & Tests for $A\perp Y|X$ & No & No & Yes & Yes & Outcome only \\
        Detect loss shift \citep{podkopaev2022tracking} & Tests for performance decay on average & No & Yes & Yes & Yes & No \\
        \midrule
        Decompose average perf decay \citep{zhang2023why, Cai2023-ov, quintasmartinez2024multiplyrobust} & Quantify change due to $P_{Y|X}$ and $P_X$ shift & No & Yes & Some methods & No & Covariate only \\
Decompose shift variability \citep{Singh2024-np} & Quantify change due to $P_{Y|X}$ and $P_X$ shift & No & Yes & No & Yes & Outcome \& Covariate\\
        \midrule
        Decompose ATE \citep{baron1986mediator} & Tests for mediation $X_s\perp Y|A,X_{-s}$ & No & Yes & Parametric only & No & Covariate only \\
        Decompose CATE variability \citep{hines2023variable} & Quantify importance of subset $X_s$ to heterogeneous treatment effect $E[Y^1-Y^0|X]$ & No & Yes & No & Yes & Outcome only \\
        \midrule
        Subgroup discovery \citep{eyuboglu2022domino, deon2021spotlight} &  & No & Some & Some & Yes & No \\
        \midrule
        \textbf{SHIFT} \textit{(Proposed)}  & Yes & Yes & Yes & Yes & Yes & Outcome \& Covariate \\
        \bottomrule
    \end{tabular}}
\end{table*}

\textbf{Discovering subgroups.} Methods have been developed to identify subgroups with low performance within a single distribution \citep{eyuboglu2022domino,deon2021spotlight,ali2022lifecycle,feng2023model,dong2024drift,rauba2024contextaware, Subbaswamy2024-wx}
and subgroups with large CATE \citep{athey2019grf}.
However, most  methods only provide point estimates and not statistical inference (CIs/hypothesis tests).
More importantly, no existing methods can be directly adapted to explain how large performance decay arises across subgroups with respect to variable-specific shifts.

\section{Hierarchical Testing Framework}

Given a set of features $X$ and an outcome $Y$, we want to understand the difference in performance of an algorithm $f$ across source and target domains, denoted by $d=0$ and $d=1$ respectively.
We refer to the joint distribution of $(X,Y)$ in each domain by $p_d$ and its corresponding expectation with $\E_d$.
Performance is quantified by a loss function $\ell:=\ell(y,f(x))\in \mathbb{R}$.
The average loss conditional on $x$ in domain $d$ is denoted $\CL_d(x) := \E_{d}[\ell(Y,f(X))|X=x]$ for $d\in \{0,1\}$.
Hat notation denotes an estimate.

A shift in the joint distribution of $(X,Y)$ can be decomposed into aggregate covariate and outcome shifts, which are defined by the shifts $p_0(x) \Rightarrow p_1(x)$ and $p_0(y|x) \Rightarrow p_1(y|x)$, respectively.
In this way, the shift from source to target can be broken down into a sequence of aggregate-level shifts:
\begin{align}
p_0(x) p_0(y|x) \Rightarrow p_1(x) p_0(y|x) \Rightarrow 
p_1(x) p_1(y|x) \label{eq:seq_shift}
\end{align}
and, correspondingly, the average performance change can be decomposed into:
\begin{align}
\E_{1}[\ell] - \E_{0}[\ell] = \underbrace{\E_1[\CL_0] - \E_0[\CL_0]}_{\text{covariate shift}} + \underbrace{\E_1[\CL_1 -\CL_0]}_{\text{outcome shift}}
\label{eq:standard_2way}
\end{align}
To generate even more detailed explanations of performance shifts, we will consider sparse shifts solely with respect to variable subsets $X_s$ and use $p_s(x)$ and $p_s(y|x)$ to denote $X_s$-specific covariate and outcome shifts, respectively.
We will present their exact definitions later.

SHIFT is a hierarchical diagnostic framework that does a more detailed analysis of performance drift compared to the standard two-way decomposition in \eqref{eq:standard_2way} by accounting for heterogeneity of performance shifts.
At the first level, SHIFT checks if the aggregate covariate and outcome shifts lead to subgroups with large performance decay.
If so, SHIFT searches for a more detailed explanation among candidate variable(subset)-specific shifts.

Throughout, SHIFT focuses only on subgroups of individuals and performance shifts that are deemed large enough to be of practical interest, by requiring the domain expert to select a priori the minimum subgroup size $\epsilon > 0$ and minimum shift magnitude $\tau \ge 0$.
This is critical to ensure the practical usability of these methods, as alarms for negligible shifts lead to alarm fatigue \citep{Cvach2012-pc, Feng2025-rq}. The set
$\mathcal{A}_\epsilon$ denotes all subgroups whose prevalence in the source and target domains exceed $\epsilon > 0$.

The following two sections (Sec~\ref{sec:aggregate} and \ref{sec:detailed}) introduce the aggregate and detailed hypothesis tests in SHIFT and Section~\ref{sec:estimation} describes the actual testing procedures.

\subsection{Aggregate tests: \textit{Where?}}
\label{sec:aggregate}
SHIFT first tests if there exists a subgroup with large performance decay due to an aggregate covariate shift and, likewise, a subgroup impacted by an aggregate outcome shift.
The impacts of these shifts within a subgroup $A \in \mathcal{A}_{\epsilon}$ is quantified using a similar decomposition as \eqref{eq:standard_2way}, i.e.
\begin{align*}
&\E_{1}[\ell|X\in A] - \E_{0}[\ell|X\in A] \\
&= \underbrace{\E_{1}[\CL_1 -\CL_0|X\in A]}_{\text{outcome shift}} + \underbrace{\E_{1}[\CL_0|X\in A]-\E_{0}[\CL_0|X\in A]}_{\text{covariate shift}}.
\end{align*}
This leads to tests for the following null hypotheses:
\begin{tcolorbox}[boxsep=0pt, left=3pt, right=3pt, top=2pt]
\begin{namedthm}{Hypothesis}[Agg covariate shift]
\label{hypo:agg_covariate}
$H_0^{\cov}$: For all subgroups $A \in \mathcal{A}_\epsilon$, the performance drift in $A$ due to the aggregate covariate shift is no larger than tolerance $\tau \ge 0$, i.e.
$\E_{1}[\CL_0(X)|X\in A] - \E_{0}[\CL_0(X)|X\in A]\leq \tau.$
\end{namedthm}
\end{tcolorbox}
\begin{tcolorbox}[boxsep=0pt, left=3pt, right=3pt, top=2pt]
\begin{namedthm}{Hypothesis}[Agg outcome shift]
\label{hypo:agg_outcome}
$H_0^{\outcome}$: For all subgroups $A \in \mathcal{A}_\epsilon$, the performance drift in $A$ due to the aggregate outcome shift is no larger than tolerance $\tau \ge 0$, i.e.
$\E_{1}[\CL_1(X) - \CL_0(X) |X\in A]\leq \tau.$
\end{namedthm}
\end{tcolorbox}
For each shift mechanism, rejection of the null means that there is a subgroup of concern and further investigation is warranted, 
thereby triggering a second stage of testing.
Before diving into the second stage, we discuss connections between these aggregate tests and the existing literature.

\textbf{Connection to MMD.}
The proposed tests assess for distributional differences by comparing the maximum difference in the mean loss along the shift sequence in \eqref{eq:seq_shift}.
This shares similarities to MMD, which also measures the distance between two distributions in terms of the maximum difference in expected value over some function class (often referred to as the ``critic'') \citep{gretton2012twosample}.
To see the connection more formally, we rewrite the above tests in terms of binary detectors where $h_A(X) = \mathbbm{1}\{X \in A\}$ for subgroup $A$.
Define the critic function class to be the set of ``filtered'' loss functions $\left \{(x,y) \mapsto h_A(X) \ell(f(x),y): A\in \mathcal{A}_{\epsilon}\right\}$.
For the first two distributions in \eqref{eq:seq_shift}, MMD defines their distance as the maximum average difference of the filtered loss, i.e.
\begin{align*}
\sup_{A \in \mathcal{A}_{\epsilon}}
\E_{10}[\ell(X,Y)h_A(X)]
-\E_{00}[\ell(X,Y)h_A(X)],
\end{align*}
where $\E_{d_1, d_2}$ indicates the expectation with respect to distribution $p_{d_1}(X) p_{d_2}(Y|X)$.
In contrast, the aggregate covariate shift test can be viewed as measuring the maximum average difference of the \textit{conditional} loss, i.e.
\begin{align*}
\sup_{A \in \mathcal{A}_{\epsilon}}
\frac{\E_{10}[\ell(X,Y)h_A(X)]}{\E_{10}[h_A(X)]}
- \frac{\E_{00}[\ell(X,Y)h_A(X)]}{\E_{00}[h_A(X)]}.
\end{align*}
A similar analogy holds for the aggregate outcome shift, which compares the last two distributions in \eqref{eq:seq_shift}.
Thus, SHIFT can be viewed as testing the Maximum \textit{conditional}-Mean Discrepancy (McMD) rather than the MMD.
Like MMD, McMD is zero when the compared distributions are equal.
Unlike MMD, McMD can be large even when the mean difference is large in only a small subgroup, reflecting its priority placed on algorithmic fairness.

\textbf{Connection to mediation analysis.}
Prior works have highlighted that the decomposition of \textit{average} performance change into covariate and outcome shifts parallels the decomposition of the \textit{average} treatment effect into indirect and direct effects, which is commonly analyzed in causal mediation analysis \citep{Castro2020-ee, Singh2024-np}.
As this work decomposes \textit{subgroup-specific} performance changes, it parallels recent efforts in the nascent but growing field on analyzing the heterogeneity of causal effect decompositions \citep{Loh2020-xr, Rubinstein2023-wk}.
The omnibus tests developed in this work may thus be useful for testing heterogeneous indirect/direct effects, an area that has not been addressed thus far.
We discuss these connections further in Appendix~\ref{appendix:mediation}.

\subsection{Detailed tests: \textit{How?}}\label{sec:detailed}

For each shift mechanism, rejection of the first-stage test implies that there is a subgroup for which performance change was large.
The next step is to find a detailed explanation, by identifying the variables most likely to be responsible.

SHIFT finds explanations by searching over a suite of candidate shifts with respect to individual variables or variable subsets.
Because the true causal graph is not typically known in practice, the set of all possible variable(subset)-specific shifts is exponentially large and a comprehensive search over all such shifts is computationally intractable.
As such, SHIFT considers a restricted set of detailed candidate shifts as potential explanations.
In this work, given a variable subset $X_s$, we consider the following:
\begin{itemize}
    \item \textbf{Outcome shift}: We consider the candidate
$p_{s}(y|x):= p_1(y|x_s, \mu_0(x))$, where $\mu_0(x)=p_0(y=1|x)$ is the outcome probability at the source.
This is similar to shifts considered in model recalibration \citep{Steyerberg2009-ze}, where the shift is defined relative to the outcome's original conditional probability in the source domain.
    \item \textbf{Covariate shift}: We consider the candidate  $p_s(x) := p_1(x_s)p_0(x_{-s}|x_s)$.
    Such a shift may occur, for instance, if $X_s$ precedes $X_{-s}$ causally and is commonly considered in prior works \citep{Wu2021-dl, zhang2023why, Singh2024-np}.
\end{itemize}
Other candidate shifts are certainly possible (see Sec~\ref{app:alternatives}) and we leave them to future work.
Critically, unlike prior works that offer variable-level explanations of performance decay assuming these candidate shifts are actually true \citep{Wu2021-dl, zhang2023why}, SHIFT does \textit{not} assume that these candidate shifts are correctly specified because everything is conducted through the lens of hypothesis testing.
Instead, SHIFT tests whether a candidate offers a good explanation.

Given candidate shifts, we now quantify how well they explain the heterogeneous performance changes in the data.
We say that an aggregate covariate shift is well-explained by an $X_s$-specific covariate shift if the performance change induced by the former is well-approximated by the latter across all subgroups $A$, i.e. for all $A\in \mathcal{A}_{\epsilon}$,
\begin{align*}
\begin{split}
&\E_{1}[\CL_0|X\in A] - \E_0[\CL_{0} |X\in A]\\
&\approx \E_{s}[\CL_0|X\in A] - \E_0[\CL_{0} |X\in A],
\end{split}
\end{align*}
where $\E_s(X)$ is with respect to an $X_s$-specific covariate shift.
Similarly, an aggregate outcome shift is well-explained by an $X_s$-specific outcome shift if for all $A\in \mathcal{A}_{\epsilon}$,
\begin{align}
\hspace{-0.5cm}
    \E_{1}[\CL_1 - \CL_{0} |X\in A] \approx \E_{1}[\CL_s - \CL_{0} |X\in A], 
\label{eq:outcome_similar_detailed}
\end{align}
where $\CL_s(X)$ is the expected loss under the candidate shift.
This is formalized in detailed tests of $X_s$-specific shifts with the following null hypotheses:
\begin{tcolorbox}[boxsep=0pt, left=3pt, right=3pt, top=2pt]
\begin{namedthm}{Hypothesis}[$X_s$-specific covariate shift]
\label{hypo:detailed_covariate}
$H_{0,s}^{\cov}$: For all subgroups $A \in \mathcal{A}_\epsilon$ and tolerance $\tau$, the candidate $X_s$-specific covariate shift explains the performance change, i.e., 
$\E_{1}[\CL_0(X)|X\in A] - \E_{s}[\CL_0(X)|X\in A]\leq \tau.$
\end{namedthm}
\end{tcolorbox}
\begin{tcolorbox}[boxsep=0pt, left=3pt, right=3pt, top=2pt]
\begin{namedthm}{Hypothesis}[$X_s$-specific outcome shift]
\label{hypo:detailed_outcome}
    $H_{0,s}^{\outcome}$: For all subgroups $A \in \mathcal{A}_\epsilon$ and tolerance $\tau$, the candidate $X_s$-specific outcome shift explains the performance change, i.e.,
    $\E_{1}[\CL_1(X) - \CL_{s}(X) |X\in A]\leq \tau.$
\end{namedthm}
\end{tcolorbox}
If we fail to reject the null for an $X_s$-specific covariate or outcome shift, SHIFT flags it as potentially important.
Then for some prespecified $\alpha > 0$, the potentially important variable subsets for covariate and outcome shifts are
\begin{align}
\hat{\mathcal{S}}_n^{\shift} = \left\{s:
p\text{-value for } H_{0,s}^\shift >  \alpha
\right\}
\end{align}
for $\shift=\outcome$ and $\shift= \cov$.
A human expert can then verify which variables in $\hat{\mathcal{S}}_n^{\shift}$ are the true root cause(s) and design targeted corrective actions.

Comparing the detailed and aggregate-level tests, one may notice that they  have nearly the same mathematical structure and yet are interpreted differently to answer differing questions (\textit{where?} versus \textit{how?}).
To see how this is possible, note that the tests could have been interpretted in the same way: aggregate-level tests check whether aggregate shifts are well-approximated by the zero function, i.e. whether $\E_{1}[\CL_1 - \CL_{0}|X\in A] \approx 0$ and $\E_{1}[\CL_0|X\in A] - \E_0[\CL_{0} |X\in A] \approx 0,$
while the detailed tests check if aggregate shifts are well-approximated by candidate $X_s$-specific shifts.

\begin{remark}[Modified covariate shift tests]
When we have features that are independent of the loss function, covariate shifts in such features may still be flagged which is undesirable. This occurs due to \textit{collider bias} since conditioning on the subgroup $\ind{x\in A}$ induces a correlation between the independent features and the loss function. Section~\ref{app:modifiedcov} gives more details. As a remedy, we filter features that are uncorrelated with the loss function as a data preprocessing step and then run the covariate shift tests as usual.
\end{remark}

\subsection{Visualization of SHIFT}
Results from SHIFT are visualized in a hierarchical plot (Fig~\ref{fig:overview}), where ``red'' means ``flagged'' and ``gray'' means ``not flagged.''
At the aggregate level, the covariate/outcome shift is ``flagged'' if a subgroup was found to have large performance decay due to that shift mechanism (null was rejected).
To interpret aggregate-level test results, we summarize the detected subgroup using rule-based decision sets  \citep{lakkaraju2016decisionsets}, although other ML explainability methods can be used instead.
At the detailed level, we flag variable(subset)-specific covariate/outcome shifts that may offer a potential explanation of the heterogeneous performance shifts (null was not rejected).
``Flag strength'' is one minus the p-value for aggregate-level tests and the p-value for detailed tests.
Note that if none of the candidate sparse shifts are adequate explanations, one may need to explore alternative shift explanations (e.g. less sparse).

\section{Inference Procedure}
\label{sec:inference}

We now describe the inference procedure for tests introduced in the previous section.
We begin with rewriting each hypothesis test in terms of a simple target of inference.
This will illuminate the general approach we would like to take, as well as the technical challenges we will encounter.

To illustrate, note that the aggregate outcome test can be equivalently expressed as testing the null hypothesis
\begin{equation}
\label{eq:mee_out_agg}
\hspace{-0.3cm}
    H_0^{\outcome}: 
    \underbrace{\sup_{A\in \mathcal{A}_{\epsilon}} E_{1}\left[
    (\CL_1(X) - \CL_0(X) - \tau)
    h_A(X)\right]}_{\text{target of inference}}
    \leq 0.
\end{equation}
The target of inference can be interpretted as follows: $h_A$ is scaled by how much the difference in expected loss exceeds tolerance $\tau$, so the target of inference can be interpretted as the \textit{Maximum Expected Exceedence} (MEE) between the last two distributions in the shift sequence in \eqref{eq:seq_shift}.
Similarly, the detailed outcome test can be rewritten as
\begin{align}
\hspace{-0.4cm} 
H_{0,s}^{\outcome}: \sup_{A\in \mathcal{A}_{\epsilon}} \E_1\left [(\CL_{1}(X)-\CL_{s}(X) - \tau)h_A(X) \right] \leq 0.
\label{eq:mee_out_det}
\end{align}
The aggregate and detailed covariate tests can be interpreted similarly, though the scaling term is not as clean:
\begin{align}
    &
H_0^{\cov}: \sup_{A\in \mathcal{A}_{\epsilon}} \E_{0}\left[
    \left(\CL_0(X)(\tilde\pi_A(X) - 1)
    - \tau\right )
    h_A(X)\right] \leq 0
    \label{eq:mee_cov_agg}\\
    &
H_{0,s}^{\cov}: \sup_{h\in \mathcal{A}_{\epsilon}} \E_{0}\left[(\CL_0(X) \left(\tilde\pi_A(X) - \tilde\pi_{s,A}(X)\right) - \tau)h_A(X)\right] \leq 0
    \label{eq:mee_cov_det}
\end{align}
where $\tilde{\pi}_A(x)=\frac{p_1(x)\E_0[h_A(X)]}{p_0(x)\E_1[h_A(X)]}$ and $\tilde \pi_{s,A}(x) = \frac{p_1(x_s) \E_0[h_A(X)]}{p_0(x_s) \E_s[h_A(X)]}$ are scaled density ratios.
Given this rewriting of the MEE, we can now discuss two technical challenges that we can resolve, in part, through \textit{sample splitting}.

First, estimating a supremum over the infinite number of binary detectors $h_A$ is computationally intractable.
Nevertheless, our goal is simply hypothesis testing, not estimation.
We can accomplish this by sample splitting, where one portion of the data is for learning one (or a few) good candidate detector ($\hat{h}_A$) and the remaining data is for testing the expected exceedence for $\hat{h}_A$.
This can be viewed as running a restricted version of the original test, where we only test the MEE with respect to the singleton set $\{\hat{h}_A\}$ rather than all of $\mathcal{A}_\epsilon$.
While this approach may be conservative, it provides statistical guarantees with fewer assumptions and better finite sample behavior.
The remaining question is how we can find good candidate detectors.

Second, the MEE involves unknown outcome models $(Z_d)$ and scaled density ratio models $(\tilde\pi_A$ and $\tilde\pi_{s,A})$, which we collectively refer to as nuisance parameters.
Prior works have shown that plug-in estimators, which use the same data to both train nuisance parameters and estimate targets of inference, are biased.
Following results in double-debiased ML and semiparametric theory \citep{Chernozhukov2018-dg}, we use sample splitting to remove some of this bias.
The question is then how to remove the remaining bias for achieving the desired Type I error control.

\label{sec:estimation}
\begin{figure}
    \centering
\includegraphics[width=0.9\linewidth]{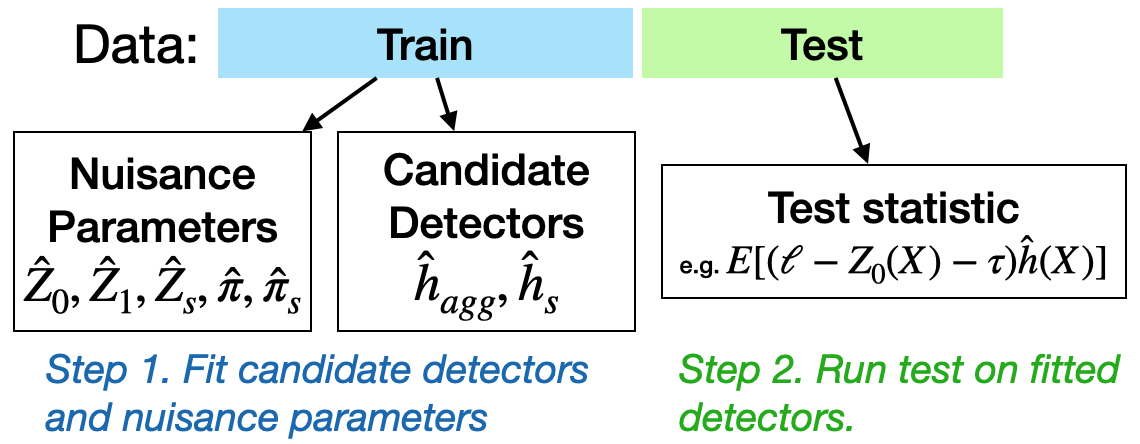}
\caption{Overview of testing procedure}
\label{fig:testing}
\end{figure}

Given the benefits of sample-splitting, the overall testing procedure uses this as the basis: \textit{Step 1} estimates candidate detectors and nuisance parameters on a training partition and \textit{Step 2} uses the remaining data to conduct a restricted test with respect to the fitted models (Figure~\ref{fig:testing}).
For ease of exposition, we describe the procedure for a single sample-split, but it can be easily extended with cross-fitting to improve statistical efficiency \citep{Kennedy2024-kk}.
Here we describe each step broadly and highlight key innovations needed to address the technical challenges.
The detailed testing procedure (including hyperparameter selection) is given in Section~\ref{sec:test_details} of the Appendix.

\textbf{Step 1. Estimate candidate detectors and nuisance parameters using the training partition.}
The nuisance parameters can be estimated using ML following standard recipes \citep{Kennedy2024-kk}.
Estimating candidate detectors for the aggregate and detailed outcome shift tests is also straightforward.
For the aggregate version, the estimand in \eqref{eq:mee_out_agg} is maximized when the conditional mean $\E_1[(\CL_1(X) - \CL_0(X)-\tau)h_A(X)|X]$ is maximized, so the optimal detector is ${h}_A(X) = \mathbbm{1}\{\CL_1(X) - \CL_0(X) - \tau > 0\}$.
Consequently, we can take a plug-in approach to construct a candidate detector, i.e. $\hat{h}_A(X)=\mathbbm{1}\{\hat{Z}_1(X) - \hat{Z}_0(X) - \tau > 0\}$.
A similar approach can be taken for the detailed version.

Estimating candidate detectors for the covariate shift tests is, however, not immediately obvious.
For instance, the MEE in \eqref{eq:mee_cov_agg} cannot be maximized by individually maximizing its conditional mean, because of the shared ratio term $\E_0[h_A(X)]/\E_1[h_A(X)]$.
Instead, we find the optimal detector by solving \textit{the dual for a sequence of optimization problems}.
That is, we can reframe the task as solving
\begin{align}
\label{eq:optcovariate}
\begin{split}
\sup_{A} & \ \E_{0}\left[
    \left(\hat\CL_0(X)(\hat\pi(X) \omega - 1)
    - \tau\right )
    h_A(X)\right] \leq 0\\
    & \text{s.t. } \omega = \E_0[h_A(X)]/\E_1[h_A(X)]
\end{split}
\end{align}
for some $\omega > 0$. Using the method of Lagrange multipliers, the solution must have the form
$
    \hat{h}_{A}^{(\omega, \lambda)}(X) = \mathbbm{1}\left\{
    \left (\hat{Z}_0(X) - \lambda \right) \left(\hat\pi(X) \omega - 1\right)
    \ge 0
    \right\}
$
for some $\lambda \ge 0$.
Thus we can estimate the optimal candidate detector by sweeping over a grid of $\omega$ and $\lambda$ values.
We can estimate detectors for detailed covariate shifts in a similar manner.

\textbf{Step 2. Conduct double-debiased tests on held-out data.}
On the remaining data, we construct asymptotically linear estimators for the MEE with respect to the fitted candidate detector(s), using the approach of one-step correction.\footnote{Appendix~\ref{app:mee} discusses a more  statistically efficient but more complex procedure involving the Maximum conditional Expectation of the Exceedence (McEE) rather than the MEE. We discuss testing of the MEE in the main manuscript for ease of exposition.}
This is relatively straightforward for \eqref{eq:mee_out_agg}, \eqref{eq:mee_cov_agg}, and \eqref{eq:mee_cov_det} by noting the mathematical similarities between MEE and direct/indirect effects in causal mediation analysis.
However, one-step correction for the detailed outcome shift does not follow from standard recipes, which require the target of inference to be pathwise differentiable.
The problem is that \eqref{eq:mee_out_det} involves $Z_s(X)$, which is not pathwise differentiable because its definition
involves indicator functions. Still, we can sidestep this issue by leveraging the binning trick in \citet{Singh2024-np}.
Rather than defining an outcome shift as a function of $\mu_0(x)$, we define a \textit{binned} outcome shift that replaces all occurences of $\mu_0$ with a \textit{binned} version.
Assuming that the set of observations that fall exactly on the bin edges have measure zero, we can show that the MEE with respect to the binned outcome shift is pathwise differentiable, so to allow construction of an asymptotically linear estimator.

\textbf{Theoretical properties.}
Under the assumptions described in Appendix~\ref{app:proof}, we can prove that the estimators for the MEE with respect to fitted detectors are asymptotically linear and their respective tests control the Type I error and have power one, asymptotically.
Consequently, for outcome and covariate shifts $(\shift = \outcome$ and $\shift=\cov)$, if there is a candidate detailed shift with respect to variable subset $s^{*,\shift}$ that corresponds to the true shift, it will be flagged by SHIFT, i.e. $\Pr(s^{*,\shift} \not\in \mathcal{S}_n^{\shift}) \le \alpha$.

\section{Results}

We now validate SHIFT in simulation studies where the ground truth is known and two real-world case studies. For comprehensive validation, we vary the type and degree of shifts, the ML algorithms under study, and the data sizes.
We present a summary of the results here due to space constraints and provide full experiment details in Section~\ref{app:experiment}.

\textbf{SHIFT.}
For all experiments, performance is defined in terms of the 0-1 misclassification loss. We fit ML models (e.g. gradient boosting trees (GBT)) for the nuisance parameters and detectors, with hyperparameters chosen through cross validation. The significance level is set to $\alpha=0.05$.

\textbf{Baseline methods.} 
There is no existing comparator that provides universal testing for all four types of shifts (aggregate/detailed and covariate/outcome) for the exact formulations used in SHIFT.
Given these constraints, different comparators are used for different shift types and, when necessary, adapted to be as close as possible.

For \textit{aggregate} shifts, we compare against Kernel independence tests \texttt{KCI} \citep{zhang2011kernel} and \texttt{MMD} \citep{Gretton2012-yw}.
For \textit{detailed outcome} shifts, we compare against
(a) \texttt{TE-VIM} \citep{hines2023variable} which quantifies VI for explaining conditional average treatment effect,
(b) \texttt{ParamY} which fits a parametric regression model of the outcome $Y$ given domain $D$, features $X$, and interaction terms $DX$ and determines VI based on coefficients of the interaction terms,
(c) \texttt{ParamLoss} which is the same as \texttt{ParamY} except it regresses loss $\ell$,
and (d) \texttt{KCI} \citep{zhang2011kernel} which is a kernel conditional independence test for $D \perp \ell | X_s$.
For \textit{detailed covariate} shifts, we compare against (a) \texttt{KS} which is the classic Kolmogorov-Smirnov test for comparing two univariate distributions,
(b) \texttt{Score} \citep{kulinski2020detection} which detects shifts in $X_s|X_{-s}$ via the Fisher score,
and (c) \texttt{KCI} \citep{zhang2011kernel} which is a kernel conditional independence test for $D \perp X_{-s} | X_s$.

\subsection{Simulations}
Here we illustrate how SHIFT is more powerful and identifies only relevant shifts, i.e. those that contribute to performance drifts of magnitude $\ge \tau$ in some subgroup with prevalence $\ge \epsilon$.

\textbf{Data generating process.} We generate variables $X$ from a multivariate normal distribution centered at $m_d$ and covariance $\Sigma_d$ for domain $d$ and binary outcome $Y$ per logit $\phi_d(x)$.
The ML algorithm is a logistic regression model fitted to data from the source domain. We take $n=8000$ points from both source and target domains, and split them into halves for training and evaluation.

\textbf{Setup 1a/b} (Compare agg-level outcome/covariate tests): For $X\in\mathbb{R}^{10}$, the shift only occurs in subgroup $A=\{x|x_1\notin [-3.5,3.5]\}$. Setup 1a only shifts the outcome logits per $\phi_1(x) = \phi_0(x) - 0.6x_1\ind{x\in A}$; Setup 1b only shifts the mean of the first covariate. To make the tests comparable, SHIFT tests for $\tau=0, \epsilon=0.05$.

\textbf{Setup 2} (Compare detailed outcome test): $\phi_0(x)=0.8x_1+0.5x_2+x_3+0.6x_4$ and $\phi_1(x)= 0.2x_1+0.4x_2+x_3+0.6x_4$. The outcome shifts with respect to both $X_1$ and $X_2$, but the shift in $X_2$ is minimal and below tolerance $\tau$. Accuracy drops by 5.9\%.
SHIFT tests for $\tau=0.05,\epsilon=0.05$.

\textbf{Setup 3} (Compare detailed covariate test): $m_0=(1,0,0,1
), \Sigma_0=\text{diag}(2,2,2,2)$ and $m_1=(0,0,0,0), \Sigma_1=\text{diag}(1,2,2,2)$.
Both $X_1$ and $X_4$ shift but $X_4$'s shift is very small and below tolerance $\tau$. Accuracy drops by 5.4\%.
SHIFT tests for $\tau=0.02,\epsilon=0.05$.

\textbf{SHIFT correctly identifies relevant shifts, achieves nominal type-I error rate, and is consistent.} In Setups 1a/b, the aggregate-level tests in SHIFT are considerably more powerful than \texttt{KCI} and \texttt{MMD}, which are both kernel-based methods that tend to do poorly in high dimensions (Table~\ref{tab:testsagg}).
In contrast, SHIFT takes advantage of flexible ML estimators, which allows it to recover the true subgroup $A$ with reasonable accuracy ($73.7$\%
and $41.9$\% in setups 1a and 1b, respectively).
In Setups 2 and 3, the aggregate-level tests in SHIFT also correctly flag outcome shifts  (Fig~\ref{fig:testsoutcome}) and covariate shifts  (Fig~\ref{fig:testscovariate}), respectively.
At the detailed level, SHIFT correctly flags variable $X_1$ as being a good explanation for the large performance shifts; the others are ignored because they either do not contribute or have negligible impacts.
In Appendix~\ref{sec:power_plots}, we also show that SHIFT controls the Type-I error rate and is consistent (asymptotically power-one).

\begin{table}[H]\centering
\caption{\textbf{Aggregate tests.} Power for detecting outcome or covariate shifts in a subgroup. Power is computed as the rejection rate among $25$ random draws of the dataset. We observe that SHIFT has the highest power.}
    \label{tab:testsagg}
    \begin{tabular}{p{1.7cm}p{1.8cm}p{1.7cm}p{1.8cm}}
    \toprule
       Setup  & SHIFT & \texttt{KCI} & \texttt{MMD} \\
       \midrule
1a {\small Outcome}   & \textbf{0.56} {\scriptsize (0.42,0.7)} & 0.26 {\scriptsize (0.16,0.4)} & 0.06 {\scriptsize (0.02,0.16)} \\
       1b {\small Covariate} & \textbf{0.94} {\scriptsize (0.84,0.98)} & 0.0 {\scriptsize (0.0,0.0)} & 0.46 {\scriptsize (0.32,0.6)} \\
    \bottomrule
    \end{tabular}
\end{table}
\begin{figure}[htbp!]
    \centering
    \vfill
    \begin{subfigure}{\linewidth}
        \centering
        \includegraphics[width=\linewidth]{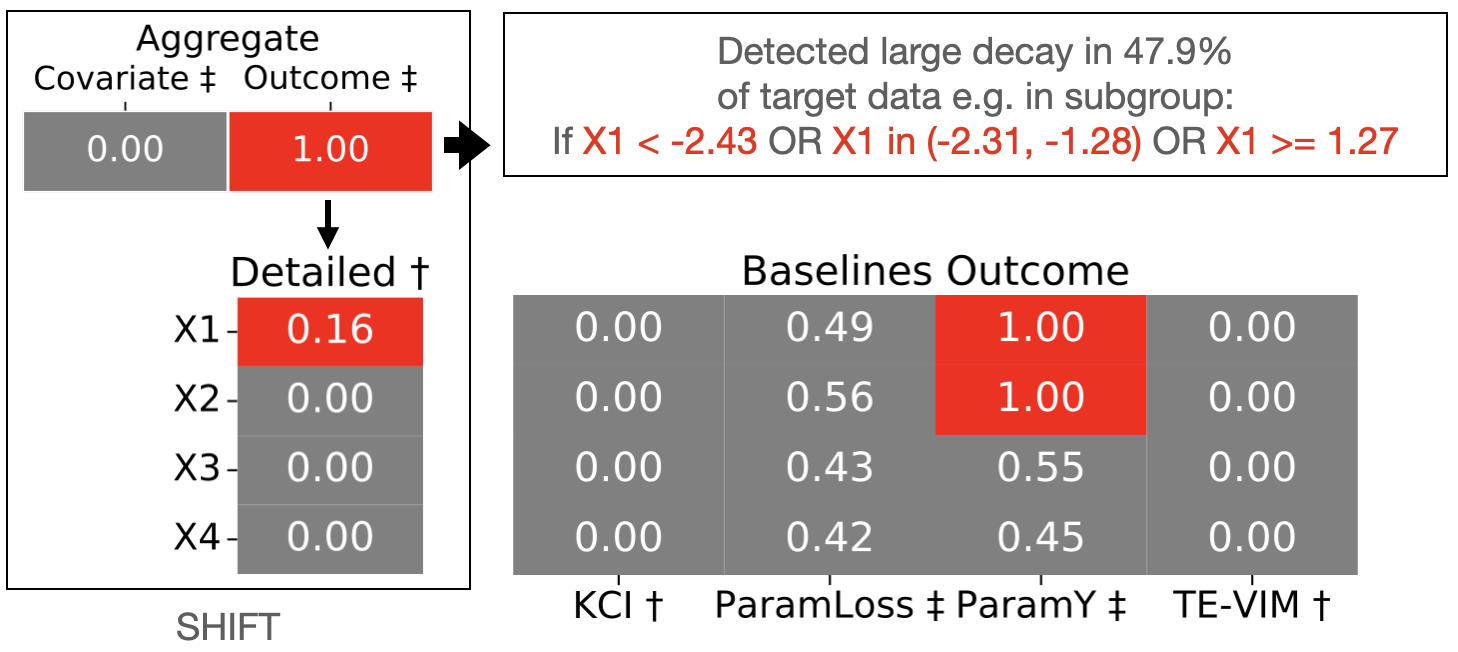}
        \caption{\textbf{Setup 2}, only $X_1$-specific outcome shift should be flagged}
        \label{fig:testsoutcome}
    \end{subfigure}

    \vspace{0.6cm}
    
    \begin{subfigure}{\linewidth}
        \centering
        \includegraphics[width=\linewidth]{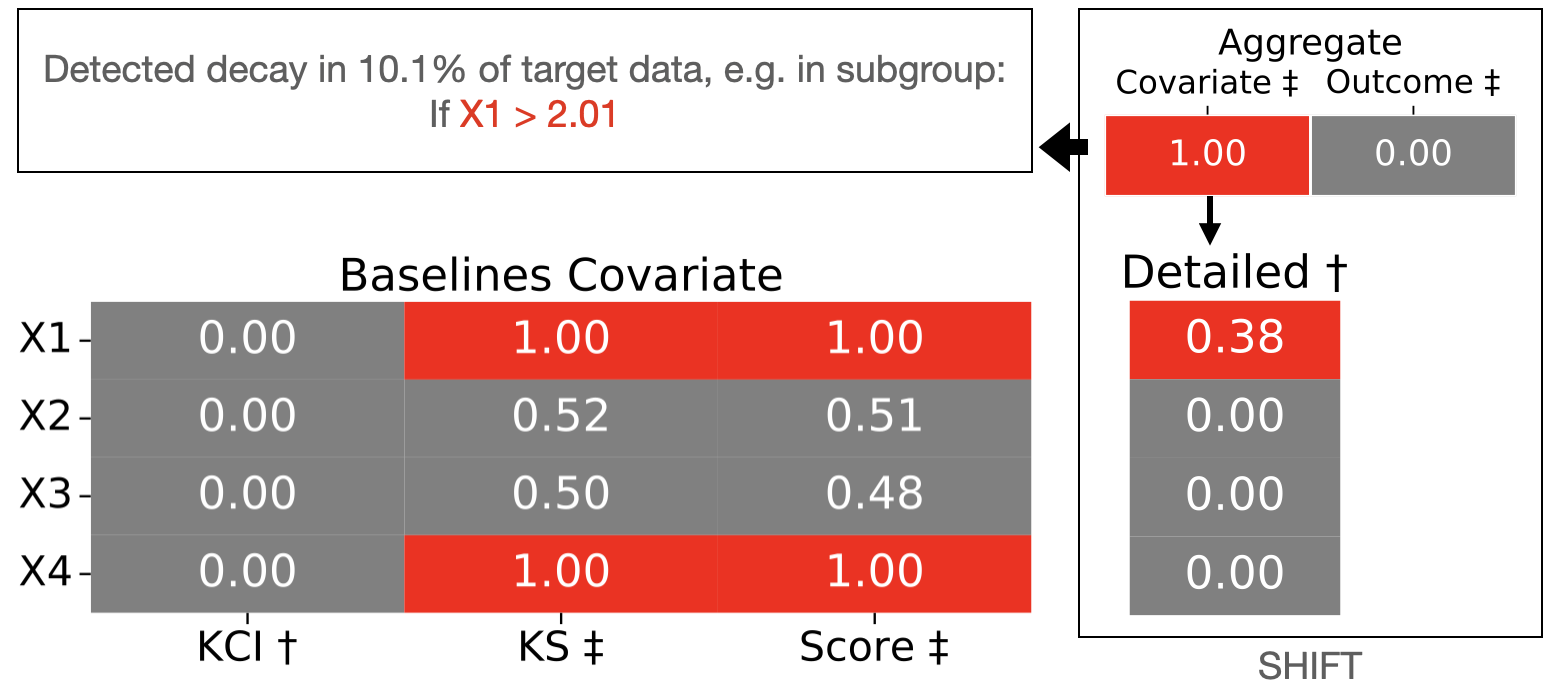}
        \caption{\textbf{Setup 3}, only $X_1$-specific covariate shift should be flagged}
        \label{fig:testscovariate}
    \end{subfigure}

    \vspace{0.6cm}
    
    \begin{subfigure}{\linewidth}
        \centering
        \includegraphics[width=\linewidth]{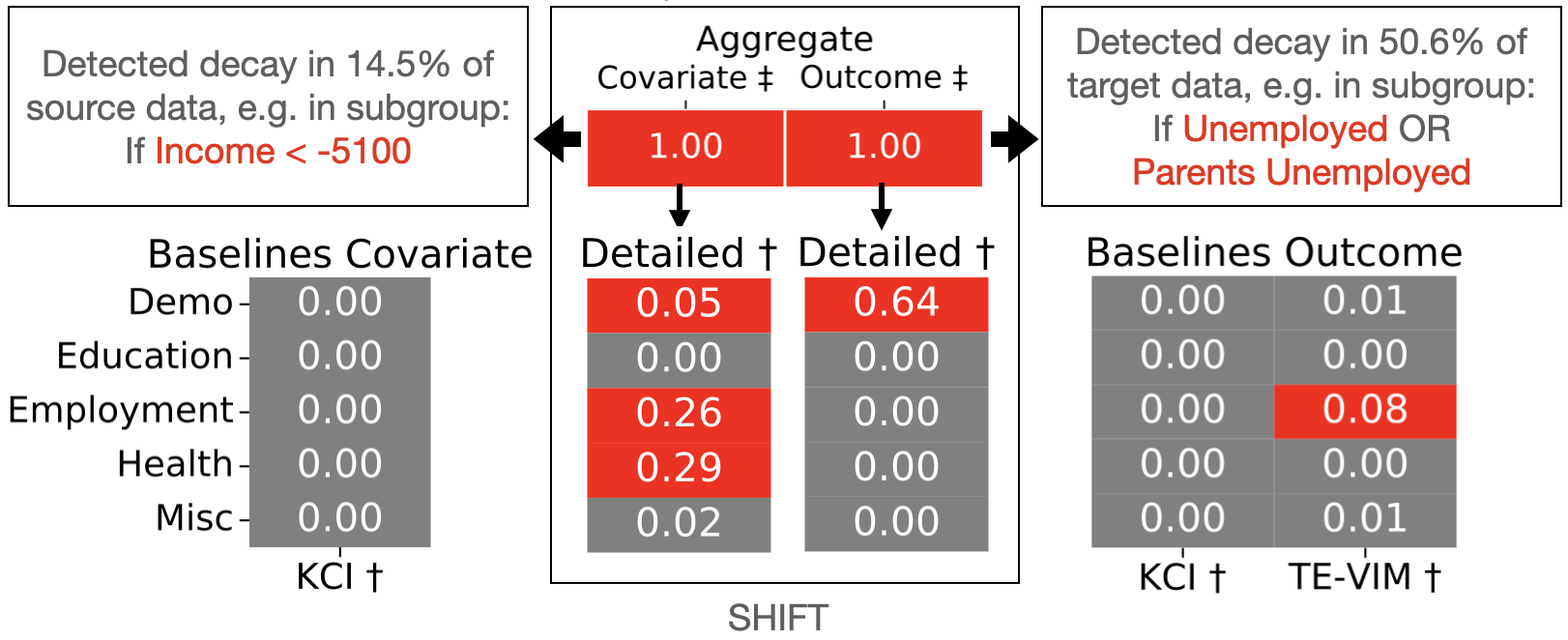}
        \caption{\textbf{Insurance coverage} prediction with 5 variable subsets}
        \label{fig:testsinsurance}
    \end{subfigure}

    \vspace{0.6cm}
    
    \begin{subfigure}{\linewidth}
        \includegraphics[width=\linewidth]{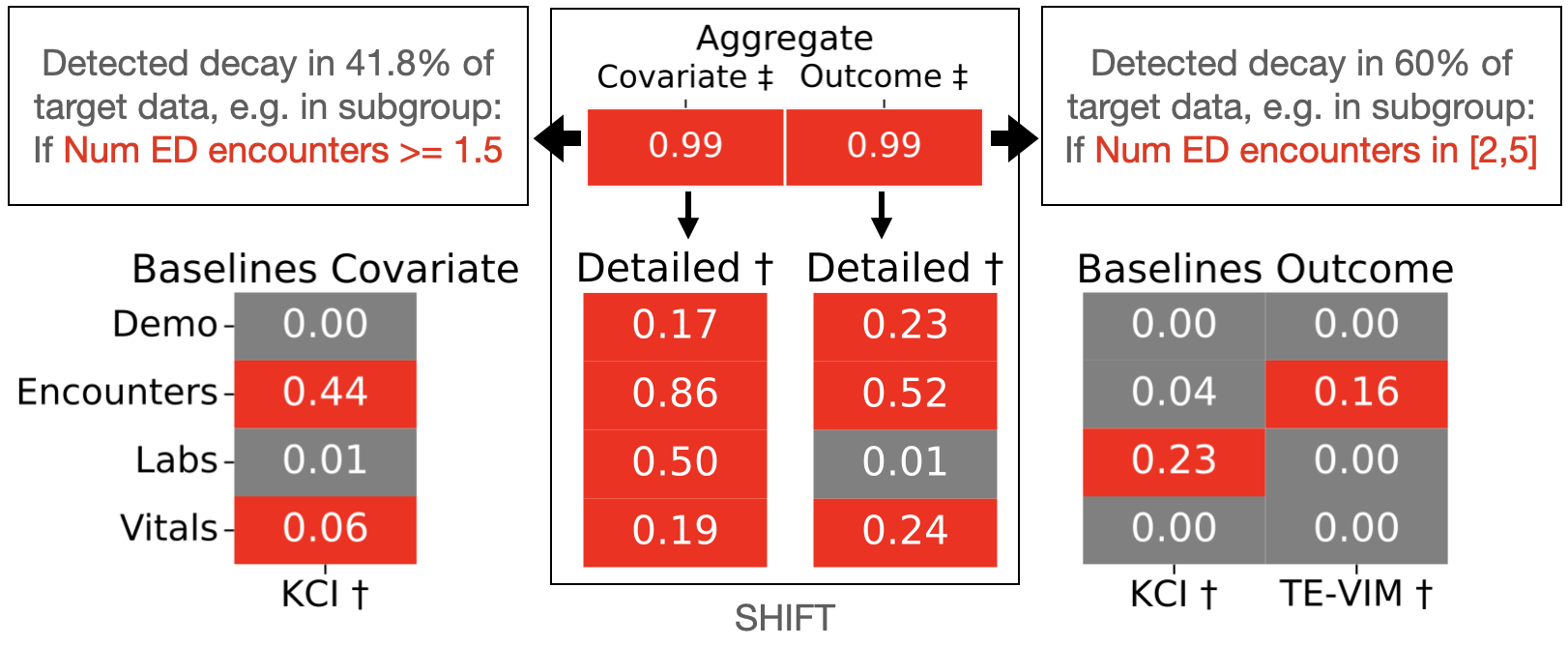}
        \caption{\textbf{Readmission} prediction with 4 variable subsets}
        \label{fig:testsreadmission}
    \end{subfigure}
\caption{
    \textbf{Hypothesis testing results for variable(subset)-specific shifts.}
    SHIFT shown in outlined boxes; baselines for covariate and outcome shifts shown on the bottom left and right, respectively.
    Null hypotheses either state that a shift should be flagged ($\dagger$), in which case we flag it in red if the p-value $>$ 0.05 and show the p-value in the colored box, or that a shift should \textit{not} be flagged ($\ddag$), in which case we flag it if the p-value $\le$ 0.05 and show $1 -$ p-value. For synthetic Setups 2 and 3, we report median p-values over $50$ randomly-sampled datasets.
    }
    \label{fig:testssimulation}   
\end{figure}

\textbf{Comparators do not flag the correct shifts.}
For comparators in the \textit{detailed outcome} test (Setup 2), we find the following:
\texttt{TE-VIM} does not find any variable able to explain heterogeneity of performance drift because it has weird behavior at the null.
\texttt{KCI} can only check if the ``marginal'' distribution of the loss can be explained by individual variables, i.e. if the loss distribution is independent of $D$ given $X_j$. This is a very specific type of explanation and does not hold, and so \texttt{KCI} fails to find any good explanation.
\texttt{ParamLoss} is an incorrectly specified model and thus incorrectly flags none of the variables.
\texttt{ParamY} is correctly specified model so it flags $X_1$ and $X_2$ as shifting, which is correct though does not respect the specified tolerance.
Similar issues are found in the \textit{detailed covariate} test (Setup 3).
\texttt{KCI}, \texttt{KS}, and \texttt{Score} all incorrectly flag both $X_1$ and $X_4$ even though $X_4$ is irrelevant.
This is because they check if $X_4$ has shifted, but do not account for the fact that $X_4$ is not actually used by the model nor does it affect $Y$ in any capacity.

\subsection{Real-world case studies}

\textbf{Health insurance prediction across states.} We study performance drift of an MLP trained to predict public health insurance coverage using census data from Nebraska, which is subsequently applied to Louisiana. Datasets have 3166 and 12000 points respectively and 34 features.
Accuracy drops by 13.7\% on average.
At the aggregate-level, SHIFT finds that both outcome and covariate shifts affect subgroup-level accuracy (Fig~\ref{fig:testsinsurance}).
For example, accuracy for the subgroup detected by the aggregate outcome test, comprising 50.6\% of the target data, decays by 19.4\%. 
Grouping the variables (34 in total) into $5$ broad categories, we find from the detailed tests from SHIFT that shifts with respect to demographics can explain subgroup-level decay due to both shift types.
Similar to that in simulations, the KCI baseline method struggles to find a good explanation while TE-VIM only flags employment-related variables.
Based on these findings, we compare three ways to fix the model: a standard non-targeted (\texttt{Non-T}) fix that retrains the model for everyone with respect to all variables;  a fix that retrains the model for everyone with respect to only the employment-related variables identified by \texttt{TE-VIM}; and a very targeted fix that only updates the model for the subgroup and the demographic variables detected by \texttt{SHIFT} (Table~\ref{tab:acsmodelfix}).
We find that the targeted fix does better than the non-targeted fixes, and the non-targeted fixes inadvertently decay performance in other subgroups.

\begin{table}[htbp!]
    \centering
    \caption{ \textbf{Comparing model updates on insurance study.} We report AUC and 95\% CI for performance of the original model and targeted versus non-targeted model updates, as measured with respect to the overall population (left column) and the subgroups where the original model (\texttt{Org}) and non-targeted model updates (\texttt{Non-T} and \texttt{TE-VIM}) experience large performance decays (right three columns).
    The targeted update based on SHIFT results performs better along all dimensions.}
    \label{tab:acsmodelfix}
    {\footnotesize
    \begin{tabular}{p{2cm}|p{1.3cm}|p{1cm}p{1cm}p{1cm}}
        \toprule
         &  & \multicolumn{3}{c}{Subgroup for models}\\
        Model & Overall & \texttt{Org} & \texttt{Non-T}& \texttt{TE-VIM} \\
\midrule
        Original model, \texttt{Org} & 69.2 \err{67.0,71.4} & 59.0 \err{55.3,62.3} & --- & ---\\
        Non-targeted update, \texttt{Non-T} & 73.0 \err{71.0,75.0} & 67.3 \err{64.0,70.4} & 41.8 \err{24.6,59.2} & ---\\
        Update as per \texttt{TE-VIM} feats. & 73.0 \err{71.0,75.1} & 64.9 \err{61.6,67.9} & --- &  63.7 \err{56.6,70.5}\\
        \midrule
        Targeted update as per \texttt{SHIFT} & \textbf{74.8} \err{72.8,76.8} & \textbf{67.7} \err{64.4,70.4}& \textbf{66.4} \err{46.8,83.6}& \textbf{65.0} \err{57.5,71.5}\\
        \bottomrule
    \end{tabular}
    }
\end{table}

\textbf{Readmission prediction across hospitals.} The clinical AI field has developed numerous models to predict whether patients will be have an unplanned readmission after discharge from a hospital, which can be used to allocate extra resources to high-risk patients.
We study a GBT readmission model trained on data from an academic hospital and transferred to a safety-net hospital.
Since the hospitals serve different populations, the goal is to understand which exact shifts contributed the most to accuracy changes, such as changes in how patient variables are measured or changes in how care is delivered.
Datasets from the academic and safety-net hospitals have 7468 and 6515 points, respectively, and 27 features.
Accuracy on average decays by 6.1\% when the model is transferred.
SHIFT detects significant changes in subgroup-level accuracy due to both aggregate outcome and covariate shifts (Figure~\ref{fig:testsreadmission}).
For instance, the subgroup detected by the aggregate covariate test (comprising 41.8\% of target data) has a $15.4$\% drop in accuracy.
We find that the top feature highlighted by SHIFT for both covariate and outcome shifts is \texttt{num ED encounters}.
When the same variable is highlighted for both covariate and outcome shifts, it can indicate that the definition of the variable has shifted.
Investigating the data extraction procedure further, we indeed find this to be the case: the encounters feature was extracted differently across the hospitals.
After correcting the extraction of this feature, covariate shifts no longer lead to a significant subgroup-level accuracy drop (p-value for the aggregate covariate test is no longer significant). This illustrates how SHIFT can help bridge accuracy gaps.

\textbf{Application to unstructured and high-dimensional data}.
Although SHIFT is primarily designed for tabular data, its aggregate-level tests are suitable for analyzing unstructured
data; its detailed-level tests can also be used, if one has prespecified concepts \citep{koh2020concept}. As an example, we apply SHIFT to the CivilComments dataset \citep{koh2021wilds}, which contains comments on online articles and are judged to be toxic or
not. Given 768-dimensional embeddings of the comments, SHIFT detects accuracy drops, as described in Section~\ref{app:highdimensional}.

\section{Conclusion}
We propose hypothesis tests to identify subgroups where an ML model decays in performance due to distribution shift across two contexts. The tests can also explain how the decay arises by checking for variable subset-specific shifts that can explain the decay. The tests can be configured to detect only meaningfully large performance decay and can be implemented readily using off-the-shelf ML models. Despite using ML estimators, we show that the tests have controlled false detection rate and good power asymptotically.
Although the experiments here primarily focus on tabular data, SHIFT can be extended to unstructured data such as images and text by featurizing such data into concepts \citep{koh2020concept, Feng2024-gg}. Our explorations with text data show that SHIFT provides a solid theoretical foundation on which future work can build.

\section*{Acknowledgements}
We would like to thank Lucas Zier, Patrick Vossler, Avni Kothari, and Romain Pirracchio for their helpful input and comments on this work. We are especially grateful to Adarsh Subbaswamy, Nicholas Petrick, and Gene Pennello, who provided invaluable feedback on the project from its inception to completion. We thank them for their tireless commitment. This work was funded through a Patient-Centered Outcomes Research Institute® (PCORI®) Award (ME-2022C125619). The views presented in this work are solely the responsibility of the author(s) and do not necessarily represent the views of the PCORI®, its Board of Governors or Methodology Committee, and the Food and Drug Administration. JCH acknowledges support from the National Cancer Institute of the National Institutes of Health (R01CA277782), which had no role in study design, data collection and analysis, decision to publish, or preparation of the manuscript.

\section*{Impact Statement}
The methods in the work can identify subgroups that experience overly large performance decay when an ML algorithm is transferred across domains or used over time.
Results from SHIFT can be used to suggest interventions that can improve the impacted subgroup's performance, without substantively impacting other subgroups.
We recommend working with domain experts to define what constitutes a meaningfully large subgroup and performance decay, as these determine what the test will aim to detect.
These thresholds impact the interpretation of the tests and subsequent actions taken for closing performance drops.

\bibliography{main}

\begin{thebibliography}{63}
\providecommand{\natexlab}[1]{#1}
\providecommand{\url}[1]{\texttt{#1}}
\expandafter\ifx\csname urlstyle\endcsname\relax
  \providecommand{\doi}[1]{doi: #1}\else
  \providecommand{\doi}{doi: \begingroup \urlstyle{rm}\Url}\fi

\bibitem[Ali et~al.(2022)Ali, Cauchois, and Duchi]{ali2022lifecycle}
Ali, A., Cauchois, M., and Duchi, J.~C.
\newblock The lifecycle of a statistical model: Model failure detection, identification, and refitting, 2022.
\newblock URL \url{https://arxiv.org/abs/2202.04166}.

\bibitem[Athey et~al.(2019)Athey, Tibshirani, and Wager]{athey2019grf}
Athey, S., Tibshirani, J., and Wager, S.
\newblock {Generalized random forests}.
\newblock \emph{The Annals of Statistics}, 47\penalty0 (2):\penalty0 1148 -- 1178, 2019.
\newblock \doi{10.1214/18-AOS1709}.
\newblock URL \url{https://doi.org/10.1214/18-AOS1709}.

\bibitem[Baron \& Kenny(1986)Baron and Kenny]{baron1986mediator}
Baron, R.~M. and Kenny, D.~A.
\newblock The moderator-mediator variable distinction in social psychological research: conceptual, strategic, and statistical considerations.
\newblock \emph{Journal of personality and social psychology}, 51 6:\penalty0 1173--82, 1986.
\newblock URL \url{https://api.semanticscholar.org/CorpusID:1925599}.

\bibitem[Belloni \& Chernozhukov(2011)Belloni and Chernozhukov]{belloni2011sparse}
Belloni, A. and Chernozhukov, V.
\newblock {$\ell_1$-penalized quantile regression in high-dimensional sparse models}.
\newblock \emph{The Annals of Statistics}, 39\penalty0 (1):\penalty0 82 -- 130, 2011.
\newblock \doi{10.1214/10-AOS827}.
\newblock URL \url{https://doi.org/10.1214/10-AOS827}.

\bibitem[Budhathoki et~al.(2021)Budhathoki, Janzing, Bloebaum, and Ng]{budhathoki2021distribution}
Budhathoki, K., Janzing, D., Bloebaum, P., and Ng, H.
\newblock Why did the distribution change?
\newblock In Banerjee, A. and Fukumizu, K. (eds.), \emph{Proceedings of The 24th International Conference on Artificial Intelligence and Statistics}, volume 130 of \emph{Proceedings of Machine Learning Research}, pp.\  1666--1674. PMLR, 13--15 Apr 2021.
\newblock URL \url{https://proceedings.mlr.press/v130/budhathoki21a.html}.

\bibitem[Cai et~al.(2023)Cai, Namkoong, and Yadlowsky]{Cai2023-ov}
Cai, T.~T., Namkoong, H., and Yadlowsky, S.
\newblock Diagnosing model performance under distribution shift.
\newblock March 2023.
\newblock URL \url{http://arxiv.org/abs/2303.02011}.

\bibitem[Castro et~al.(2020)Castro, Walker, and Glocker]{Castro2020-ee}
Castro, D.~C., Walker, I., and Glocker, B.
\newblock Causality matters in medical imaging.
\newblock \emph{Nat. Commun.}, 11\penalty0 (1):\penalty0 3673, July 2020.

\bibitem[Chernozhukov et~al.(2013)Chernozhukov, Chetverikov, and Kato]{chernozhukov2013gaussian}
Chernozhukov, V., Chetverikov, D., and Kato, K.
\newblock {Gaussian approximations and multiplier bootstrap for maxima of sums of high-dimensional random vectors}.
\newblock \emph{The Annals of Statistics}, 41\penalty0 (6):\penalty0 2786 -- 2819, 2013.
\newblock \doi{10.1214/13-AOS1161}.
\newblock URL \url{https://doi.org/10.1214/13-AOS1161}.

\bibitem[Chernozhukov et~al.(2018)Chernozhukov, Chetverikov, Demirer, Duflo, Hansen, Newey, and Robins]{Chernozhukov2018-dg}
Chernozhukov, V., Chetverikov, D., Demirer, M., Duflo, E., Hansen, C., Newey, W., and Robins, J.
\newblock Double/debiased machine learning for treatment and structural parameters.
\newblock \emph{Econom. J.}, 21\penalty0 (1):\penalty0 C1--C68, February 2018.

\bibitem[Cvach(2012)]{Cvach2012-pc}
Cvach, M.
\newblock Monitor alarm fatigue: an integrative review.
\newblock \emph{Biomed. Instrum. Technol.}, 46\penalty0 (4):\penalty0 268--277, 2012.

\bibitem[d'Eon et~al.(2022)d'Eon, d'Eon, Wright, and Leyton-Brown]{deon2021spotlight}
d'Eon, G., d'Eon, J., Wright, J.~R., and Leyton-Brown, K.
\newblock The spotlight: A general method for discovering systematic errors in deep learning models.
\newblock In \emph{2022 ACM Conference on Fairness, Accountability, and Transparency}, FAccT '22, pp.\  1962–1981, New York, NY, USA, 2022. Association for Computing Machinery.
\newblock ISBN 9781450393522.
\newblock \doi{10.1145/3531146.3533240}.
\newblock URL \url{https://doi.org/10.1145/3531146.3533240}.

\bibitem[Ding et~al.(2021)Ding, Hardt, Miller, and Schmidt]{ding2021retiring}
Ding, F., Hardt, M., Miller, J., and Schmidt, L.
\newblock Retiring adult: New datasets for fair machine learning.
\newblock \emph{Advances in Neural Information Processing Systems}, 34, 2021.

\bibitem[Dong et~al.(2024)Dong, Wang, Sahri, Palpanas, and Srivastava]{dong2024drift}
Dong, S., Wang, Q., Sahri, S., Palpanas, T., and Srivastava, D.
\newblock Efficiently mitigating the impact of data drift on machine learning pipelines.
\newblock \emph{Proc. VLDB Endow.}, 17\penalty0 (11):\penalty0 3072–3081, August 2024.
\newblock ISSN 2150-8097.
\newblock \doi{10.14778/3681954.3681984}.
\newblock URL \url{https://doi.org/10.14778/3681954.3681984}.

\bibitem[Eyuboglu et~al.(2022)Eyuboglu, Varma, Saab, Delbrouck, Lee-Messer, Dunnmon, Zou, and Re]{eyuboglu2022domino}
Eyuboglu, S., Varma, M., Saab, K.~K., Delbrouck, J.-B., Lee-Messer, C., Dunnmon, J., Zou, J., and Re, C.
\newblock Domino: Discovering systematic errors with cross-modal embeddings.
\newblock In \emph{International Conference on Learning Representations}, 2022.
\newblock URL \url{https://openreview.net/forum?id=FPCMqjI0jXN}.

\bibitem[Feng et~al.(2024{\natexlab{a}})Feng, Gossmann, Pirracchio, Petrick, A~Pennello, and Sahiner]{feng2023model}
Feng, J., Gossmann, A., Pirracchio, R., Petrick, N., A~Pennello, G., and Sahiner, B.
\newblock Is this model reliable for everyone? testing for strong calibration.
\newblock In Dasgupta, S., Mandt, S., and Li, Y. (eds.), \emph{Proceedings of The 27th International Conference on Artificial Intelligence and Statistics}, volume 238 of \emph{Proceedings of Machine Learning Research}, pp.\  181--189. PMLR, 02--04 May 2024{\natexlab{a}}.
\newblock URL \url{https://proceedings.mlr.press/v238/feng24a.html}.

\bibitem[Feng et~al.(2024{\natexlab{b}})Feng, Kothari, Zier, Singh, and Tan]{Feng2024-gg}
Feng, J., Kothari, A., Zier, L., Singh, C., and Tan, Y.~S.
\newblock Bayesian concept bottleneck models with {LLM} priors.
\newblock \emph{NeurIPS Workshop on Statistical Frontiers in LLMs and Foundation Models}, October 2024{\natexlab{b}}.

\bibitem[Feng et~al.(2025)Feng, Xia, Singh, and Pirracchio]{Feng2025-rq}
Feng, J., Xia, F., Singh, K., and Pirracchio, R.
\newblock Not all clinical {AI} monitoring systems are created equal: Review and recommendations.
\newblock \emph{NEJM AI}, 2\penalty0 (2), January 2025.

\bibitem[Finlayson et~al.(2021)Finlayson, Subbaswamy, Singh, Bowers, Kupke, Zittrain, Kohane, and Saria]{finlayson2021clinician}
Finlayson, S.~G., Subbaswamy, A., Singh, K., Bowers, J., Kupke, A., Zittrain, J., Kohane, I.~S., and Saria, S.
\newblock The clinician and dataset shift in artificial intelligence.
\newblock \emph{New England Journal of Medicine}, 385\penalty0 (3):\penalty0 283--286, 2021.
\newblock \doi{10.1056/NEJMc2104626}.
\newblock URL \url{https://www.nejm.org/doi/full/10.1056/NEJMc2104626}.

\bibitem[Ghosh et~al.(2022)Ghosh, Malioutov, and Meel]{ghosh2022efficient}
Ghosh, B., Malioutov, D., and Meel, K.~S.
\newblock Efficient learning of interpretable classification rules.
\newblock \emph{Journal of Artificial Intelligence Research}, 74:\penalty0 1823--1863, 2022.

\bibitem[Globus-Harris et~al.(2022)Globus-Harris, Kearns, and Roth]{Globus-Harris2022-py}
Globus-Harris, I., Kearns, M., and Roth, A.
\newblock An algorithmic framework for bias bounties.
\newblock In \emph{2022 ACM Conference on Fairness, Accountability, and Transparency}, New York, NY, USA, June 2022. ACM.

\bibitem[Gretton et~al.(2012{\natexlab{a}})Gretton, Borgwardt, Rasch, Sch{{\"o}}lkopf, and Smola]{gretton2012twosample}
Gretton, A., Borgwardt, K.~M., Rasch, M.~J., Sch{{\"o}}lkopf, B., and Smola, A.
\newblock A kernel two-sample test.
\newblock \emph{Journal of Machine Learning Research}, 13\penalty0 (25):\penalty0 723--773, 2012{\natexlab{a}}.
\newblock URL \url{http://jmlr.org/papers/v13/gretton12a.html}.

\bibitem[Gretton et~al.(2012{\natexlab{b}})Gretton, Borgwardt, Rasch, Schölkopf, and Smola]{Gretton2012-yw}
Gretton, A., Borgwardt, K.~M., Rasch, M.~J., Schölkopf, B., and Smola, A.
\newblock A kernel two-sample test.
\newblock \emph{J. Mach. Learn. Res.}, 13\penalty0 (25):\penalty0 723--773, 2012{\natexlab{b}}.

\bibitem[Hebert-Johnson et~al.(2018)Hebert-Johnson, Kim, Reingold, and Rothblum]{hebert-johnson2018multicalibration}
Hebert-Johnson, U., Kim, M., Reingold, O., and Rothblum, G.
\newblock Multicalibration: Calibration for the ({C}omputationally-identifiable) masses.
\newblock In Dy, J. and Krause, A. (eds.), \emph{Proceedings of the 35th International Conference on Machine Learning}, volume~80 of \emph{Proceedings of Machine Learning Research}, pp.\  1939--1948. PMLR, 10--15 Jul 2018.
\newblock URL \url{https://proceedings.mlr.press/v80/hebert-johnson18a.html}.

\bibitem[Hindy et~al.(2024)Hindy, Luo, Banerjee, Kuck, Schmerling, and Pavone]{hindy2024martingales}
Hindy, A., Luo, R., Banerjee, S., Kuck, J., Schmerling, E., and Pavone, M.
\newblock Diagnostic runtime monitoring with martingales, 2024.
\newblock URL \url{https://arxiv.org/abs/2407.21748}.

\bibitem[Hines et~al.(2023)Hines, Diaz-Ordaz, and Vansteelandt]{hines2023variable}
Hines, O., Diaz-Ordaz, K., and Vansteelandt, S.
\newblock Variable importance measures for heterogeneous causal effects, 2023.

\bibitem[Hsu(2017)]{hsu2017consistent}
Hsu, Y.
\newblock Consistent tests for conditional treatment effects.
\newblock \emph{The Econometrics Journal}, 20\penalty0 (1):\penalty0 1--22, 03 2017.
\newblock ISSN 1368-4221.
\newblock \doi{10.1111/ectj.12077}.
\newblock URL \url{https://doi.org/10.1111/ectj.12077}.

\bibitem[Hudson(2023)]{hudson2023nonparametric}
Hudson, A.
\newblock Nonparametric inference on non-negative dissimilarity measures at the boundary of the parameter space, 2023.
\newblock URL \url{https://arxiv.org/abs/2306.07492}.

\bibitem[Kearns et~al.(2018)Kearns, Neel, Roth, and Wu]{kearns2018gerrymandering}
Kearns, M., Neel, S., Roth, A., and Wu, Z.~S.
\newblock Preventing fairness gerrymandering: Auditing and learning for subgroup fairness.
\newblock In Dy, J. and Krause, A. (eds.), \emph{Proceedings of the 35th International Conference on Machine Learning}, volume~80 of \emph{Proceedings of Machine Learning Research}, pp.\  2564--2572. PMLR, 10--15 Jul 2018.
\newblock URL \url{https://proceedings.mlr.press/v80/kearns18a.html}.

\bibitem[Kennedy(2024)]{Kennedy2024-kk}
Kennedy, E.~H.
\newblock Semiparametric doubly robust targeted double machine learning: A review.
\newblock In \emph{Handbook of Statistical Methods for Precision Medicine}, pp.\  207--236. Chapman and Hall/CRC, Boca Raton, 1st edition edition, October 2024.

\bibitem[Kim et~al.(2019)Kim, Ghorbani, and Zou]{kim2019multiaccuracy}
Kim, M.~P., Ghorbani, A., and Zou, J.
\newblock Multiaccuracy: Black-box post-processing for fairness in classification.
\newblock In \emph{Proceedings of the 2019 AAAI/ACM Conference on AI, Ethics, and Society}, AIES '19, pp.\  247–254, New York, NY, USA, 2019. Association for Computing Machinery.
\newblock ISBN 9781450363242.
\newblock \doi{10.1145/3306618.3314287}.
\newblock URL \url{https://doi.org/10.1145/3306618.3314287}.

\bibitem[Koh et~al.(2020)Koh, Nguyen, Tang, Mussmann, Pierson, Kim, and Liang]{koh2020concept}
Koh, P.~W., Nguyen, T., Tang, Y.~S., Mussmann, S., Pierson, E., Kim, B., and Liang, P.
\newblock Concept bottleneck models.
\newblock In III, H.~D. and Singh, A. (eds.), \emph{Proceedings of the 37th International Conference on Machine Learning}, volume 119 of \emph{Proceedings of Machine Learning Research}, pp.\  5338--5348. PMLR, 13--18 Jul 2020.
\newblock URL \url{https://proceedings.mlr.press/v119/koh20a.html}.

\bibitem[Koh et~al.(2021)Koh, Sagawa, Marklund, Xie, Zhang, Balsubramani, Hu, Yasunaga, Phillips, Gao, Lee, David, Stavness, Guo, Earnshaw, Haque, Beery, Leskovec, Kundaje, Pierson, Levine, Finn, and Liang]{koh2021wilds}
Koh, P.~W., Sagawa, S., Marklund, H., Xie, S.~M., Zhang, M., Balsubramani, A., Hu, W., Yasunaga, M., Phillips, R.~L., Gao, I., Lee, T., David, E., Stavness, I., Guo, W., Earnshaw, B., Haque, I., Beery, S.~M., Leskovec, J., Kundaje, A., Pierson, E., Levine, S., Finn, C., and Liang, P.
\newblock Wilds: A benchmark of in-the-wild distribution shifts.
\newblock In Meila, M. and Zhang, T. (eds.), \emph{Proceedings of the 38th International Conference on Machine Learning}, volume 139 of \emph{Proceedings of Machine Learning Research}, pp.\  5637--5664. PMLR, 18--24 Jul 2021.
\newblock URL \url{https://proceedings.mlr.press/v139/koh21a.html}.

\bibitem[Kulinski et~al.(2020)Kulinski, Bagchi, and Inouye]{kulinski2020detection}
Kulinski, S., Bagchi, S., and Inouye, D.~I.
\newblock Feature shift detection: Localizing which features have shifted via conditional distribution tests.
\newblock In Larochelle, H., Ranzato, M., Hadsell, R., Balcan, M., and Lin, H. (eds.), \emph{Advances in Neural Information Processing Systems}, volume~33, pp.\  19523--19533. Curran Associates, Inc., 2020.
\newblock URL \url{https://proceedings.neurips.cc/paper/2020/file/e2d52448d36918c575fa79d88647ba66-Paper.pdf}.

\bibitem[Lakkaraju et~al.(2016)Lakkaraju, Bach, and Leskovec]{lakkaraju2016decisionsets}
Lakkaraju, H., Bach, S.~H., and Leskovec, J.
\newblock Interpretable decision sets: A joint framework for description and prediction.
\newblock In \emph{Proceedings of the 22nd ACM SIGKDD International Conference on Knowledge Discovery and Data Mining}, KDD '16, pp.\  1675–1684, New York, NY, USA, 2016. Association for Computing Machinery.
\newblock ISBN 9781450342322.
\newblock \doi{10.1145/2939672.2939874}.
\newblock URL \url{https://doi.org/10.1145/2939672.2939874}.

\bibitem[Liu et~al.(2023)Liu, Wang, Cui, and Namkoong]{Liu2023-gs}
Liu, J., Wang, T., Cui, P., and Namkoong, H.
\newblock On the need for a language describing distribution shifts: Illustrations on tabular datasets.
\newblock In \emph{Thirty-seventh Conference on Neural Information Processing Systems Datasets and Benchmarks Track}, 2023.
\newblock URL \url{https://openreview.net/forum?id=PF0lxayYST}.

\bibitem[Loh et~al.(2020)Loh, Moerkerke, Loeys, and Vansteelandt]{Loh2020-xr}
Loh, W.~W., Moerkerke, B., Loeys, T., and Vansteelandt, S.
\newblock Heterogeneous indirect effects for multiple mediators using interventional effect models.
\newblock \emph{Epidemiol. Method.}, 9\penalty0 (1), January 2020.

\bibitem[Luedtke et~al.(2018)Luedtke, Carone, and van~der Laan]{luedtke2018omnibus}
Luedtke, A., Carone, M., and van~der Laan, M.~J.
\newblock {An Omnibus Non-Parametric Test of Equality in Distribution for Unknown Functions}.
\newblock \emph{Journal of the Royal Statistical Society Series B: Statistical Methodology}, 81\penalty0 (1):\penalty0 75--99, 11 2018.
\newblock ISSN 1369-7412.
\newblock \doi{10.1111/rssb.12299}.
\newblock URL \url{https://doi.org/10.1111/rssb.12299}.

\bibitem[Mitchell et~al.(2021)Mitchell, Potash, Barocas, D'Amour, and Lum]{Mitchell2021-wq}
Mitchell, S., Potash, E., Barocas, S., D'Amour, A., and Lum, K.
\newblock Algorithmic fairness: Choices, assumptions, and definitions.
\newblock \emph{Annu. Rev. Stat. Appl.}, 8\penalty0 (1):\penalty0 141--163, March 2021.

\bibitem[OpenAI et~al.(2024)OpenAI, :, Hurst, Lerer, Goucher, Perelman, Ramesh, Clark, and et~al.]{openai2024gpt4ocard}
OpenAI, :, Hurst, A., Lerer, A., Goucher, A.~P., Perelman, A., Ramesh, A., Clark, A., and et~al.
\newblock Gpt-4o system card, 2024.
\newblock URL \url{https://arxiv.org/abs/2410.21276}.
\newblock Accessed on March 28, 2025.

\bibitem[Panda et~al.(2024)Panda, Kancheti, Balasubramanian, and Sinha]{panda2024interpretable}
Panda, P., Kancheti, S.~S., Balasubramanian, V.~N., and Sinha, G.
\newblock Interpretable model drift detection.
\newblock In \emph{Proceedings of the 7th Joint International Conference on Data Science \& Management of Data (11th ACM IKDD CODS and 29th COMAD)}, CODS-COMAD '24, pp.\  1–9, New York, NY, USA, 2024. Association for Computing Machinery.
\newblock ISBN 9798400716348.
\newblock \doi{10.1145/3632410.3632434}.
\newblock URL \url{https://doi.org/10.1145/3632410.3632434}.

\bibitem[Pedregosa et~al.(2011)Pedregosa, Varoquaux, Gramfort, Michel, Thirion, Grisel, Blondel, Prettenhofer, Weiss, Dubourg, Vanderplas, Passos, Cournapeau, Brucher, Perrot, and Duchesnay]{pedregosa2011scikit}
Pedregosa, F., Varoquaux, G., Gramfort, A., Michel, V., Thirion, B., Grisel, O., Blondel, M., Prettenhofer, P., Weiss, R., Dubourg, V., Vanderplas, J., Passos, A., Cournapeau, D., Brucher, M., Perrot, M., and Duchesnay, E.
\newblock Scikit-learn: Machine learning in {P}ython.
\newblock \emph{Journal of Machine Learning Research}, 12:\penalty0 2825--2830, 2011.

\bibitem[Podkopaev \& Ramdas(2022)Podkopaev and Ramdas]{podkopaev2022tracking}
Podkopaev, A. and Ramdas, A.
\newblock Tracking the risk of a deployed model and detecting harmful distribution shifts.
\newblock In \emph{International Conference on Learning Representations}, 2022.

\bibitem[Quinonero-Candela et~al.(2009)Quinonero-Candela, Sugiyama, Schwaighofer, and Lawrence]{Quinonero-Candela2009-yy}
Quinonero-Candela, J., Sugiyama, M., Schwaighofer, A., and Lawrence, N.~D.
\newblock \emph{Dataset Shift in Machine Learning}.
\newblock The MIT Press, 2009.

\bibitem[Quintas-Martinez et~al.(2024)Quintas-Martinez, Bahadori, Santiago, Mu, and Heckerman]{quintasmartinez2024multiplyrobust}
Quintas-Martinez, V., Bahadori, M.~T., Santiago, E., Mu, J., and Heckerman, D.
\newblock Multiply-robust causal change attribution.
\newblock In Salakhutdinov, R., Kolter, Z., Heller, K., Weller, A., Oliver, N., Scarlett, J., and Berkenkamp, F. (eds.), \emph{Proceedings of the 41st International Conference on Machine Learning}, volume 235 of \emph{Proceedings of Machine Learning Research}, pp.\  41821--41840. PMLR, 21--27 Jul 2024.
\newblock URL \url{https://proceedings.mlr.press/v235/quintas-martinez24a.html}.

\bibitem[Quinzan et~al.(2023)Quinzan, Soleymani, Jaillet, Rojas, and Bauer]{quinzan2023feature}
Quinzan, F., Soleymani, A., Jaillet, P., Rojas, C.~R., and Bauer, S.
\newblock {DRCFS}: Doubly robust causal feature selection.
\newblock In Krause, A., Brunskill, E., Cho, K., Engelhardt, B., Sabato, S., and Scarlett, J. (eds.), \emph{Proceedings of the 40th International Conference on Machine Learning}, volume 202 of \emph{Proceedings of Machine Learning Research}, pp.\  28468--28491. PMLR, 23--29 Jul 2023.
\newblock URL \url{https://proceedings.mlr.press/v202/quinzan23a.html}.

\bibitem[Rabanser et~al.(2019)Rabanser, G\"{u}nnemann, and Lipton]{rabanser2019failing}
Rabanser, S., G\"{u}nnemann, S., and Lipton, Z.
\newblock Failing loudly: An empirical study of methods for detecting dataset shift.
\newblock In Wallach, H., Larochelle, H., Beygelzimer, A., d\textquotesingle Alch\'{e}-Buc, F., Fox, E., and Garnett, R. (eds.), \emph{Advances in Neural Information Processing Systems}, volume~32. Curran Associates, Inc., 2019.
\newblock URL \url{https://proceedings.neurips.cc/paper/2019/file/846c260d715e5b854ffad5f70a516c88-Paper.pdf}.

\bibitem[Rauba et~al.(2024)Rauba, Seedat, Luyten, and van~der Schaar]{rauba2024contextaware}
Rauba, P., Seedat, N., Luyten, M.~R., and van~der Schaar, M.
\newblock Context-aware testing: A new paradigm for model testing with large language models.
\newblock In \emph{The Thirty-eighth Annual Conference on Neural Information Processing Systems}, 2024.
\newblock URL \url{https://openreview.net/forum?id=d75qCZb7TX}.

\bibitem[Rubinstein et~al.(2023)Rubinstein, Branson, and Kennedy]{Rubinstein2023-wk}
Rubinstein, M., Branson, Z., and Kennedy, E.~H.
\newblock Heterogeneous interventional effects with multiple mediators: Semiparametric and nonparametric approaches.
\newblock \emph{J. Causal Inference}, 11\penalty0 (1), July 2023.

\bibitem[Sanh et~al.(2019)Sanh, Debut, Chaumond, and Wolf]{Sanh2019DistilBERTAD}
Sanh, V., Debut, L., Chaumond, J., and Wolf, T.
\newblock Distilbert, a distilled version of bert: smaller, faster, cheaper and lighter.
\newblock \emph{ArXiv}, abs/1910.01108, 2019.

\bibitem[Singh et~al.(2024)Singh, Xia, Subbaswamy, Gossmann, and Feng]{Singh2024-np}
Singh, H., Xia, F., Subbaswamy, A., Gossmann, A., and Feng, J.
\newblock A hierarchical decomposition for explaining {ML} performance discrepancies.
\newblock In \emph{The Thirty-eighth Annual Conference on Neural Information Processing Systems}, September 2024.

\bibitem[Srivastava et~al.(2020)Srivastava, Nushi, Kamar, Shah, and Horvitz]{Srivastava2020-wu}
Srivastava, M., Nushi, B., Kamar, E., Shah, S., and Horvitz, E.
\newblock An empirical analysis of backward compatibility in machine learning systems.
\newblock In \emph{KDD}, August 2020.

\bibitem[Steyerberg(2009)]{Steyerberg2009-ze}
Steyerberg, E.~W.
\newblock \emph{Clinical Prediction Models: A Practical Approach to Development, Validation, and Updating}.
\newblock Springer, New York, NY, 2009.

\bibitem[Subbaswamy et~al.(2024)Subbaswamy, Sahiner, Petrick, Pai, Adams, Diamond, and Saria]{Subbaswamy2024-wx}
Subbaswamy, A., Sahiner, B., Petrick, N., Pai, V., Adams, R., Diamond, M.~C., and Saria, S.
\newblock A data-driven framework for identifying patient subgroups on which an {AI}/machine learning model may underperform.
\newblock \emph{NPJ Digit. Med.}, 7\penalty0 (1):\penalty0 334, November 2024.

\bibitem[Sugiyama et~al.(2007)Sugiyama, Nakajima, Kashima, Buenau, and Kawanabe]{Sugiyama2007-pk}
Sugiyama, M., Nakajima, S., Kashima, H., Buenau, P., and Kawanabe, M.
\newblock Direct importance estimation with model selection and its application to covariate shift adaptation.
\newblock \emph{Adv. Neural Inf. Process. Syst.}, 2007.

\bibitem[Suriyakumar et~al.(2023)Suriyakumar, Ghassemi, and Ustun]{Suriyakumar2023-zr}
Suriyakumar, V.~M., Ghassemi, M., and Ustun, B.
\newblock When personalization harms performance: Reconsidering the use of group attributes in prediction.
\newblock \emph{Proc. Int. Conf. Mach. Learn.}, 2023.

\bibitem[van~der Vaart(1998)]{Van_der_Vaart1998-cj}
van~der Vaart, A.~W.
\newblock \emph{Asymptotic Statistics}.
\newblock Cambridge University Press, October 1998.

\bibitem[Wager \& Walther(2015)Wager and Walther]{wager2015adaptive}
Wager, S. and Walther, G.
\newblock Adaptive concentration of regression trees, with application to random forests.
\newblock \emph{arXiv preprint arXiv:1503.06388}, 2015.

\bibitem[Williamson et~al.(2021)Williamson, Gilbert, Carone, and Simon]{williamson2021nonparametric}
Williamson, B.~D., Gilbert, P.~B., Carone, M., and Simon, N.
\newblock Nonparametric variable importance assessment using machine learning techniques.
\newblock \emph{Biometrics}, 77\penalty0 (1):\penalty0 9--22, 2021.
\newblock \doi{https://doi.org/10.1111/biom.13392}.
\newblock URL \url{https://onlinelibrary.wiley.com/doi/abs/10.1111/biom.13392}.

\bibitem[Wu et~al.(2021)Wu, Wu, and Zou]{Wu2021-dl}
Wu, E., Wu, K., and Zou, J.
\newblock Explaining medical {AI} performance disparities across sites with confounder shapley value analysis.
\newblock November 2021.
\newblock URL \url{http://arxiv.org/abs/2111.08168}.

\bibitem[Yang et~al.(2023)Yang, Zhang, Katabi, and Ghassemi]{yang2023subpopulation}
Yang, Y., Zhang, H., Katabi, D., and Ghassemi, M.
\newblock Change is hard: A closer look at subpopulation shift.
\newblock In Krause, A., Brunskill, E., Cho, K., Engelhardt, B., Sabato, S., and Scarlett, J. (eds.), \emph{Proceedings of the 40th International Conference on Machine Learning}, volume 202 of \emph{Proceedings of Machine Learning Research}, pp.\  39584--39622. PMLR, 23--29 Jul 2023.
\newblock URL \url{https://proceedings.mlr.press/v202/yang23s.html}.

\bibitem[Zahn et~al.(2023)Zahn, Hinz, and Feuerriegel]{zahn2023locating}
Zahn, M.~v., Hinz, O., and Feuerriegel, S.
\newblock Locating disparities in machine learning.
\newblock In \emph{2023 IEEE International Conference on Big Data (BigData)}, pp.\  1883--1894, 2023.
\newblock \doi{10.1109/BigData59044.2023.10386485}.

\bibitem[Zhang et~al.(2023)Zhang, Singh, Ghassemi, and Joshi]{zhang2023why}
Zhang, H., Singh, H., Ghassemi, M., and Joshi, S.
\newblock ``{W}hy did the model fail?'': Attributing model performance changes to distribution shifts.
\newblock In Krause, A., Brunskill, E., Cho, K., Engelhardt, B., Sabato, S., and Scarlett, J. (eds.), \emph{Proceedings of the 40th International Conference on Machine Learning}, volume 202 of \emph{Proceedings of Machine Learning Research}, pp.\  41550--41578. PMLR, 23--29 Jul 2023.
\newblock URL \url{https://proceedings.mlr.press/v202/zhang23ai.html}.

\bibitem[Zhang et~al.(2011)Zhang, Peters, Janzing, and Sch\"{o}lkopf]{zhang2011kernel}
Zhang, K., Peters, J., Janzing, D., and Sch\"{o}lkopf, B.
\newblock Kernel-based conditional independence test and application in causal discovery.
\newblock In \emph{Proceedings of the Twenty-Seventh Conference on Uncertainty in Artificial Intelligence}, UAI'11, pp.\  804–813, Arlington, Virginia, USA, 2011. AUAI Press.
\newblock ISBN 9780974903972.

\end{thebibliography}
\bibliographystyle{icml2025}

\newpage
\appendix
\onecolumn
\appendix
\section*{Appendix}

\section*{Table of contents}

\begin{itemize}
    \item We summarize the notation used in Table~\ref{tab:notation}.
    \item Section~\ref{appendix:mediation} relates proposed tests to causal interaction and mediation analysis.
    \item Section~\ref{app:modifiedcov} describes the modified covariate shift test which removes features uncorrelated with loss as a preprocessing step.
    \item We provide step-by-step details of the testing procedure in Section~\ref{sec:test_details}.
    \item Section~\ref{app:proof} presents theoretical results and their proofs.
    \item Section~\ref{sec:detailsrelatedwork} discusses related work in more detail and demonstrates challenges in testing with prior approaches.
    \item We discuss some alternative definitions of detailed tests and demonstrate their drawbacks in Section~\ref{app:alternatives}.
    \item Sections~\ref{app:experiment} and \ref{app:implementation} give experiment and implementation details.
    \item We show more simulations for type-I error control and power in Section~\ref{sec:power_plots}.
    \item We apply SHIFT to a high-dimensional text dataset from the WILDS benchmark in Section~\ref{app:highdimensional}.
\end{itemize}

\begin{table}[htbp!]
    \caption{\textbf{Summary of notation.}}
    \label{tab:notation}
    \centering
    \begin{tabular}{c|c}
    \toprule
       Notation  & Meaning \\
       \midrule
       $X,Y,f(X)$ & Covariates, Outcome, ML algorithm's prediction\\
       $X_s$ & Subset of covariates (or variables)\\
       $D\in \{0,1\}$ & Domain or dataset, $0$ means source and $1$ means target domain\\
       $p_d(X,Y)$ & Probability density (also used for distribution) in domain $d$\\
       $A, h_A$ & Subgroup $A\subseteq \mathcal{X}$ and binary indicator function for subgroup $A$\\
       $\ell(y,f(x))$  & Loss per data point $(x,y)$\\
       $\CL_d(x), d\in\{0,1\}$ & Average loss per data point $\E_d[\ell(Y,f(X))|X=x]$\\
       $\E_d[\cdot | X\in A]$ & Expectation with respect to $p_d$ for a subgroup\\
       $\tau$ & Tolerance specified by analyst to detect performance decay of magintude $\tau$ or higher\\
       $\epsilon$ & Prevalence specified by analyst for the minimum group size to detect\\
       $\alpha$ & Significance level or the desired false rejection rate of the null hypothesis\\
       $\mu_d(x)$ & conditional outcome function in source domain, $\mu_d(x) = p_d(Y=1|X=x)$\\
       $p_s(y|x_s, \mu_0), \CL_s(x)$ & Conditional outcome distribution shifted with respect to $X_s$ and the resulting loss\\
       $\CL_{0,s}(x)$ & Conditional loss function $\CL_{0,s}(x) = \E_0[\ell | x_s, h_A(x)]$\\
       $\pi(x),\pi_s(x)$ & Density ratios $\frac{p_1(x)}{p_0(x)}$ and $\pi_{s}(x) = \frac{p_1(x_s)}{p_0(x_s)}$\\
       $\pi_A(x),\pi_{s,A}(x)$ & Scaled versions of $\pi(x)$ and $\pi_s(x)$ by the factors $\frac{\E_0[h_A(X)]}{\E_1[h_A(X)]}$ and $\frac{\E_0[h_A(X)]}{\E_s[h_A(X)]}$ respectively\\
    \bottomrule
    \end{tabular}
\end{table}

\section{Connection to causal mediation analysis}
\label{appendix:mediation}

Figure~\ref{fig:causaltestsoverview} summarizes the correspondence between the proposed tests and the tests for interaction and mediation in the causal inference literature. Proposed hypothesis tests could benefit heterogeneous mediation analysis that identifies meaningful subgroups, as opposed to prespecified or model-driven latent subgroups. Specifically, the proposed tests can be extended to determine whether a subgroup with large direct and indirect effects exists. The extension is straightforward when subgroups are defined using pre-exposure covariates, as in conventional mediation analysis. When subgroups are defined using potential mediators, further identification assumptions are required before the proposed tests can be applied.

\begin{figure*}[htbp!]
    \includegraphics[width=0.9\textwidth]{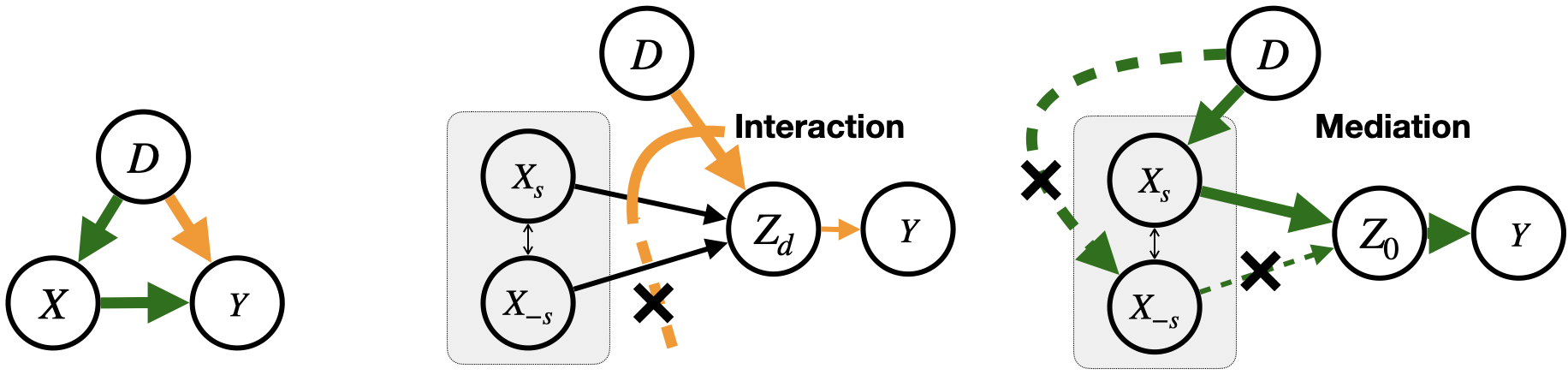}
    \caption{\textbf{Graphical description of the hypothesis tests.} Differences in distributions of $X$ and $Y$ (and hence average performance of a model) across the two domains $D=0,1$ is represented by effect of variable $D$ on $X,Y$. (\textbf{left}) Performance can vary due to changes in conditional outcome distribution (orange edge, outcome shift) and/or changes in the covariate distribution (green path, covariate shift). Variables $Z_d, d\in \{0,1\}$ represents conditional loss in the domains as described in text. Firstly, we test whether each of the effects is zero. (\textbf{middle}) When test for outcome shift is rejected, we identify feature subsets $X_s$ such that the complement $X_{-s}$ do not interact with the effect of $D$ on $Y$ (orange dashed line). (\textbf{right}) When test for covariate shift is rejected, we identify feature subsets $X_s$ such that $X_{-s}$ which do not mediate the effect of $D$ on $Y$ (green dashed path).}
    \label{fig:causaltestsoverview}
\end{figure*}

\begin{figure}[htbp!]
    \centering
    \includegraphics[width=0.2\linewidth]{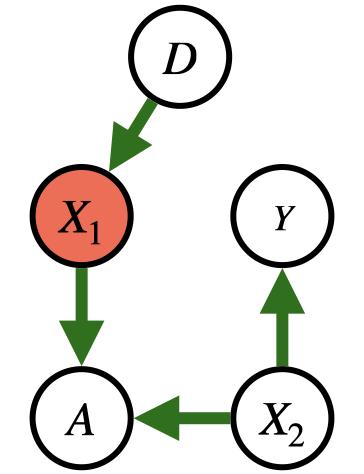}
    \caption{\textbf{Modified covariate shift test.} Graph provides an example data generating mechanism where $X_1$ is marginally independent of $Y$. The feature $X_2$ influences $Y$ but does not shift. Conditioning on $A$ introduces a collider on the path from $D$ to $Y$ (and hence $\ell$). Therefore, the loss is affected by the covariate shift even though it only depends on a feature that does not shift. In the modified test, we remove the feature $X_1$ before testing for covariate shifts.}
    \label{fig:test_modifiedcov}
\end{figure}

\section{Modified covariate shift test}
\label{app:modifiedcov}

Counter-intuitively, the tests as defined in Hypotheses \ref{hypo:agg_covariate} and \ref{hypo:detailed_covariate} may flag shifts in covariates that are unrelated to the loss function. That is, for the aggregate covariate shift test, we may find a group $A$ for which
$$\E_1[\CL_0(X) | X\in A] - \E_0[\CL_0(X) | X\in A] > 0$$
even when $\CL_0(X)=\CL_1(X)$. This can happen because we are conditioning on $X\in A$ which introduces a shift in $\ell | X\in A$ by conditioning on the collider $A$ for effect of $d$ on $\ell$. A simple fix is to restrict the detectors to depend only on features that are correlated with the loss.

For the covariate shift tests, we subset the covariates to keep those that are correlated with the loss, $X_{corr}=\{X_i| X_i \not\!\perp\!\!\!\perp \ell, i\in \{1,2,\dots,|X|\}\}$. The null hypotheses \ref{hypo:agg_covariate} and \ref{hypo:detailed_covariate} are defined only with respect to the correlated features, $X:=X_{corr}$. To avoid `peeking into the data', we filter uncorrelated features based on the training data split. Remaining part of the tests are the same as earlier.

\section{Step-by-step testing procedures}
\label{sec:test_details}

Source and target domains comprise of i.i.d. observations of $(x,y)$. We denote empirical average over $n$ data points from domain $d$ by the operator $\mathbb{P}_{d,n}$. For example, $\mathbb{P}_{0,n}$ refers to taking the empirical average for the evaluation dataset of the source domain. We use $\mathbb{P}_{n}$ to refer to an average over the pooled source and target data. For ease in notation, we write the same number of samples $n$ for each dataset. However, they could be different. 
In fact, our hypothesis tests are particularly beneficial for cases when the target datasets are too small for point-identifying performance shifts. 
We split the datasets into two parts, one is used for fitting all nuisance parameters and the detectors, and other is used to evaluate the MEE and for inference.

\subsection{Fitting nuisance parameters} 
\label{app:nuisance}

We require seven nuisance parameters denoted by $\eta = (\CL_0, \CL_1, \CL_s, \CL_{0,s}, \pi, \pi_s, \pi_V)$. We denote their estimates by $\hat \eta$. The nuisance parameters can be broadly categorized as outcome and density ratio models. 
\begin{itemize}
    \item \textit{Outcome models.} To fit the conditional loss functions $\hat \CL_0$ and $\hat \CL_1$, we first fit the conditional outcome probabilities $\hat p_0(y|x)$ and $\hat p_1(y|x)$ by regressing $y$ on $x$. Then, conditional loss is simply an expectation over the two possible outcome $\hat \CL_d(x) = \sum_{y\in \{0,1\}} \ell(y,f(x)) \hat p_d(y|x)$. For $\hat \CL_s$, we fit the $X_s$-shifted conditional outcome probability $\hat p_s$ by regressing $y$ on $x_s, \hat \mu_0(x)$, and taking an expectation over $y$. That is, $\hat \CL_s(x) = \sum_{y\in \{0,1\}} \ell(y,f(x)) \hat p_s(y|x_s,\hat \mu_0(x))$. Similarly, the conditional loss function $\CL_{0,s}(x) = \E_0[\ell | x_s, h_A(x)]$ can be estimated through regressing $y$ on $x_s, h_A(x)$.
    \item \textit{Density ratio models.} We estimate the scaled density ratios $\tilde{\pi}_A(x)=\frac{p_1(x)\E_0[h_A(X)]}{p_0(x)\E_1[h_A(X)]}$ and $\tilde \pi_{s,A}(x) = \frac{p_1(x_s) \E_0[h_A(X)]}{p_0(x_s) \E_s[h_A(X)]}$ by first fitting models for the density ratios and plugging them into the definitions. We also need a density ratio model for the detailed outcome test, $\pi_V(X_s=x_s,X_{-s}=x_{-s}) = \frac{p_1(X_{-s}=x_{-s}|X_s=x_s, \mu_0(X)=\mu_0(x))}{p_1(X_{-s}=x_{-s}|\mu_0(X)=\mu_0(x))}$. Density ratios are fit using a probabilistic classifier to estimate the probability a data point belongs to target domain from the pooled source and target data. Density ratios are computed as odds ratio for the classifiers. Please refer to \citet{Sugiyama2007-pk} for more details.
\end{itemize}

\textit{Remark.} We note that the scaled density ratio models depend on the detectors $h$ which we haven't fit yet. To break this cyclical dependence, we first fit detectors (as shown in the next section) for the \textit{unscaled} density ratios and then compute the scaled density ratio models.

\subsection{Fitting detectors}
\label{app:detectors}

Recall that detectors $h_A(x)$ are binary functions meant to detect the data points with high mean exceedence. Fitting procedures for detectors vary for the tests since the exceedence is defined differently. All detectors can be implemented using off-the-shelf ML regression libraries.

\textbf{Aggregate test.} For the outcome test, we use the plug-in approach $h_A(x) = \ind{\hat\CL_1(x) - \hat\CL_0(x) - \tau > 0}$. Alternatively, we can fit a detector by regressing $\ell - \hat \CL_0(x) - \tau$ from $x$,
\begin{align}
    \label{eq:detectoraggoutcome}
    h_A \in \argmin_{g} \mathbb{P}_{1,n} ((\ell - \hat \CL_0(x) - \tau) - g(x))^2
\end{align}

For the covariate test, we solve the optimization problem~\eqref{eq:optcovariate} after plugging in $\hat \CL_0$ and $\hat \pi$. We find values of $\omega,\lambda$ by grid search that maximize the objective for detectors of the form,
\begin{align}
    \label{eq:detectoraggcovariate}
    h_{A}^{(\omega, \lambda)} = \left\{x:
    \left (\hat{Z}_0(x) - \lambda \right) \left(\hat\pi(x) \omega - 1\right)
    \ge \tau
    \right\}.
\end{align}

\textbf{Detailed test.} For outcome test, we simply plug-in the estimated nuisance parameters into the MEE and threshold it,
\begin{align}
    \label{eq:detectordetailoutcome}
    h_A(x) = \ind{\hat \CL_1(x) - \hat \CL_s(x) - \tau > 0}.
\end{align}

For the covariate test, we grid search for $\omega,\lambda$ values to optimize the detailed test counterpart of~\eqref{eq:optcovariate} with detectors of the form,
\begin{align}
    \label{eq:detectordetailcovariate}
    h_{s,A}^{(\omega, \lambda)} = \left\{x:
    \left (\hat{Z}_0(x) - \lambda \right) \left(\hat\pi_A(x) \omega - \hat\pi_{s,A}(x)\right)
    \ge \tau \hat\pi_{s,A}(x)
    \right\}.
\end{align}

\subsection{Estimating the Maximum conditional Expectation of the Exceedence (McEE)}
\label{app:mee}
The main manuscript discusses the testing procedure in terms of the MEE for ease of exposition.
In practice, to maximize statistical power for the hypothesis tests in SHIFT, it is better to construct estimators of the Maximum conditional Expectation of the Exceedence (McEE), as its efficient estimator has lower variance.
Below, we present the inference procedure for McEE on the held-out data, which is slightly more complex than that for MEE because it involves ratios instead.

We first present plug-in estimators for McEE, which are key quantities in the tests (Table~\ref{tab:plugin}). However, as noted earlier, plug-in estimates have been shown to give biased estimates for estimands involving infinite-dimensional quantities like outcome models or density ratios, and using flexible ML estimators for the nuisance parameters do not help remove the bias \citep{Chernozhukov2018-dg}. For this reason, the plug-in estimators cannot be readily used to perform valid tests. We propose to debias the estimates by \textit{one-step correction}, also known as double/debiased estimation. Debiased estimators can be shown to be asymptotically normal and give valid tests. Note that the estimators for aggregate are special cases of the ones for detailed tests with $X_s$ set to an empty set. Therefore, we only present the estimators for the detailed tests and then analyze their theoretical properties.

\begin{table}[htbp!]
    \caption{Plug-in and debiased estimates of restricted MEE for detector $h$ given the fitted nuisance parameters $\hat \eta$. We recommend using the debiased estimates because they can give valid tests.}
    \label{tab:plugin}
    \centering
    \begin{tabular}{p{3cm}p{9cm}p{3cm}}
    \toprule
        Type of test & Plug-in estimator & Debiased estimator \\
        \midrule
        Aggregate outcome & \begin{align*}
            \mathbb{P}_{1,n}[(\ell(Y,f(X)) - \hat \CL_0(X) - \tau)h_A(X)]/\mathbb{P}_{1,n}[h(X)]
        \end{align*} & same as $\mcee_{\Y}(\emptyset)$\\
        Aggregate covariate & \begin{align*}
            \mathbb{P}_{1,n} \hat \CL_0(x)h_A(x)/\mathbb{P}_{1,n} h_A(x) - \mathbb{P}_{0,n}[\ell h_A(x)]/\mathbb{P}_{0,n}[h_A(x)] - \tau
        \end{align*} & same as $\mcee_{\X}(\emptyset)$ \\
        Detailed outcome, $\mcee_{\Y}(s)$ & \begin{align*}
            \mathbb{P}_{1,n}[(\ell(Y,f(X)) - \hat \CL_s(X) - \tau)h_A(X)]/\mathbb{P}_{1,n}[h_A(X)]
        \end{align*} & given in~\eqref{eq:debiasoutcome} \\
        Detailed covariate, $\mcee_{\X}(s)$ & \begin{align*}
            \mathbb{P}_{1,n} \hat \CL_0(x)h_A(x)/\mathbb{P}_{1,n} h_A(x) - \mathbb{P}_{0,n}[\hat \pi_{s,A}(x) \ell h_A(x)]/\mathbb{P}_{0,n}[h_A(x)] - \tau
        \end{align*} & given in~\eqref{eq:debiascovariate} \\
        \bottomrule
    \end{tabular}
\end{table}

\textbf{Detailed test for outcome shift.} 
As discussed in Sec~\ref{sec:inference}, the MEE in the case of detailed outcome shift is not pathwise differentiable because it depends on an indicator function (in $\CL_s(x)$). Therefore, we follow the binning trick of \citet{Singh2024-np} to change MEE into a pathwise differentiable quantity. Recall that $X_s$-specific outcome shifts are defined as a function of $\mu_0(x)$. We change it to depend on a binned version, $\mu_\binned(x) = \frac{1}{B}\lfloor \mu_0(x)B + \frac{1}{2} \rfloor$ for some fixed $B\in \mathbb{Z}_{+}$. It discretized $\mu_0(x)$ into $B$ equally spaced bins in $[0,1]$. Thus, the binned MEE is defined as a function of $\CL_{s,\binned}=\sum_y \ell(y,f(x))p_s(y|x_s,\mu_\binned(x))$. As long as the binned $\mu_\binned$ does not fall on the bin edges (almost surely), the indicator involved in the binned MEE is zero (almost surely). Thus, ensuring that MEE is pathwise differentiable. We expect the binned and original version of MEE to be similar for a large enough number of bins. We can now define the estimator.

The debiased estimator for MEE has the form of a V-statistic. We denote V-statistic by the operator $\mathbb{P}_{1,n} \tilde{\mathbb{P}}_{1,n}$, which takes an average over all pairs of observations with replacement. That is, V-statistic over observations $O_1,O_2,\dots,O_n$ is computed as $\frac{1}{n^2}\sum_{i=1}^n \sum_{j=1}^n v(O_i, O_j)$ for some function $v$. The tilde notation $\tilde{\mathbb{P}}_{1,n}$ is used to distinguish between the arguments to $v$. Debiased estimator for MEE given the fitted nuisance parameters $\hat \eta$ is expressed as a ratio $\widehat{\mcee}_{\Y}(s) = \widehat{\mcee}_{\Y}^{\num}(s)/\mathbb{P}_{n}[h_A(x)]$ where $\widehat{\mcee}_{\Y}^{\num}(s)$ is
\begin{align}
    \label{eq:debiasoutcome}
    \widehat{\mcee}_{\Y}^{\num}(s) &= \mathbb{P}_{1,n} \tilde {\mathbb{P}}_{1,n}
            \ell(Y,f(X_s, \tilde X_{-s}))h_A(X_s, \tilde X_{-s}) \hat \pi_V(X_s, \tilde X_{-s}| \hat \mu_\binned)\\
            &+ \mathbb{P}_{1,n} \sum_{y} \ell(y,f(X))h_A(X) \hat p_1(y|X_s, \hat \mu_\binned(X))\\
            &- \mathbb{P}_{1,n} \sum_{y} \ell(y,f(X_s, \tilde X_{-s})h_A(X_s, \tilde X_{-s})\hat \pi_V(X_s, \tilde X_{-s} | \hat \mu_\binned(X)) \hat p_1(y|X_s, \hat \mu_\binned(X))
\end{align}

\textbf{Detailed test for covariate shift.} Define the conditional loss function $\CL_{0,s}(x) = \E_0[\ell | x_s, h_A(x)]$ and its estimate as $\hat \CL_{0,s}$.
Debiased estimator for MEE in the case of covariate shift given the fitted nuisance parameters $\hat \eta$ is expressed as a ratio $\widehat{\mcee}_{\X}(s) = \widehat{\mcee}_{\X}^{\num}(s)/\mathbb{P}_{n}[h_A(x)]$ where $\widehat{\mcee}_{\X}^{\num}(s)$ is
\begin{align}
    \label{eq:debiascovariate}
    \widehat{\mcee}_{\X}^{\num}(s) 
    &= \mathbb{P}_{1,n}[\hat \CL_0(x) h_A(x)] 
    - \mathbb{P}_{0,n}[\ell\hat \pi_{s,A}(x)]\\
    &+ \mathbb{P}_{0,n}[(\ell - \hat \CL_0(x))\hat \pi_A(x)]\\
    &+ \mathbb{P}_{0,n}[\hat \CL_{0,s}(x)\hat \pi_{s,A}(x)]
    - \mathbb{P}_{1,n}[\hat \CL_{0,s}(x)]
\end{align}

\subsection{Inference}
\label{app:inference}

For inference, we use the Gaussian multiplier bootstrap method to construct bootstrap distributions for the test statistic ($\mcee$) under the null \citep{hsu2017consistent}. Each bootstrap sample, involves sampling $n$ variables distributed as standard normal, $\xi_1, \dots, \xi_n\sim N(0,1)$ and recomputing the centered $\mcee$ with $\xi$ as per-sample weights. We construct the $p$-value as the proportion of times the bootstrap test statistics exceeds the observed test statistic.

\section{Theoretical results}
\label{app:proof}

Here, we provide details of type-I error and power guarantees of our testing procedures for the four hypotheses, two aggregate (covariate/outcome) and two detailed level (covariate/outcome for a subset). Specifically, we analyze the $\mcee$ for the restricted version of the tests with a singleton detector $\{h_A\}$. Extension to maxima over multiple detectors follows from Gaussian approximations of maxima over averages (e.g. \citet{chernozhukov2013gaussian}).

\textbf{Outline.} At a high-level, we show that our debiased estimators of $\mcee$ are asymptotically linear. It implies that they converge to a normal distribution centered at the true $\mcee$. Because of the normality, we can conduct inference using standard tests such as z-test or Wald test for the null hypothesis $\mcee$ $\leq 0$. Properties of type-I error control and power will follow. So, the key is to show asymptotic linearity of the estimators.

We first give an outline of the linearity analysis, which applies the techniques developed for analyzing V-statistics \citep{Van_der_Vaart1998-cj}.
Our estimators of $\mcee$, \eqref{eq:debiasoutcome} and \eqref{eq:debiascovariate}, are one-step corrected estimators which start from a plug-in estimate, $\mcee(\hat{\mathbb{P}})$, and debias it by adding its canonical gradient $\mathbb{P}_{n} \psi(o;\hat{\mathbb{P}})$ (also called an influence function).
$$
\mcee(\hat{\mathbb{P}}) + \mathbb{P}_n \psi(o;\hat{\mathbb{P}})
$$
Here, the canonical gradient $\psi$ is a function of the probability distribution $\mathbb{P}$ over the observations $O=(X,Y,D)$. Following the von-Mises expansion of $\mcee(\hat{\mathbb{P}})$, the bias of the one-step corrected estimator can be decomposed into three terms,
\begin{align*}
    &\left(\mcee(\hat{\mathbb{P}}) + \mathbb{P}_n \psi(o;\hat{\mathbb{P}})\right) - \mcee(\mathbb{P})\\
    &= (\mathbb{P}_n - \mathbb{P})\psi(o;\hat{\mathbb{P}}) + R(\hat{\mathbb{P}},\mathbb{P})\\
    &= (\mathbb{P}_n - \mathbb{P})\psi(o;\mathbb{P}) + (\mathbb{P}_n - \mathbb{P})(\psi(o;\hat{\mathbb{P}}) - \psi(o;\mathbb{P})) + R(\hat{\mathbb{P}},\mathbb{P})
\end{align*}
for a second-order remainder term $R(\hat{\mathbb{P}},\mathbb{P})$.
The goal of our analysis is to show that each of the terms is negligible at $o_p(n^{-1/2})$ rate such that we get an asymptotically linear representation of the one-step corrected estimator.
$$
\left(\mcee(\hat{\mathbb{P}}) + \mathbb{P}_n \psi(o;\hat{\mathbb{P}})\right)- \mcee(\mathbb{P}) =  \mathbb{P}_n \psi(o;\mathbb{P}) + o_p(n^{-1/2}).
$$
The form implies that the one-step corrected estimator is asymptotically normal with mean $\mcee(\mathbb{P})$ and variance $var(\psi(o;\mathbb{P}))/n$, which allows constructing valid tests such as z-test or Wald test.

We present the theoretical results for detailed tests in the next section.

\subsection{Detailed test of outcome shift}

For pathwise differentiability of the $\mcee$, we require a mild condition to hold for the binned outcome shift $\mu_\binned$. It states that the true shifted probabilities $\mu_\binned$ lie \textit{inside} the bins almost surely.
Additionally, we require that the nuisance parameters are consistently estimated. It is possible to ensure consistency by using nonparametric ML estimators. 
Most importantly, we require that the product of estimation errors in the V-statistic density ratio model $\hat \pi_V$ and the $X_s$-specific outcome shift model converges at $o_p(n^{-1/2})$ rate. In the average treatment effect literature, similar assumptions is made for the outcome and treatment propensity models \citep{Chernozhukov2018-dg}. Note that this condition is significantly milder than requiring that both the models have fast convergence. The condition holds as long as we can guarantee $o_p(n^{-1/4})$ rate of convergence for the models.
\begin{condition}
    \label{eq:outcomecondition}
    For variable subset $s$, assume the following holds:
    \begin{itemize}
        \item (binning) For all bin edges $b$ of $\mu_0(x)$ except $\{0,1\}$, the set $\{x: |\mu_0(x) - b| \leq \epsilon\}$ is measure zero for some $\epsilon$. 
        \item (consistency) Nuisance parameter estimates $\hat \mu_\binned, \hat \pi_V, \hat p_s$ are consistent.
        \item (product of errors) $\mathbb{P}_1 (\hat \pi_V - \pi_V)(\hat p_s - p_s) = o_p(n^{-1/2})$.
    \end{itemize}
\end{condition}

\begin{theorem}
    \label{thm:detailedoutcome}
    Suppose Condition~\ref{eq:outcomecondition} holds. Then the estimator $\widehat{\mcee}_{\Y}(s)$ follows a normal distribution asymptotically, centered at the estimand $\mcee_{\Y}(s)$.
\end{theorem}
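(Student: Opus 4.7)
The plan is to show that $\widehat{\mcee}_{\Y}^{\num}(s)$ is asymptotically linear with influence function $\psi(\cdot;\mathbb{P})$, and then conclude asymptotic normality of the ratio estimator $\widehat{\mcee}_{\Y}(s) = \widehat{\mcee}_{\Y}^{\num}(s)/\mathbb{P}_n[h_A(X)]$ via the delta method, since the denominator converges in probability to a positive constant (under the minimum prevalence assumption $\epsilon>0$). The target of inference must first be made pathwise differentiable: this is precisely the role of the binning trick, so the very first step is to verify that replacing $\mu_0$ by $\mu_\binned$ renders $\mcee_{\Y}^{\num}(s;\mathbb{P})$ pathwise differentiable at $\mathbb{P}$. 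The binning condition forces the preimage of each bin edge (other than $\{0,1\}$) under $\mu_0$ to have measure zero, so perturbations of $\mathbb{P}$ along regular parametric submodels leave the piecewise-constant $\mu_\binned(x)$ unchanged almost surely. One can then derive the canonical gradient $\psi$ using standard semiparametric calculus; its V-statistic form is dictated by the fact that $\pi_V$ marginalizes over an independent copy $\tilde{X}_{-s}$.

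Second, I would apply the von Mises expansion indicated in the outline, decomposing the bias of the one-step corrected estimator as
\begin{align*}
\bigl(\mcee_{\Y}^{\num}(s;\hat{\mathbb{P}}) + \mathbb{P}_n\psi(o;\hat{\mathbb{P}})\bigr) - \mcee_{\Y}^{\num}(s;\mathbb{P}) = \mathbb{P}_n\psi(o;\mathbb{P}) + T_1 + T_2,
\end{align*}
where $T_1 = (\mathbb{P}_n-\mathbb{P})(\psi(o;\hat{\mathbb{P}})-\psi(o;\mathbb{P}))$ and $T_2 = R(\hat{\mathbb{P}},\mathbb{P})$. The drift term $T_1$ is handled by sample splitting: because the nuisance estimates $\hat{\mu}_\binned,\hat{\pi}_V,\hat{p}_s$ are fit on the training partition and evaluated on an independent partition, conditioning on the training data the centered empirical process has mean zero and variance shrinking to zero under the consistency condition, so $T_1=o_p(n^{-1/2})$ by Chebyshev. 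For the second-order remainder $T_2$, the key algebraic step is to identify it as a bilinear form in $(\hat{\pi}_V-\pi_V)$ and $(\hat{p}_s-p_s)$ (with the dependence on $\hat{\mu}_\binned$ disappearing modulo negligible bin-boundary terms thanks to the binning condition), so the product-of-errors assumption $\mathbb{P}_1(\hat{\pi}_V-\pi_V)(\hat{p}_s-p_s) = o_p(n^{-1/2})$ yields $T_2 = o_p(n^{-1/2})$ directly.

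The principal obstacle I expect is controlling the V-statistic structure together with the non-smooth indicator in $h_A$. The V-statistic term $\mathbb{P}_{1,n}\tilde{\mathbb{P}}_{1,n}$ in $\widehat{\mcee}_{\Y}^{\num}(s)$ does not, at first glance, reduce to a linear empirical average; a Hoeffding decomposition is required to separate the projection onto a sum of i.i.d.\ terms (which supplies the linear part of $\psi$) from the degenerate second-order kernel, which must then be shown to be $O_p(n^{-1})$ and hence negligible. Simultaneously, $h_A$ depends on an estimated threshold, so the indicator is non-Hadamard-differentiable as a map of $\mathbb{P}$; here I would again use the binning condition together with a local margin argument showing that the measure of $x$ values near the decision boundary of $h_A$ shrinks as $\hat{h}_A \to h_A$, so the boundary contribution to the remainder vanishes faster than $n^{-1/2}$.

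Combining the three bounds yields $\widehat{\mcee}_{\Y}^{\num}(s) - \mcee_{\Y}^{\num}(s) = \mathbb{P}_n\psi(o;\mathbb{P}) + o_p(n^{-1/2})$, so the CLT gives asymptotic normality of the numerator around $\mcee_{\Y}^{\num}(s)$ with variance $\mathrm{Var}(\psi(O;\mathbb{P}))/n$. A final application of the delta method, using $\mathbb{P}_n[h_A(X)] \to \E[h_A(X)] \ge \epsilon > 0$, transfers the normal limit to the ratio $\widehat{\mcee}_{\Y}(s)$ centered at $\mcee_{\Y}(s)$, completing the proof.
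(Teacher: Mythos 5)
Your proposal follows essentially the same route as the paper: asymptotic linearity of the numerator via a one-step/von Mises expansion, with the V-statistic handled by a H\'ajek (Hoeffding) projection, the empirical-process drift term killed by sample splitting and consistency of the nuisance estimates, the second-order remainder reduced to the bilinear form $(\hat\pi_V-\pi_V)(\hat p_s-p_s)$ controlled by the product-of-errors condition, and a final delta-method step for the ratio. The only divergence is your worry about non-Hadamard-differentiability of $h_A$: in the paper's restricted test the detector is fit on the training partition and then treated as a fixed measurable function, so the estimand is the restricted MEE for that singleton detector and no local margin argument is needed --- the binning condition only has to handle the indicators hidden inside $\CL_s$.
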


To prove the above theorem, recall that $\mcee_{\Y}(s)$ is defined as the ratio, $\mcee_{\Y}^{\num}(s)/\mathbb{P}_{n}[h_A(x)]$. The following lemma will first show that the estimator for the numerator is asymptotically linear. Denominator can be estimated simply as an empirical average of $h_A(x)$, hence it is linear. Then, we will use the Delta Method \citep{Van_der_Vaart1998-cj} to estimate the ratio and prove it has a normal distribution asymptotically.

Suppose the nuisance parameters in $\widehat{\mcee}_{\Y}^{\num}(s)$ are $\eta_{\Y,s}^{\num} = (\pi_V, \mu_\binned, p_s)$. We write the one-step corrected estimate of $\widehat{\mcee}_{\Y}^{\num}(s)$ as a V-statistic.
\begin{align}
    \widehat{\mcee}_{\Y}^{\num}(s) &= \mathbb{P}_{1,n} \tilde {\mathbb{P}}_{1,n}
            \ell(Y,f(X_s, \tilde X_{-s}))h_A(X_s, \tilde X_{-s}) \hat \pi_V(X_s, \tilde X_{-s}| \hat \mu_\binned)\nonumber\\
            &+ \mathbb{P}_{1,n} \tilde {\mathbb{P}}_{1,n} \sum_{y} \ell(y,f(X))h_A(X) \hat p_1(y|X_s, \hat \mu_\binned(X))\nonumber\\
            &- \mathbb{P}_{1,n} \tilde {\mathbb{P}}_{1,n} \sum_{y} \ell(y,f(X_s, \tilde X_{-s})h_A(X_s, \tilde X_{-s})\hat \pi_V(X_s, \tilde X_{-s} | \hat \mu_\binned(X)) \hat p_1(y|X_s, \hat \mu_\binned(X))\nonumber\\
            &=: \mathbb{P}_{1,n} \tilde {\mathbb{P}}_{1,n} v(X,Y,\tilde X,\tilde Y; \hat \eta_{\Y,s}^{\num}) \label{eq:outcome_v}
\end{align}

\begin{lemma}
    \label{lemma:detailedoutcome}
    Assuming Condition~\ref{eq:outcomecondition} holds, $\widehat{\mcee}_{\Y}^{\num}(s)$ is an asymptotically linear estimator for $\mcee_{\Y}^{\num}(s)$
    $$
    \widehat{\mcee}_{\Y}^{\num}(s) - \mcee_{\Y}^{\num}(s) = \mathbb{P}_{1,n} \psi(x,y;\hat \eta_{\Y,s}^{\num}) + o_p(n^{-1/2})
    $$
    for the influence function
    \begin{align}
        \label{eq:outcomeinfluence}
        \psi(x,y;\hat \eta_{\Y,s}^{\num}) &= \mathbb{P}_{1}
            \ell(y,f(x_s, X_{-s}))h_A(x_s, X_{-s}) \pi_V(x_s, X_{-s}| \mu_\binned)\\
            &+ \sum_{\tilde y} \ell(\tilde y,f(x))h_A(x)p_1(\tilde y|x_s, \mu_\binned(x))\\
            &- \sum_{\tilde y} \ell(\tilde y,f(x_s, \tilde x_{-s})h_A(x_s, \tilde x_{-s}) \pi_V(X_s, \tilde X_{-s} | \mu_\binned(X)) p_1(\tilde y|x_s, \mu_\binned(x))\\
    \end{align}
\end{lemma}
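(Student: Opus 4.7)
The plan is to establish asymptotic linearity of the V-statistic estimator in \eqref{eq:outcome_v} by combining a von Mises expansion of $\mcee_{\Y}^{\num}(s)$ around the true nuisance $\eta_{\Y,s}^{\num}$ with a Hoeffding decomposition of the V-statistic. First I would verify that, under the binning half of Condition~\ref{eq:outcomecondition}, the functional $\mcee_{\Y}^{\num}(s)$ is pathwise differentiable: because the set $\{x:|\mu_0(x)-b|\le\epsilon\}$ has measure zero at every interior bin edge $b$, the only nondifferentiable pieces (the indicators hidden inside $\CL_{s,\binned}$) contribute measure-zero discontinuities under the submodel perturbations, so a standard Gateaux derivative calculation along one-dimensional parametric submodels yields the canonical gradient $\psi$ in \eqref{eq:outcomeinfluence}.

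Next I would write the one-step corrected estimator as a V-statistic and do the standard von Mises expansion
\begin{equation*}
\widehat{\mcee}_{\Y}^{\num}(s) - \mcee_{\Y}^{\num}(s) \;=\; (\mathbb{P}_{1,n}-\mathbb{P}_1)\,\psi(\cdot;\eta_{\Y,s}^{\num}) \;+\; T_{\mathrm{ep}} \;+\; T_{V} \;+\; R_{2},
\end{equation*}
where $T_{\mathrm{ep}}=(\mathbb{P}_{1,n}-\mathbb{P}_1)(\psi(\cdot;\hat\eta_{\Y,s}^{\num})-\psi(\cdot;\eta_{\Y,s}^{\num}))$ is the empirical process drift, $T_V$ is the degenerate-kernel remainder from passing between the V-statistic $\mathbb{P}_{1,n}\tilde{\mathbb{P}}_{1,n}v$ and its Hoeffding projection onto marginal one-sample functions, and $R_2$ is the second-order bias from the functional expansion. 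The goal is then to show each of $T_{\mathrm{ep}},T_V,R_2$ is $o_p(n^{-1/2})$. For $T_{\mathrm{ep}}$ I would invoke sample splitting: since $\hat\eta_{\Y,s}^{\num}$ is estimated on an independent training fold, conditional on that fold the class collapses to a singleton, and consistency of $\hat\mu_\binned,\hat\pi_V,\hat p_s$ (Condition~\ref{eq:outcomecondition}) combined with a Chebyshev bound on the conditional variance gives the rate. For $T_V$ I would use that the diagonal contribution of a V-statistic is $O_p(1/n)$ and the projected non-degenerate kernel is square-integrable, so the second-order U-statistic term is $O_p(1/n)$ as well.

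The crux is the second-order remainder $R_2$. Expanding $\mcee_{\Y}^{\num}$ along the three arguments $(\pi_V,\mu_\binned,p_s)$ in $\hat\eta_{\Y,s}^{\num}$, the first-order pieces are cancelled exactly by adding the canonical gradient (this is the whole point of one-step correction), so what is left is a sum of bilinear-in-error terms. The dominant cross-term has the schematic form
\begin{equation*}
R_2 \;\approx\; \mathbb{P}_1\bigl[(\hat\pi_V-\pi_V)(\hat p_s - p_s)\cdot \ell\,h_A\bigr] \;+\; \text{terms involving } (\hat\mu_\binned-\mu_\binned),
\end{equation*}
and this is exactly where the product-of-errors assumption kicks in to give $o_p(n^{-1/2})$; the $\mu_\binned$ pieces collapse because of the binning condition (perturbations of $\hat\mu_\binned$ only matter on a measure-zero set almost surely). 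Combining the four displays yields $\widehat{\mcee}_{\Y}^{\num}(s)-\mcee_{\Y}^{\num}(s)=\mathbb{P}_{1,n}\psi(\cdot;\eta_{\Y,s}^{\num})+o_p(n^{-1/2})$, which is the desired asymptotic linearity, and the replacement of $\eta_{\Y,s}^{\num}$ by $\hat\eta_{\Y,s}^{\num}$ in the statement is justified by the same empirical-process argument used for $T_{\mathrm{ep}}$.

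The main obstacle I anticipate is the careful bookkeeping in $R_2$: because the V-statistic kernel involves the density ratio $\pi_V$ multiplied by an outcome model $p_s$ evaluated at a binned transformation of $\mu_0$, the functional derivative generates cross-terms in all three pairs $(\pi_V,p_s)$, $(\pi_V,\mu_\binned)$, and $(p_s,\mu_\binned)$. Only the first pair is controlled by an explicit product-rate assumption, so one has to use the binning condition to argue that derivatives with respect to $\mu_\binned$ vanish almost surely, and not merely in expectation; this is what makes the binning trick indispensable rather than cosmetic.
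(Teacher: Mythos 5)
Your proposal follows essentially the same route as the paper's proof: the paper also decomposes the bias into (i) the gap between the V-statistic and its H\'ajek (Hoeffding) projection, (ii) an empirical-process drift term killed by sample splitting and consistency, (iii) the leading CLT term, and (iv) a second-order remainder that, after iterated expectations and the definition of $\pi_V$, collapses to $\mathbb{P}_1\tilde{\mathbb{P}}_1\sum_y \ell\, h_A\,(\pi_V-\hat\pi_V)(\hat p_1-p_1)$ and is controlled by the product-of-errors condition. Your explicit separation of the $\mu_{\binned}$ cross-terms is slightly more careful than the paper, which folds the $\hat\mu_{\binned}$ error into the composed nuisance estimates, but the argument is the same.
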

\begin{proof}
Define the symmetrized version of $v$ in~\eqref{eq:outcome_v} as $v_{sym}(X,Y,\tilde X,\tilde Y)=\frac{v(X,Y,\tilde X,\tilde Y)+v(\tilde X,\tilde Y,X,Y)}{2}$. Then,
we rewrite the estimator as 
$$
\widehat{\mcee}_{\Y}^{\num}(s)
=\mathbb{P}_{1,n} \tilde{\mathbb{P}}_{1,n}
v_{sym}\left(X,Y,\tilde X,\tilde Y;\hat{\eta}_{\Y,s}^{\num}\right).
$$

Following Theorem 12.3 in \citep{Van_der_Vaart1998-cj}, the H\'ajek projection of $\widehat{\mcee}_{\Y}^{\num}(s)$ is
\begin{align*}
\hat{u}_{\Y}^{\num}(s) & =\sum_{i=1}^{n}\mathbb{P}_{1}\left[\mathbb{P}_{1,n}\tilde{\mathbb{P}}_{1,n}v_{sym}\left(X,Y,\tilde X,\tilde Y;\hat \eta_{\Y,s}^{\num}\right)-\bar{\widehat{\mcee}}_{Y}^{\num}(s)\mid X_{i},Y_{i}\right]\\
 & =\sum_{i=1}^{n}\mathbb{P}_{1}\left[v_{sym}\left(X_{i},Y_{i},X^{(2)},Y^{(2)};\hat \eta_{\Y,s}^{\num}\right)-\bar{\widehat{\mcee}}_{Y}^{\num}(s)\mid X_{i},Y_{i}\right]\\
 & =\sum_{i=1}^{n}{v}_{sym,1}\left(X_{i},Y_{i};\hat \eta_{\Y,s}^{\num} \right)
\end{align*}
where $\bar{\widehat{\mcee}}_{Y}^{\num}(s) = \mathbb{P}_1\tilde{\mathbb{P}}_1 v_{sym}\left(X,Y,\tilde X,\tilde Y;\hat \eta_{\Y,s}^{\num}\right)$.

Decompose the bias of the estimator as,
\begin{align}
\widehat{\mcee}_{\Y}^{\num}(s) - \mcee_{\Y}^{\num}(s) =& \mathbb{P}_{1,n}\tilde{\mathbb{P}}_{1,n}v_{sym}\left(X,Y,\tilde X,\tilde Y;\hat{\eta}_{\Y,s}^{\num}\right)-\mathbb{P}_{1} \tilde{\mathbb{P}}_{1}v_{sym}\left(X,Y,\tilde X,\tilde Y;\eta_{\Y,s}^{\num}\right)\nonumber \\
= & \mathbb{P}_{1,n}\tilde{\mathbb{P}}_{1,n}{v}_{sym}\left(X,Y,\tilde X,\tilde Y;\hat{\eta}_{\Y,s}^{\num}\right)-\mathbb{P}_{1,n}\left[{h}_{sym,1}\left(X,Y;\hat{\eta}_{\Y,s}^{\num}\right)+\bar{\widehat{\mcee}}_{\Y}^{\num}(s)\right]
\label{eq:b1_outcome}\\
 & +\left(\mathbb{P}_{1,n}-\mathbb{P}_{1}\right)\left({v}_{sym,1}\left(X,Y;\hat \eta_{\Y,s}^{\num}\right)+\bar{\widehat{\mcee}}_{\Y}^{\num}(s)-v_{sym,1}\left(X,Y\right)-\mcee_{\Y}^{\num}(s)\right)
 \label{eq:b2_outcome}\\
 & +\left(\mathbb{P}_{1,n}-\mathbb{P}_{1}\right)\left(v_{sym,1}\left(X,Y\right)+\mcee_{\Y}^{\num}(s)\right)
 \label{eq:b3_outcome}\\
 & +\mathbb{P}_{1}\left({v}_{sym,1}\left(X,Y;\hat \eta_{\Y,s}^{\num}\right)+\bar{\widehat{\mcee}}_{\Y}^{\num}(s)-v_{sym,1}\left(X,Y;\eta_{\Y,s}^{\num}\right)-\mcee_{\Y}^{\num}(s)\right).
 \label{eq:b4_outcome}
 \end{align}

Now, we consider the four terms of the decomposition one-by-one.

\underline{Term \eqref{eq:b1_outcome}}: Suppose $\mathbb{P}_{1}{v}_{sym}^{2}(X,Y,\tilde{X},\tilde{Y};\hat \eta_{\Y,s}^{\num})<\infty$.
Via a straightforward extension of the proof in Theorem 12.3 in \citet{Van_der_Vaart1998-cj}, one can show that
\[
\frac{var\left(\mathbb{P}_{1,n}\tilde{\mathbb{P}}_{1,n}{v}_{sym}\left(X,Y,\tilde{X},\tilde{Y}; \hat \eta_{\Y,s}^{\num}\right)\right)}{var\left(\mathbb{P}_{1,n}{v}_{sym,1}\left(X,Y; \hat \eta_{\Y,s}^{\num}\right)\right)}\rightarrow_{p}1.
\]

Then by Theorem 11.2 in \citet{Van_der_Vaart1998-cj} and Slutsky's lemma, we have
\[
\mathbb{P}_{1,n}\tilde{\mathbb{P}}_{1,n}{v}_{sym}\left(X,Y,\tilde{X},\tilde{Y}; \hat \eta_{\Y,s}^{\num}\right)-\mathbb{P}_{1,n}\left[{v}_{sym,1}\left(X,Y; \hat \eta_{\Y,s}^{\num}\right)+\bar{\widehat{\mcee}}_{\Y}^{\num}(s)\right]=o_{p}\left(n^{-1/2}\right).
\]

\underline{Term \eqref{eq:b2_outcome}:} This term is asymptotically negligible since we perform sample splitting to estimate the nuisance
parameters and evaluate the estimator for $\widehat{\mcee}_{\Y}^{\num}(s)$ on separate datasets.
Then by Lemma 1 in \citet{Kennedy2024-kk}, we have that
\[
\left(\mathbb{P}_{1,n}-\mathbb{P}_{1}\right)\left({v}_{sym,1}\left(X,Y; \hat \eta_{\Y,s}^{\num}\right)+\bar{\widehat{\mcee}}_{\Y}^{\num}(s)-v_{sym,1}\left(X,Y; \eta_{\Y,s}^{\num}\right)-\mcee_{\Y}^{\num}(s)\right)=o_{p}(n^{-1/2})
\]
as long as the estimators for the nuisance parameters are consistent.

\underline{Term \eqref{eq:b3_outcome}:} This term is the difference between empirical and population average of a population-level quantity, hence it follows a normal distribution asymptotically by standard CLT.

\underline{Term~\eqref{eq:b4_outcome}:} We show that the term is asymptotically negligible as well given the condition on nuisance estimation errors.
\begin{align}
    & \mathbb{P}_{1} \tilde{\mathbb{P}}_{1}\left({v}_{sym}\left(X,Y,\tilde{X},\tilde{Y}; \hat \eta_{\Y,s}^{\num}\right)+\bar{\widehat{\mcee}}_{\Y}^{\num}(s)-v_{sym}\left(X,Y,\tilde{X},\tilde{Y}; \eta_{\Y,s}^{\num}\right)-\mcee_{\Y}^{\num}(s)\right)\nonumber \\
    =& \mathbb{P}_{1} \tilde{\mathbb{P}}_1 \left( v(X,Y,\tilde X,\tilde Y; \hat \eta_{\Y,s}^{\num}) - v(X,Y,\tilde X,\tilde Y; \eta_{\Y,s}^{\num} \right) \\
    =& \mathbb{P}_{1} \tilde{\mathbb{P}}_1 \ell(Y,f(X_s, \tilde X_{-s}))h_A(X_s, \tilde X_{-s}) \left( \hat \pi_V(X_s, \tilde X_{-s}| \hat \mu_\binned) - \pi_V(X_s, \tilde X_{-s}| \mu_\binned) \right)\nonumber\\
    &+ \mathbb{P}_{1} \sum_{y} \ell(y,f(X))h_A(X) \left( \hat p_1(y|X_s, \hat \mu_\binned(X)) - p_1(y|X_s, \mu_\binned(X)) \right)\nonumber\\
    &- \mathbb{P}_{1} \tilde {\mathbb{P}}_{1} \sum_{y} \ell(y,f(X_s, \tilde X_{-s})h_A(X_s, \tilde X_{-s})\\
    &\qquad\qquad\qquad\times
    \left( \hat \pi_V(X_s, \tilde X_{-s} | \hat \mu_\binned(X)) \hat p_1(y|X_s, \hat \mu_\binned(X)) - \pi_V(X_s, \tilde X_{-s} | \mu_\binned(X)) p_1(y|X_s, \mu_\binned(X)) \right)
\end{align}
By the law of iterated expectation, we can expand the first summand as,
\begin{align}
    &= \mathbb{P}_{1} \tilde{\mathbb{P}}_1 \ell(Y,f(X_s, \tilde X_{-s}))h_A(X_s, \tilde X_{-s}) \left( \hat \pi_V(X_s, \tilde X_{-s}| \hat \mu_\binned) p_1(y|X_s, \mu_\binned(X)) - \pi_V(X_s, \tilde X_{-s}| \mu_\binned) p_1(y|X_s, \mu_\binned(X)) \right)\nonumber\\
    &+ \mathbb{P}_{1} \sum_{y} \ell(y,f(X))h_A(X) \left( \hat p_1(y|X_s, \hat \mu_\binned(X)) - p_1(y|X_s, \mu_\binned(X)) \right)\nonumber\\
    &- \mathbb{P}_{1} \tilde {\mathbb{P}}_{1,n} \sum_{y} \ell(y,f(X_s, \tilde X_{-s})h_A(X_s, \tilde X_{-s})\\
    &\qquad\qquad\qquad\times\left( \hat \pi_V(X_s, \tilde X_{-s} | \hat \mu_\binned(X)) \hat p_1(y|X_s, \hat \mu_\binned(X)) - \pi_V(X_s, \tilde X_{-s} | \mu_\binned(X)) p_1(y|X_s, \mu_\binned(X)) \right)\nonumber
\end{align}
Cancelling terms from the first and third estimand gives us,
\begin{align}
    =& -\mathbb{P}_{1} \tilde{\mathbb{P}}_1 \ell(Y,f(X_s, \tilde X_{-s}))h_A(X_s, \tilde X_{-s}) \hat \pi_V(X_s, \tilde X_{-s}| \hat \mu_\binned) \left( \hat p_1(y|X_s, \hat \mu_\binned(X)) - p_1(y|X_s, \mu_\binned(X)) \right)\nonumber\\
    &+ \mathbb{P}_{1} \sum_{y} \ell(y,f(X))h_A(X) \left( \hat p_1(y|X_s, \hat \mu_\binned(X)) - p_1(y|X_s, \mu_\binned(X)) \right)\nonumber
\end{align}
By the definition of the density ratio $\pi_V$, the second summand can written as a V-statistic.
\begin{align}
    &= -\mathbb{P}_{1} \tilde{\mathbb{P}}_1 \ell(Y,f(X_s, \tilde X_{-s}))h_A(X_s, \tilde X_{-s}) \hat \pi_V(X_s, \tilde X_{-s}| \hat \mu_\binned) \left( \hat p_1(y|X_s, \hat \mu_\binned(X)) - p_1(y|X_s, \mu_\binned(X)) \right)\nonumber\\
    &+ \mathbb{P}_{1} \tilde{\mathbb{P}}_{1} \sum_{y} \ell(y,f(X))h_A(X) \pi_V(X_s, \tilde X_{-s}| \mu_\binned(X)) \left( \hat p_1(y|X_s, \hat \mu_\binned(X)) - p_1(y|X_s, \mu_\binned(X)) \right)\nonumber
\end{align}
Collecting the common terms, the sum simplifies to
\begin{align}
    &= \mathbb{P}_{1} \tilde{\mathbb{P}}_{1} \sum_{y} \ell(y,f(X))h_A(X)\\
    &\qquad\qquad\qquad\times\left( \pi_V(X_s, \tilde X_{-s}| \mu_\binned(X)) - \hat \pi_V(X_s, \tilde X_{-s}| \hat \mu_\binned(X)) \right) \left( \hat p_1(y|X_s, \hat \mu_\binned(X)) - p_1(y|X_s, \mu_\binned(X)) \right),\nonumber
    \end{align}
which is $o_p(n^{-1/2})$ by Condition~\eqref{eq:outcomecondition} on product of errors in $\hat \pi_V$ and $\hat p_s$ models.

Hence, the bias of the estimator is $o_p(n^{-1/2})$.
\end{proof}

\begin{proof}[Proof of Theorem~\ref{thm:detailedoutcome}]
Applying Delta Method to the numerator and denominator estimates, which we know to be asymptotically linear from Lemma~\ref{lemma:detailedoutcome}, we get that the estimator $\widehat \mcee_{\Y}(s) = \widehat \mcee_{\Y}^{\num}(s)/\mathbb{P}_{1,n}[h_A(x)]$ is asymptotically linear,
$$
\widehat \mcee_{\Y}^{\num}(s)/\mathbb{P}_{1,n}[h_A(x)] - \mcee_{\Y}^{\num}(s)/\mathbb{P}_{1}[h_A(x)] = \mathbb{P}_{1,n}\psi_{\Y,s}(X,Y,\eta_{\Y,s}^{\num}) + o_p(n^{-1/2}),
$$
with the influence function
\begin{align*}
    \psi_{\Y,s}(X,Y;\eta_{\Y,s}^\num) =\frac{1}{\mathbb{P}_{1}[h_A(x)]} \psi_{\Y,s}^{\num}(X,Y;\eta_{\Y,s}^\num) - \frac{\mcee_{\Y}^{\num}(s)}{(\mathbb{P}_{1}[h_A(x)])^2} h_A(x)
\end{align*}
where $\psi_{\Y,s}^{\num}(X,Y;\eta_{\Y,s}^\num)$ is defined in~\eqref{eq:outcomeinfluence}.

As a consequence, the estimator is asymptotically normal,
\begin{align}
    \sqrt{n} \left(
        \widehat{\mcee}_{\Y}(s) - \mcee_{\Y}(s)
        \right) \rightarrow_d
        N(0, \text{var}(\psi_{\Y,s}(X,Y;\eta_{\Y,s}^{\num}))
\end{align}
\end{proof}
We can estimate the variance from the data to construct valid tests, for example a z-test or a Wald test. Alternatively, we can use a multiplier bootstrap which bypasses estimating the variance.

\subsection{Detailed test of covariate shift}

The form of the debiased estimator $\mcee_{\X}$ is similar to those in indirect effects literature and follow straightforwardly from the approach outlined in the previous section. Therefore, we omit derivations for asymptotic linearity. See Lemma G.2 in \citet{Singh2024-np} for an illlustration.

We require the following conditions for asymptotic linearity. 
For the density ratios $\pi_{A}, \pi_{s,A}$ to be well-defined, we need the support of covariates in the source domain to be larger than the support in the target domain. This can be ensured by restricting the target domain to the common support \citep{Cai2023-ov}.
Importantly, we require that only the product of estimation errors of the nuisance parameters is asympotically negligible at $o_p(n^{-1/2})$. Therefore, even though the estimator involves fitting many models, debiasing relaxes the conditions required on their estimation errors.

\begin{condition}
    \label{eq:covariatecondition}
    Assume the following holds,
    \begin{itemize}
        \item (consistency) Nuisance parameter estimates $\hat \CL_0, \hat \CL_{0,s}, \hat \pi_{A}, \hat \pi_{s,A}$ are consistent.
        \item (contiguity) $p_0(x) > 0$ whenever $p_1(x) > 0$.
        \item (product of errors in $\hat \CL_0$ and $\hat \pi_A$) $\mathbb{P}_{0} (\CL_0(x) - \hat \CL_0(x)) (\hat \pi_A(x) - \pi_A(x)) = o_p(n^{-1/2})$
        \item (product of errors in $\hat \CL_{0,s}$ and $\hat \pi_{s,A}$) $\mathbb{P}_0 (\hat \CL_{0,s} - \CL_{0,s}) (\hat \pi_{s,A} - \pi_{s,A}) = o_p(n^{-1/2})$.
    \end{itemize}
\end{condition}

\begin{theorem}
    \label{thm:detailedcovariate}
    Given Condition~\eqref{eq:covariatecondition} holds, the estimator $\widehat{\mcee}_{\X}(s)=\widehat{\mcee}_{\X}^{\num}(s)/\mathbb{P}_{n}[h_A(x)]$ defined in~\eqref{eq:debiascovariate} follows a normal distribution asymptotically, centered at the estimand $\mcee_{\X}(s)$.
\end{theorem}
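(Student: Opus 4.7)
The plan is to reduce the theorem to an asymptotic-linearity statement for the numerator $\widehat{\mcee}_{\X}^{\num}(s)$, mirroring the structure of Lemma~\ref{lemma:detailedoutcome}. The denominator $\mathbb{P}_{1,n}[h_A(x)]$ is a simple empirical mean and is automatically asymptotically linear with influence function $h_A(X) - \mathbb{P}_{1}[h_A(X)]$, so once the numerator is shown to admit an asymptotically linear expansion, the Delta method applied to $(a,b)\mapsto a/b$ at the population point $(\mcee_{\X}^{\num}(s),\mathbb{P}_1[h_A(X)])$ gives the required normality of the ratio. The gradient of this map is bounded since the minimum subgroup prevalence $\epsilon>0$ keeps $\mathbb{P}_1[h_A(X)]$ bounded away from zero, and the contiguity condition in Condition~\ref{eq:covariatecondition} ensures the density ratios $\pi_A$ and $\pi_{s,A}$ are well-defined and integrable.

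For the numerator, I would carry out the same von~Mises expansion used in the outcome proof. Viewing $\widehat{\mcee}_{\X}^{\num}(s)$ as a one-step corrected estimator, the bias decomposes into four familiar pieces: a CLT term $(\mathbb{P}_n - \mathbb{P})\psi(\cdot;\eta_{\X,s})$ built from the true influence function, a cross term $(\mathbb{P}_n - \mathbb{P})(\psi(\cdot;\hat\eta_{\X,s}) - \psi(\cdot;\eta_{\X,s}))$ which is $o_p(n^{-1/2})$ by Lemma~1 of \citet{Kennedy2024-kk} thanks to sample splitting and the consistency portion of Condition~\ref{eq:covariatecondition}, a linearization term handled by standard V-statistic arguments as in Lemma~\ref{lemma:detailedoutcome}, and the von~Mises second-order remainder $R(\hat{\mathbb{P}},\mathbb{P})$. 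The CLT term supplies the asymptotic normality; the remaining three are shown to be $o_p(n^{-1/2})$.

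The main obstacle is controlling $R(\hat{\mathbb{P}},\mathbb{P})$. The covariate-shift influence function involves four nuisances $(\CL_0,\CL_{0,s},\pi_A,\pi_{s,A})$, so a direct expansion produces many cross terms. The key algebraic step is to show, by repeated use of iterated expectation and the definitions $\pi_A(x) = p_1(x)\E_0[h_A]/(p_0(x)\E_1[h_A])$ and $\pi_{s,A}(x) = p_1(x_s)\E_0[h_A]/(p_0(x_s)\E_s[h_A])$, that $R(\hat{\mathbb{P}},\mathbb{P})$ collapses into a sum of two product-of-errors integrals of the form $\mathbb{P}_0(\CL_0 - \hat\CL_0)(\hat\pi_A - \pi_A)$ and $\mathbb{P}_0(\CL_{0,s} - \hat\CL_{0,s})(\hat\pi_{s,A} - \pi_{s,A})$, each of which is $o_p(n^{-1/2})$ by the last two parts of Condition~\ref{eq:covariatecondition}. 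This is precisely the doubly-robust structure of the indirect-effect estimator in mediation analysis, which is why the authors can defer to Lemma G.2 of \citet{Singh2024-np}; I expect most of the work to consist of verifying that the four mediation-style nuisance pairs indeed cancel into these two products, after which the Delta step and a standard CLT complete the proof and justify multiplier-bootstrap inference.
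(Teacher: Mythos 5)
Your proposal follows essentially the same route the paper takes: the paper itself omits the derivation, stating that it ``follows straightforwardly from the approach outlined in the previous section'' (the von Mises expansion, sample splitting plus Lemma~1 of \citet{Kennedy2024-kk}, reduction of the second-order remainder to the two product-of-errors terms in Condition~\ref{eq:covariatecondition}, and the Delta method for the ratio), deferring the doubly-robust cancellation to Lemma G.2 of \citet{Singh2024-np}. The only minor inaccuracy is your invocation of V-statistic/H\'ajek-projection arguments: unlike the detailed outcome estimator, $\widehat{\mcee}_{\X}^{\num}(s)$ in~\eqref{eq:debiascovariate} is a plain combination of empirical means over the two samples, so that step is vacuous here.
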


\section{More details on related work}
\label{sec:detailsrelatedwork}

\textbf{Hypothesis tests for distribution shift.} \citet{rabanser2019failing} compares multiple distribution shift detection methods and find that two-sample testing on learned representation works well. \citet{kulinski2020detection} finds features that shift via conditional independence tests using a divergence measure based on score function. However, it requires specifying a density model, a challenging problem in high-dimensions, and does not provide inferential guarantees.
\citet{hindy2024martingales} proposes a conformal prediction method to test for distribution shift over time. They focus on finding shifts in intermediate steps of a multi-step process instead of finding feature subsets.
Much of the shift detection literature does not focus on finding shifts that significantly vary performance. In contrast, \citet{panda2024interpretable} proposes a hypothesis test to find contributing features for a performance change. To determine whether a feature is important, they test whether the effect of adding the feature to the algorithm on its performance is the same across the two distributions. However, the specifics of how they add/remove features, essentially by zero-ing in the features, do not respect feature correlations. They do not show whether the tests have a controlled type-I error rate, which means the test could falsely identify many features as important.

\textbf{Hypothesis tests for unfairness.} One application of SHIFT is to test for disparities in an algorithm's performance (unfairness) by considering the source and target datasets to be data for the two protected groups. Hence, literature on auditing unfairness is relevant. \citet{zahn2023locating} searches through the covariate space to find subgroups with high unfairness. The tests are limited to certain fairness notions such as demographic parity and equalized odds, and make parametric assumptions for inference since they use a $\chi^2$ test.
We note that an extensive literature exists on multi-calibration or multi-accuracy which aims to train models that are accurate for all computationally-identifiable groups \citep{kearns2018gerrymandering,hebert-johnson2018multicalibration,kim2019multiaccuracy}. Apart from similarities in the focus on identifiable subgroups, the objectives of the work are different. Such methods are complementary to our testing approach and can be used if the tests suggest to retrain the model.

\textbf{Challenges in testing when performance decay is zero.}
Prior works \citep{hines2023variable,quintasmartinez2024multiplyrobust,Singh2024-np} on VI for treatment effects propose tests that suffer from undercoverage when the null hypothesis holds (Figure~\ref{fig:boundary}). Therefore, they do not give valid tests for importance of a variable. We observe similar behavior for the comparator \texttt{TE-VIM} in experiments.

\begin{figure}
    \centering
    \includegraphics[width=0.5\linewidth]{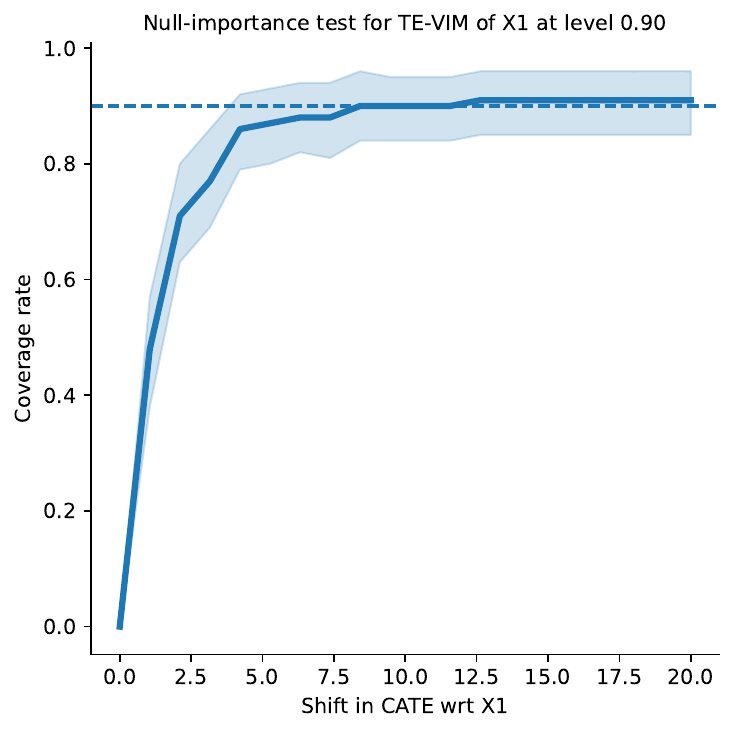}
    \caption{\textbf{Testing VIs at boundary of parameter space.} Existing methods for the related problem of explaining treatment effect heterogeneity \citep{hines2023variable,quinzan2023feature,williamson2021nonparametric} do not give valid tests for importance of a variable. We observe that the coverage rate of confidence intervals for their test statistics (specifically, \texttt{TE-VIM}) severely decreases below the desired rate when the variable importance goes to zero.}
    \label{fig:boundary}
\end{figure}

\section{Options for defining detailed shifts with respect to a subset}
\label{app:alternatives}

We discuss some alternatives to defining the detailed level tests.

\textbf{Detailed outcome shifts.} Another choice for the null hypothesis $H_{0,s}^{\outcome}$ is that 
\begin{align*}
\sup_{h} \E[\CL_1(x) - \CL_0(x) - \CL^*_{s}(x_{s}) | x \in A] = 0
\end{align*}
where $\CL_s(x_s) = \E_1[\CL_1(x) - \CL_0(x) | x_{s}]$.

This test, however, will fail to flag some subsets as important explanations. To illustrate, consider a simple example where the conditional loss function does not change with respect to a subset but still the hypothesis will be rejected. 
\begin{example}
    Assume the conditional loss functions for the domains are $\CL_0(x) = \sigma(x_1 + 0.5x_2)$ and $\CL_1(x) = \sigma(x_1 + x_2)$ for the sigmoid function $\sigma(x) = 1/(1+exp(-x))$. Even though there is no change in loss functions with respect to $x_1$, the null hypothesis does not hold for $x_1$ since the difference of sigmoids is still a function of $x_1$. However, the target loss can be defined as a function of $x_2$ and $\CL_0(x)$ alone, $\CL_1(x) = \sigma(\sigma^{-1}(\CL_0(x)) + 0.5x_2)$.
\end{example}

The unintuitive behavior in the example occurs because we explain the discrepancy between $\CL_1$ and $\CL_0$ by taking their difference as opposed to any other scale.

\textbf{Detailed covariate shifts.}
A natural choice for the null hypothesis is to posit that $\CL_0(x)P_1(x) = \CL_0(x)P_0(x)$ for all $x$.
However, this formulation would attribute performance change to covariate shifts in variables which are irrelevant to the loss function. Consider a simple example to illustrate the issue. 

\begin{example}
    Assume conditional loss and one of the two features are distributed the same in source and target, $Y|X=(x_1,x_2)\sim N(x_1, 1)$, $x_1\sim N(0,1)$ in $P_0$ and $P_1$. For simplicity, consider $\ell=Y$. Second feature has a covariate shift from $x_2\sim N(0,1)$ in $P_0$ to $x_2\sim N(1,1)$ in $P_1$. That is, $P_1(x)\neq P_0(x)$. Since $\CL_0(x)=\CL_0(x_1)$ and $x_1$ does not shift, there is no performance change. Even though $x_2$ is irrelevant to the loss, the shift in $x_2$ will leads us to reject the null and attribute performance change to covariate shift.
\end{example}

As the example suggests, one way to address this undesired behavior is to focus only on shifts in variables important to the conditional loss $\CL_0(x)$ instead of all variables $x$. 

\section{Experiment details}
\label{app:experiment}
Code to reproduce the experiments is at the link \url{http://github.com/jjfeng/shift}. We describe important details of the experiments and the implementation.

\subsection{Simulations}
For simulation setups 2 and 3, we generate $n=8000$ data points in both the source and target domains. We split datasets into equal halves for training and evaluation. For Setup 1, we reduce data points to $n=2000$ for all methods since the kernel-based baselines take a long time. We split the sample in the ratio 20-80\% for SHIFT to ensure more data for evaluation. The ML algorithm is a logistic regression model fitted on a separate sample of $n=10000$ points from the source domain.

\textit{Setup 1a} (Outcome shift in a subgroup): Define a subgroup $A=\{x\in \mathbb{R}^{10} | x\notin [-3.5,3.5]\}$. $m_0=\mu_1=(0,0,0,0,0,0,0,0,0,0), \Sigma_0=\Sigma_1=\text{diag}(2,2,2,2,2,2,2,2,2,2), \phi_0=(0.8,0.5,1,0.1,0.1,0.1,0.1,0.1,0.1,0.1)$. The logit for target domain is $\phi_1=(0.2,0.5,1,0.1,0.1,0.1,0.1,0.1,0.1,0.1)$ when $x\in A$ and $\phi_0(x)$ otherwise. Accuracy overall is similar across source and target (drops by $0.5$\%). Since the baselines test for any difference in conditional outcome distribution, SHIFT similarly tests for $\tau=0$. Given the subgroup $A$ is $7.8$\% of the data, SHIFT tests for minimum prevalence of $\epsilon=0.05$.

\textit{Setup 1b} (Covariate shift in a subgroup): Logits are same as Setup 1a $\phi_1=\phi_0=(0.8,0.5,1,0.1,0.1,0.1,0.1,0.1,0.1,0.1)$. We vary the mean of the first covariate from $1$ to $0$ in the subgroup defined as $A=\{x\in \mathbb{R}^10| x_1\notin [-4,4]\}$. Covariance matrices are the same as Setup 1a. SHIFT tests for tolerance $\tau=0$ and prevalence of $\epsilon=0.05$.

\textit{Setup 2} (Outcome shift only): $m_0=\mu_1=(0,0,0,0), \Sigma_0=\Sigma_1=\text{diag}(2,2,2,2)$, 
$\phi_0(x)=0.8x_1+0.5x_2+x_3+0.6x_4$ and $\phi_1(x)= 0.2x_1+0.4x_2+x_3+0.6x_4$. Therefore, the outcome shifts with respect to both $X_1$ and $X_2$, but the shift in $X_2$ is minimal and below tolerance $\tau$. Accuracy of the ML algorithm by $5.9$\% from $83.8$\% in source domain to $77.9$\% in target domain.
SHIFT tests for $\tau=0.05,\epsilon=0.05$.

\textit{Setup 3} (Covariate shift only): $m_0=(1,0,0,1\
), \Sigma_0=\text{diag}(2,2,2,2)$ and $\mu_1=(0,0,0,0), \Sigma_1=\text{diag}(1,2,2,2)$.
$\phi_0=\phi_1=2.5 x_1+x_2+0.5x_3+0.1x_4$.
Both $X_1$ and $X_4$ shift but $X_4$'s shift is very small and below tolerance $\tau$. Accuracy drops by 5.4\% from $90.9$\% to $85.5$\%.
SHIFT tests for $\tau=0.02,\epsilon=0.05$.

\subsection{Real-world case studies}

\textbf{Background of case studies.} We chose the case studies to mirror the real-world application of the framework. They consist of settings where covariate or outcome shifts impact performance and domain experts do not know which shifts are detrimental. Such settings are highly prevalent in healthcare where ML performance varies widely across hospitals and time.

The first case study is based on a systematic analysis in \citet{Liu2023-gs} that analyzed performance drops of an algorithm for predicting insurance coverage across different US states in the American Community Survey dataset. Among many state pairs, \citet{Liu2023-gs} primarily found a large decay when transfering the algorithm from Nebraska to Louisiana. We decided to dive deeper into this analysis by identifying which subgroups were affected and why. SHIFT detected that people who are unemployed or whose parents are not in the labor force experience a large decay (Fig~\ref{fig:testsinsurance}). Since health insurance coverage is tied to employment in the US, and insurance rates and incomes differ between the states, such a decay is expected.

The readmission case study analyzes an algorithm to predict readmission that is trained on a well-resourced academic hospital and applied to a safety-net hospital. Since safety-net hospitals serve patients regardless of their ability to pay, their populations are quite different. SHIFT detected that patients with many emergency encounters experience a large decay (Fig~\ref{fig:testsreadmission}), which is expected because safety-net hospital patients seek care from emergency departments for very different reasons than at academic hospitals. Thus, SHIFT helps detect subgroups in realistic benchmarks.

\textbf{Health insurance prediction.} We extract datasets for the two states from the 2018 yearly American Community Survey. It is available to download using the \texttt{folktables} package \citep{ding2021retiring}. Source (Nebraska) and target (Louisiana) domains have 3166 and 12000 points respectively, and 34 features. For the covariate shift tests, we filter out features that are uncorrelated with the loss function, resulting in a total of 18 features. Outcome shift tests keep all features. The features are categorized into $5$ broad categories (Table~\ref{tab:featuresacs}). Outcome is whether the person has public health insurance. We train a multi-layer perceptron on separate datasets of 3166 points from source domain. Public health insurance rate (class prevalence) increases drastically from 19.9\% to 40.4\% in Louisiana. SHIFT tests for tolerance $\tau=0.05$ and prevalence $\epsilon=0.05$.

\begin{table}[htbp!]
    \caption{\textbf{Featues for health insurance prediction.} Demo refers to demographics and Misc refers to Miscellaneous features.}
    \label{tab:featuresacs}
    \centering
    \begin{tabular}{p{7cm}p{2cm}p{3cm}}
\toprule
Feature name & Category & SHAP importance for ML algorithm \\
\midrule
Sex & Demo & 0.072 \\
Citizenship status not a citizen & Demo & 0.010 \\
Citizenship status naturalized & Demo & 0.001 \\
Citizenship status born abroad & Demo & 0.001 \\
Citizenship status born in Puerto Rico, Guam & Demo & 0.000 \\
Citizenship status born in US & Demo & 0.003 \\
Race White & Demo & 0.003 \\
Never married or under 15 years & Demo & 0.006 \\
Divorced & Demo & 0.043 \\
Widowed & Demo & 0.030 \\
Married & Demo & 0.062 \\
Separated & Demo & 0.002 \\
Nativity & Demo & 0.021 \\
Ancestry & Demo & 0.023 \\
Age & Demo & 0.018 \\
Cognitive difficulty & Health & 0.016 \\
Vision difficulty & Health & 0.007 \\
Hearing difficulty & Health & 0.001 \\
Disability & Health & 0.074 \\
Gave birth to child in past 12 months & Misc & 0.006 \\
Military service & Health & 0.054 \\
Mobility status & Health & 0.052 \\
Employment status of parents & Health & 0.022 \\
Employment status armed & Employment & 0.000 \\
Employment status unemployed & Employment & 0.009 \\
Employment status partial employed & Employment & 0.001 \\
Employment status employed & Employment & 0.008 \\
Employment status not in labor force & Employment & 0.012 \\
Employment status partial armed & Employment & 0.000 \\
Total person's income & Employment & 0.244 \\
School at least high school or GED & Education & 0.000 \\
School at least bachelor & Education & 0.010 \\
Educational attainment & Education & 0.186 \\
School postgrad & Education & 0.001 \\
\bottomrule
\end{tabular}
\end{table}

\textbf{Readmission prediction.} We access electronic health records from an academic and a safety-net hospital and extract datasets for predicting readmission within 30-day of discharge for patients diagnosed with heart failure. Source (academic hospital) and target (safety-net hospital) domains have 7468 and 6515 points respectively, and 27 features. Covariate shift tests are run after filtering out features uncorrelated with the loss function, resulting in a total of 20 features. The features belong to $4$ broad categories (Table~\ref{tab:featuresreadmission}). We train a random forest on a separate dataset of 22403 points from source domain. Readmissions increase considerably from 12.4\% to 18.5\% in the safety-net hospital. SHIFT tests for tolerance $\tau=0.02$ and prevalence $\epsilon=0.05$.

\begin{table}[htbp!]
    \caption{\textbf{Features for readmission prediction.} Demo refers to demographic variables.}
    \label{tab:featuresreadmission}
    \centering
    \begin{tabular}{p{8cm}p{2cm}p{3cm}}
\toprule
Feature name & Category & SHAP importance for ML algorithm \\
\midrule
Num ED encounters & Encounters & 0.499 \\
Firstrace Decline/Other/Unknown & Demo & 0.005 \\
Sex Female & Demo & 0.004 \\
First race White & Demo & 0.000 \\
First race Native Hawaiian or Other Pacific Islander & Demo & 0.000 \\
Sex Male/Other & Demo & 0.004 \\
First race Black or African American & Demo & 0.007 \\
First race Asian & Demo & 0.003 \\
Age & Demo & 0.026 \\
First race Native American or Alaska Native & Demo & 0.000 \\
Vital Weight/Scale & Vitals & 0.003 \\
Vital Pulse & Vitals & 0.007 \\
Labs Calcium, total, Serum / Plasma & Labs & 0.001 \\
Labs Hemoglobin & Labs & 0.233 \\
Labs Potassium, Serum / Plasma & Labs & 0.032 \\
Labs MCV & Labs & 0.004 \\
Labs Chloride, Serum / Plasma & Labs & 0.003 \\
Labs WBC Count & Labs & 0.015 \\
Labs Lactate, whole blood & Labs & 0.020 \\
Labs Carbon Dioxide, Total & Labs & 0.000 \\
Labs eGFR - low estimate & Labs & 0.021 \\
Labs eGFR - high estimate & Labs & 0.023 \\
Labs Phosphorus, Serum / Plasma & Labs & 0.016 \\
Labs Sodium, Serum / Plasma & Labs & 0.022 \\
Labs Magnesium, Serum / Plasma & Labs & 0.022 \\
Labs Creatinine & Labs & 0.017 \\
Labs Anion Gap & Labs & 0.011 \\
\bottomrule
\end{tabular}
\end{table}

\textbf{Model update experiment.} We demonstrate the actionability of the SHIFT results on health insurance prediction. The detailed outcome shift test in Figure~\ref{fig:testsinsurance} detects a subgroup affected by outcome shifts and identifies demographic variables as a potential explanation. Based on this, SHIFT allows a targeted update to mitigate the decay in the detected subgroup. To address outcome shift, we fine-tune the ML algorithm with respect to demographic variables by fitting a new model that takes original algorithm's predictions and demographic variables to predict the outcome in target data. The updated algorithm applies the new model only for the detected subgroup and defaults to the original model otherwise. Thus, the targeted update addresses the decay in affected subgroup while keeping the algorithm behavior the same for everyone else.

We compare the targeted update to a standard practice in response to distribution shifts that is to retrain the model on all features and use it for everyone. We call it a non-targeted update. Additionally, we update the model on only the employment-related features identified by the TE-VIM method. Table~\ref{tab:acsmodelfix} reports the AUC of the targeted and the two non-targeted updates. For the update, we take 2000 points from target domain held out from earlier tests and fit an MLP model. We evaluate the updates (on subgroups) in another held-out 2440 points from target domain. We observe that the targeted update achieves around the same performance on the detected subgroup as the non-targeted one. However, the non-targeted have the unintended effect of reducing the performance on another subgroup where the original model performed better. We find the other subgroup affected by the non-targeted updates by again applying SHIFT. Hence, the experiment demonstrates that SHIFT can help to adapt ML algorithms to new settings while making minimal changes to them.

\section{Implementation details}
\label{app:implementation}

\textbf{Estimation of nuisance parameters.} We use sample-splitting to estimate all nuisance parameters on 50\% of the source and target domain data and evaluating the test statistics on the remaining data. We use implementations provided in \texttt{scikit-learn} package to fit ML models \citep{pedregosa2011scikit}. For outcome models, we search through a grid of estimators and their hyperparameters through cross-validation, which includes logistic regression, random forest, and gradient boosting classifiers. Density models are chosen from logistic regression and calibrated random forest. We clip the density ratios by restricting predicted probabilities from the density models between $[10^{-3},1-10^{-3}]$ to avoid high variance in our $\mcee$ estimates. We bin the outcome shift $\mu_\binned$ into $B=40$ equally-spaced intervals in $[0,1]$ for detailed outcome shifts in all experiments.

\textbf{Setting testing hyperparameters $\tau$ and $\epsilon$.} The hypotheses require specifying two hyperparameters, namely, minimum subgroup size $\epsilon$ and minimum shift magnitude $\tau$. Both are intuitive and can be set by anyone, but they are domain-specific. The $\tau$ specifies the smallest performance change that is deemed to be meaningful to detect. For safety-critical settings like healthcare, even a 1-2 \% change might be meaningful. Accuracy changes below $\tau$ are part of the null hypotheses and will not be flagged, hence allowing the domain expert to ignore very small changes.

The $\epsilon$ allows us to only consider shifts that affect subgroups of at least size $\epsilon$. If the prevalence of the subgroup experiencing performance change goes below $\epsilon$, the null hypothesis would be true and this tiny subgroup would no longer be of interest. In general, the power of SHIFT decreases as the prevalence of the shifting subgroup decreases. Existing tests also decrease in power but they set the minimum threshold to zero, stating that all shifts are of practical interest and yet have limited power to detect them. We show in experiments (Figures \ref{fig:rejectionrateoutcome} and \ref{fig:rejectionratecovariate}) that SHIFT can detect changes in datasets with as low as $500$ data points ($\epsilon=5\%$ would mean subgroups of 25 data points), but we recommend setting $\epsilon$ to have at least 100 data points to have decent power.

Together, $\tau$ and $\epsilon$ ensure that the tests are practical and reduce alarm fatigue caused by detecting negligible shifts.

\textbf{Computational complexity.} SHIFT runs in under 10 minutes for the real-world datasets with around 10,000 points. The bulk of the computation is dedicated to fitting the nuisance models, so the runtime is $O(V)$ where $V$ is the number of cross-validation folds. Moreover, fitting these nuisance models is easily parallelizable.

\textbf{Visualizing the fitted detectors.} We explain the detectors by fitting decision sets using the MLIC package \citep{ghosh2022efficient}. Decision sets output propositional logic statements on the features, such as (If $X_1>0$ AND $X_2<1$), to classify points into detected and not detected. We fit decision sets to classify detected subgroups in either source or target data, and report the sets for the detected class in Figures~\ref{fig:testsoutcome}, \ref{fig:testscovariate}, \ref{fig:testsinsurance}, and \ref{fig:testsreadmission}.

\textbf{Comparators.} We give details for the kernel-based comparators for the aggregate tests.

Aggregate tests: For outcome shift tests, KCI tests for $D\perp \ell | X$. MMD typically is used to test unconditional independence between two variables. We repurpose MMD for conditional outcome shifts by testing $D\perp (\ell, X)$. Since the joint distribution factorizes into $p(\ell,X,D)=p(\ell|X,D)p(X|D)$ and we know that covariate distribution does not vary in Setup 1, the null hypothesis is equivalent to $D\perp \ell | X$.

For the comparators of the detailed tests, $X_s$ is a good explanation when the null is rejected whereas it is likely to be a good explanation when we fail to reject.

Detailed outcome: Tests are set up such that the subset explains the outcome shift if we fail to reject the null for (a) and (d), and if we reject the null in the case of (b) and (c).

Detailed covariate: Tests are set up such that the subset explains the covariate shift if we fail to reject the null for (c), and if we reject the null in the case of (a) and (b).

\section{Additional simulation results}
\label{sec:power_plots}
We validate type-I and power properties of the tests by repeating them on $N=50$ random draws of the dataset with varying sample size and reporting the rejection rates.

Recall that Setup 2 only has conditional outcome shifts with respect to variables $X_1$ and $X_2$. Aggregate tests reject the outcome shift test with power tending to 1 as sample size increases (Figure~\ref{fig:rejectionrateoutcome}). Rejection rate for agg X-test, which corresponds to an estimate of type-I error, is within $\alpha=0.05$.
Since $X_1$ is the variable with the most shift, the test for subset $X_s=\{X_1\}$ has a low rejection rate. The test for subset $\{X_1,X_2\}$ achieves type-I error control. Alone neither of $X_2$, $X_3$, and $X_4$ explains the shift, thus, they are correctly rejected. The rejection rates tend to 1 even with $n=2000$ samples. Figure~\ref{fig:detailedoutcome} shows that none of the comparators flags the shifting subset.

The advantage of double/debiased inference is evident from the results for Setup 3 (Figure~\ref{fig:rejectionratecovariate}). Recall that covariate distribution shifts only for $X_1$ and $X_4$ in Setup 3. For the aggregate tests, plug-in rejects agg Y-test even when there is no outcome shift. On the other hand, one-step achieves type-I error control. Similarly, we observe that detailed test for subset $\{X_1,X_4\}$ achieves type-I error control. However, the type-I error control comes at the cost of lower power in case of the remaining subsets. Thus, the one-step tests are likely to flag more subsets as candidates than needed to explain the performance shift. It could be desirable to have such conservative tests if we do not want to miss potential ways to fix the performance shift.
Figure~\ref{fig:detailedcovariate} shows that \texttt{KCI} identifies the shifting subset correctly while other comparators testing subsets either in principle (\texttt{KS} is for univariate samples) or in their implementation (\texttt{Score}).

\begin{figure}
    \centering
    Aggregate\\
    \includegraphics[width=0.5\linewidth]{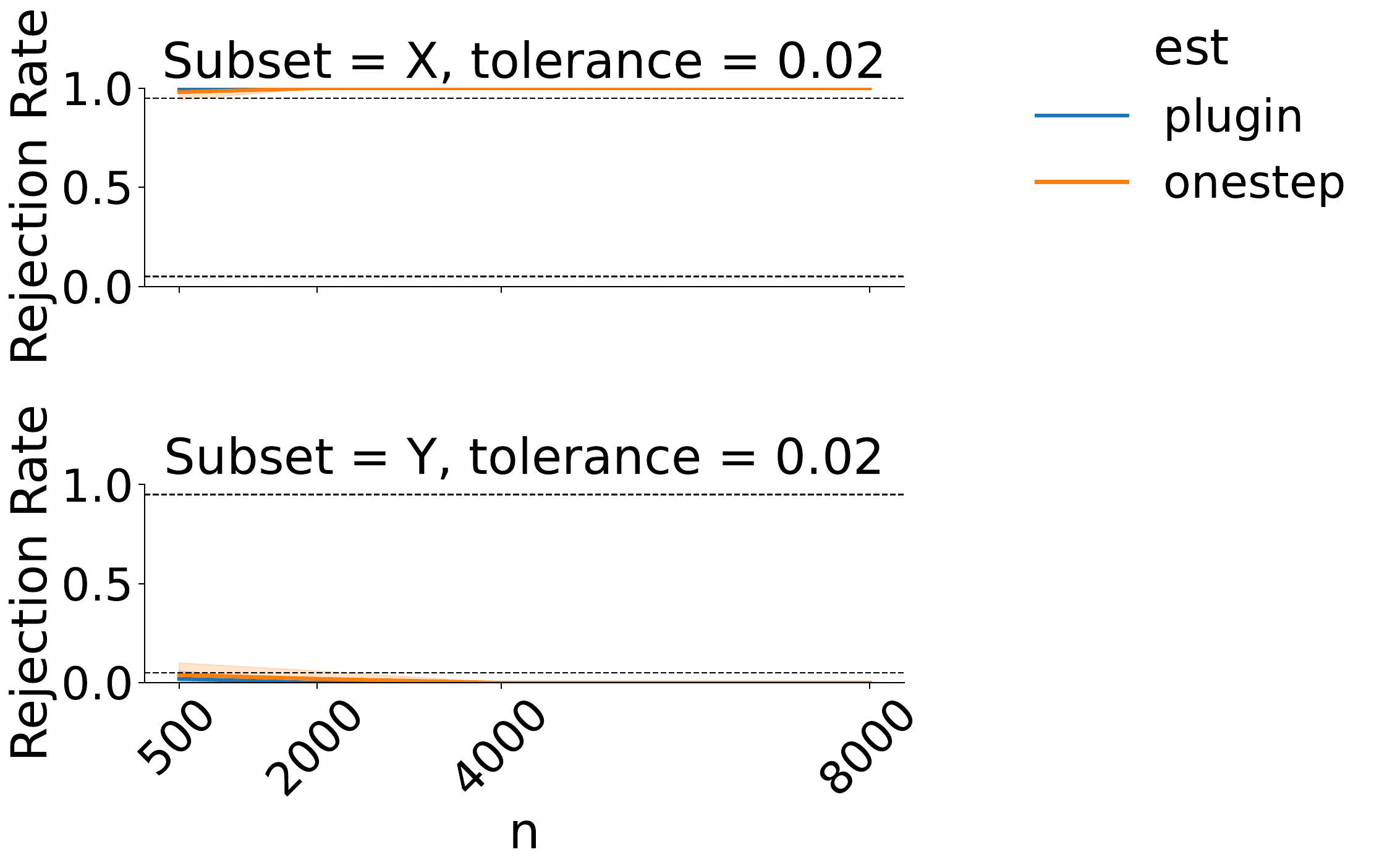}\\
    Detailed\\
    \includegraphics[width=0.5\linewidth]{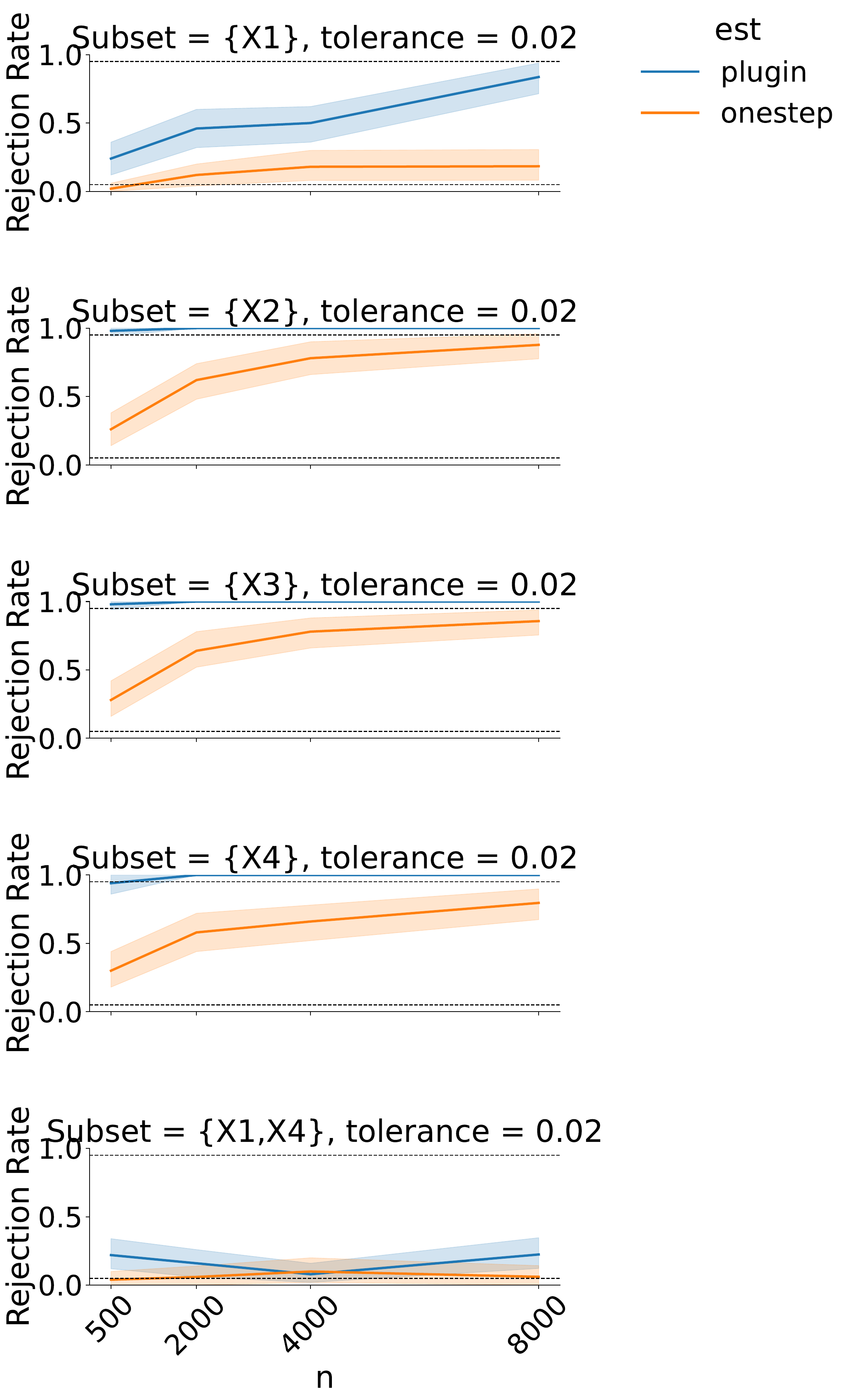}
    \caption{\textbf{Covariate shift setup.} Covariate test, tolerance $\tau=0.0$ and prevalence $\epsilon=0.05$.}
    \label{fig:rejectionratecovariate}
\end{figure}

\begin{figure}
    \centering
    Aggregate\\
    \includegraphics[width=0.5\linewidth]{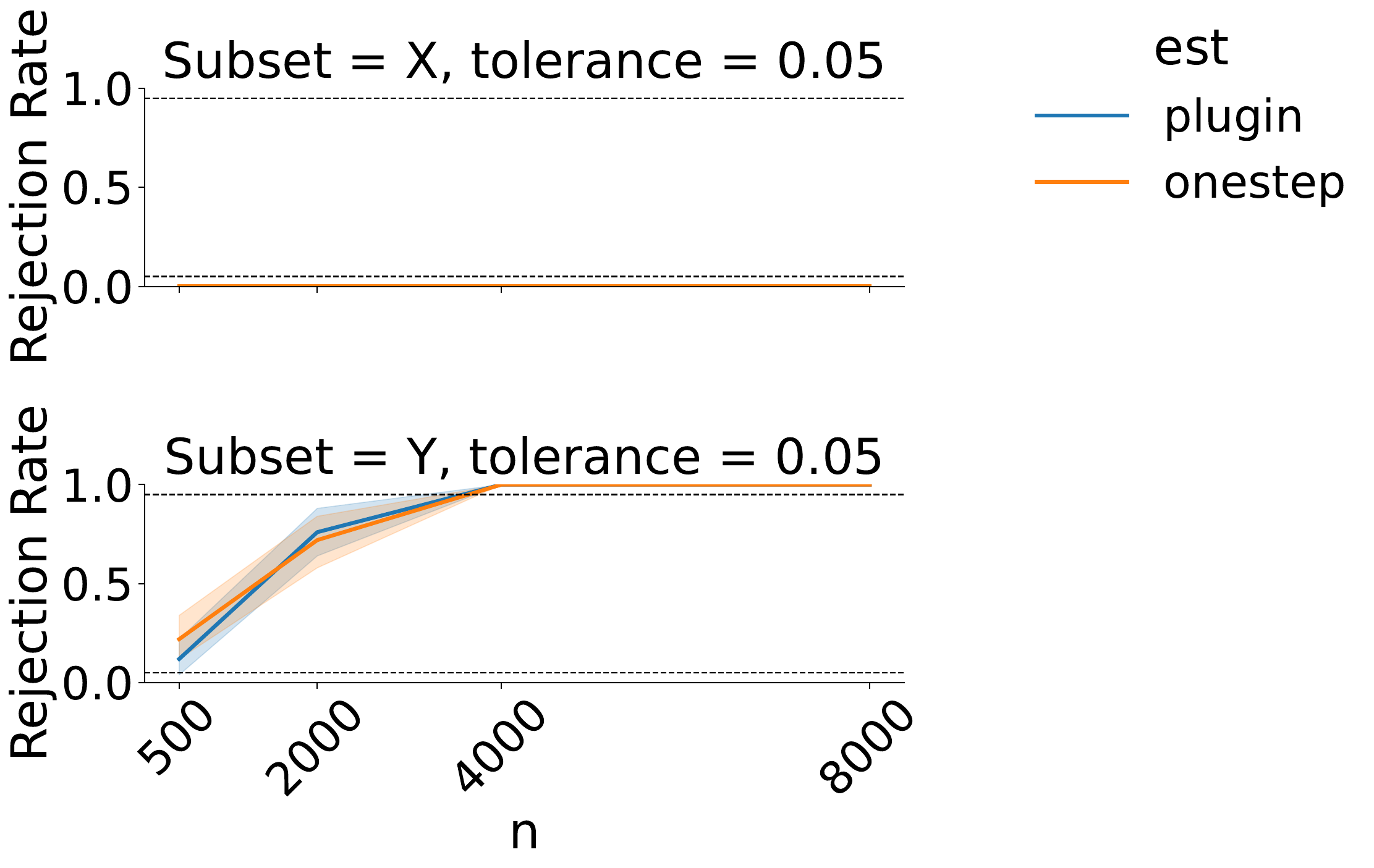}\\
    Detailed\\
    \includegraphics[width=0.5\linewidth]{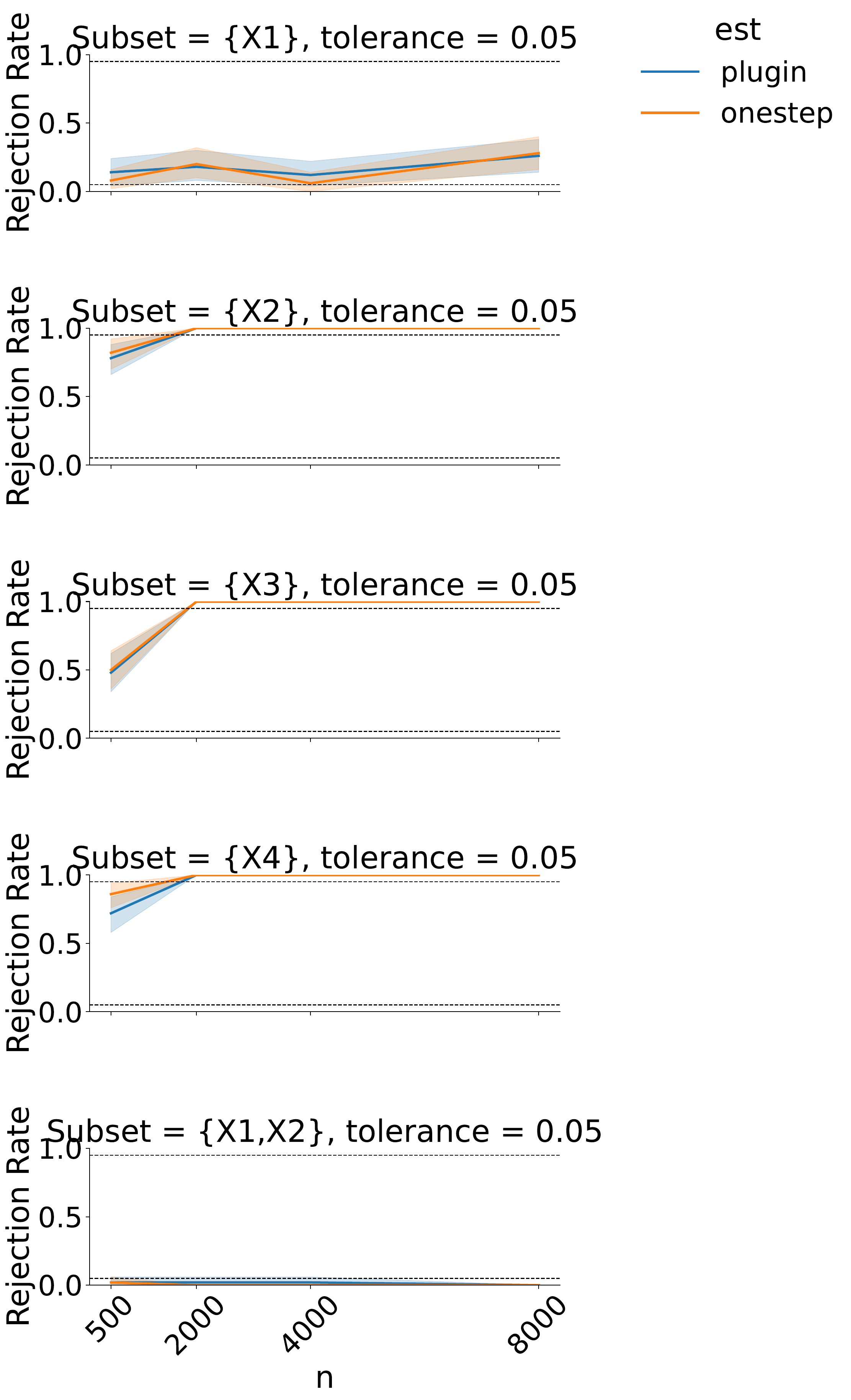}
    \caption{\textbf{Outcome shift setup.} Outcome test, tolerance $\tau=0.05$ and prevalence $\epsilon=0.05$.}
    \label{fig:rejectionrateoutcome}
\end{figure}

\begin{figure}
    \centering
    \begin{subfigure}[t]{0.5\linewidth}
        \includegraphics[width=0.9\linewidth]{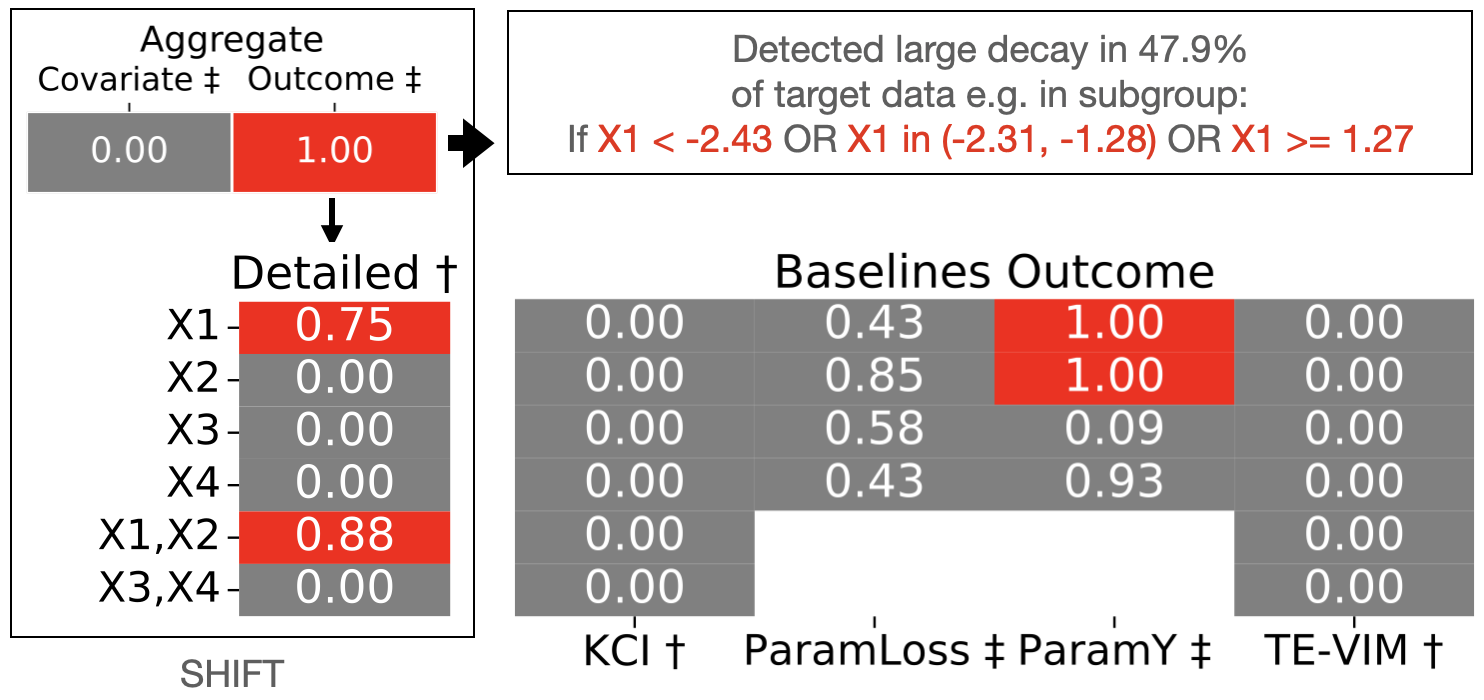}
    \caption{\textbf{Setup 2.} Outcome-only shift}
    \label{fig:detailedoutcome}
    \end{subfigure}\begin{subfigure}[t]{0.5\linewidth}
        \includegraphics[width=0.9\linewidth]{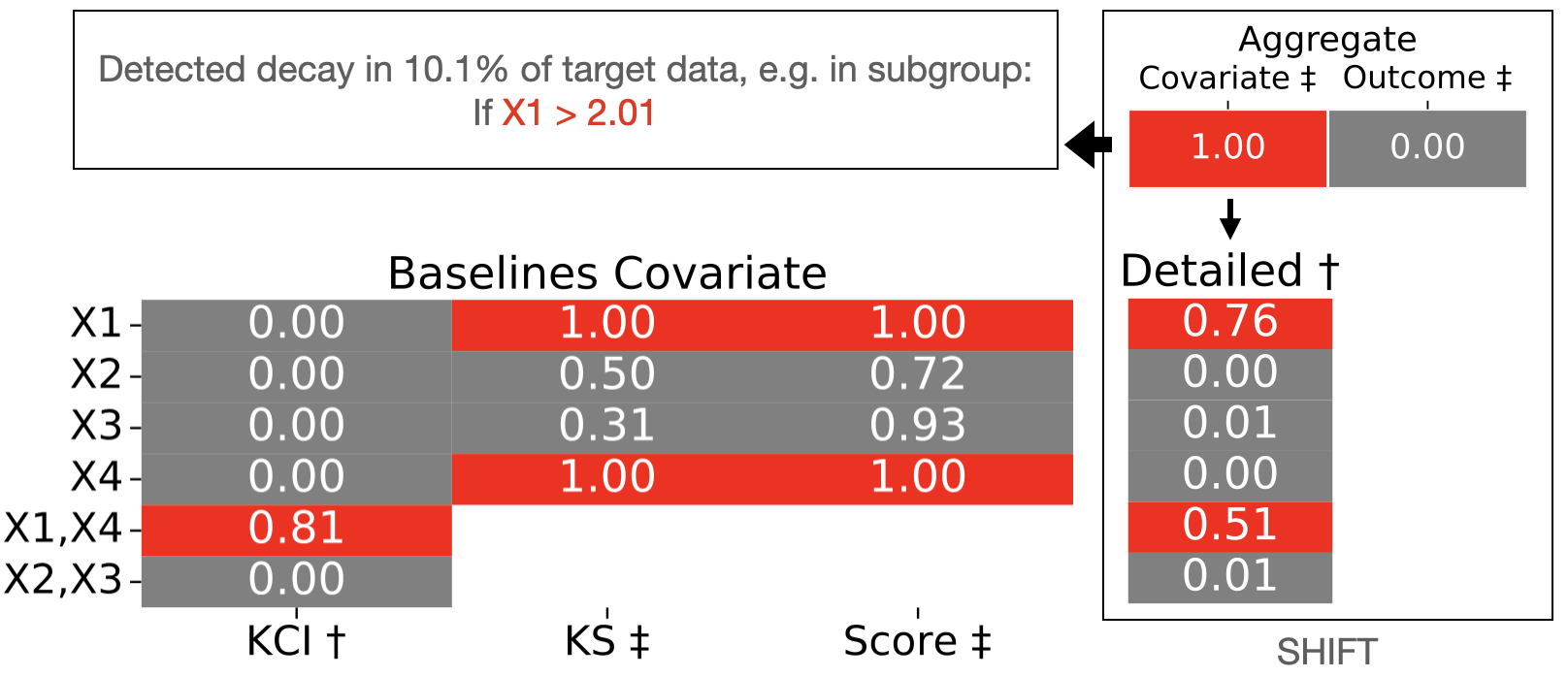}
        \caption{\textbf{Setup 3.} Covariate-only shift}
        \label{fig:detailedcovariate}
    \end{subfigure}
    \caption{Detailed tests for variable subsets in simulation setups.}
    \label{fig:testsdetailed}
\end{figure}

\section{Additional case study}
\label{app:highdimensional}

Although SHIFT is primarily designed for tabular data, its aggregate-level tests are suitable for analyzing unstructured data; its detailed-level tests can also be used, if one has prespecified concepts. As an example, we apply SHIFT to the CivilComments dataset \citep{koh2021wilds}, which contains comments on online articles and are judged to be toxic or not. We consider a DistilBERT-base-uncased model \citep{Sanh2019DistilBERTAD} fine-tuned to classify toxic comments. Given the 768-dimensional embeddings from this BERT model, we can apply SHIFT to understand differences in accuracy when classifying comments that mention the female gender (target domain) versus the remaining (source). Accuracy of the model drops by 1.3\% in the target. Results from SHIFT's aggregate-level test find evidence for covariate shift, i.e. there exists a subgroup of size $\ge 5\%$ that experiences an accuracy drop greater than $5\%$ due to covariate shift (Table~\ref{tab:civilcomments}). 

\begin{table}[htbp!]
    \centering
    \caption{\textbf{Results on a high-dimensional text dataset.} We report p-values from the aggregate tests on CivilComments data which consists of internet comments and their toxicity labels. SHIFT finds evidence for accuracy drop due to covariate shift.}
    \label{tab:civilcomments}
    \begin{tabular}{cc}
    \toprule
        Aggregate test & p-value \\
    \midrule
        Covariate shift & 0.00 \\
        Outcome shift & 0.83 \\
    \bottomrule
    \end{tabular}
\end{table}

To run detailed-level tests in SHIFT, we require variables to be interpretable. Given unstructured data, one solution is to combine SHIFT with concept bottleneck models \citep{koh2020concept}. We note that another solution, if one does not need statistical inference at the detailed level, is to simply analyze differences between the comments from the detected subgroup from SHIFT in the source and target domains. Using a combination of GPT-4o \citep{openai2024gpt4ocard} and manual review, we found that in the subgroup where the toxicity classifier experienced performance decay at the target domain, the comments tended to discuss politics, society, race, and identity more. This shift in topics may explain the performance drop. For instance, the combination of female references with discussions of race and political ideology might compound biases that the classifier has inadvertently learned.

Finally, we note that although the proofs for SHIFT's validity require sufficiently fast estimation rates for the nuisance parameters which tend to slow down as dimensionality increases, estimation rates for nuisance parameters may still be sufficiently fast if these parameters are sparse (e.g. \citep{wager2015adaptive,belloni2011sparse}).

\end{document}